\newtheorem{definition}{Definition}
\newtheorem{assumption}{Assumption}
\newtheorem{theorem}{Theorem}
\newtheorem{lemma}{Lemma}
\newtheorem{corollary}{Corollary}
\newcommand{\cvm}{{V}}
\newcommand{\cvS}{{v}}
\newcommand{\cost}{\cv}
\newcommand{\cv}{{\bf \cvS}}
\newcommand{\vt}{\tilde{\cv}_t}
\newcommand{\rt}{\tilde{r}_t}
\newcommand{\rv}{{\bf r}}
\newcommand{\cvA}{{\bf \cvS}}
\newcommand{\cvE}{{\hat{\cv}}}
\newcommand{\Ex}{{\mathbb E}}
\newcommand{\aregO}{\text{avg-regret}^1}
\newcommand{\aregD}{\text{avg-regret}^2}
\newcommand{\regret}{\text{regret}}
\newcommand{\linCBwK}{\text{linCBwK}\enspace}
\newcommand{\linCBwC}{\text{linCBwC}}
\newcommand{\linCbudget}{\text{linCBwK}}
\newcommand{\OCO}{\text{OCO}\enspace}
\newcommand{\thetaV}{{\boldsymbol{\theta}}}
\newcommand{\di}{d}
\newcommand{\comment}[1]{}
\newcommand{\ShipraNIPS}[1]{\textcolor{red}{#1}}
\newcommand{\oneNorm}{\|{\bf{1}}_\di\|}
\newcommand{\cx}{{X}}
\newcommand{\cxL}{{\bf x}}
\newcommand{\Mean}{W}
\newcommand{\Meanstar}{\Mean_*}
\newcommand{\MeanEst}{\hat{\Mean}}
\newcommand{\MeanUCB}{\tilde{\Mean}}
\newcommand{\MeanVector}{{\boldsymbol{\mu}}}
\newcommand{\prob}{\boldsymbol{p}}
\newcommand{\OPT}{\text{OPT}}
\newcommand{\OPTest}{\hat{\text{OPT}}}
\newcommand{\x}{\bs{x}}
\newcommand{\y}{\bs{y}}	
\newcommand{\z}{\bs{z}}
\newcommand{\regOCO}{{\cal R}}
\newcommand{\reglinCBwK}{{\cal R'}}
\newcommand{\allcontexts}{\mathcal{X}}
\newcommand{\cvavg}[1]{\bar{\cv}_{1:#1}}
\newcommand{\vsum}{\tilde{\cv}_{\text{sum}}}
\newcommand{\rsum}{\tilde{r}_{\text{sum}}}
\newcommand{\ones}{{\bf 1}}
\newcommand{\Real}{\mathbb{R}}
\newcommand{\OMD}{OMD\enspace}
\newcommand{\bs}[1]{\boldsymbol{#1}}
\newcommand{\domainTheta}{\Omega}
\newcommand{\domain}{[0,1]}
\newcommand{\gammaValue}{2  m \sqrt{T_0 {\ln(T_0) \ln(T_0d/\delta)}}}
\newcommand{\shipra}[1]{[\textcolor{red}{SA}: #1]}
\newcommand{\nikhil}[1]{[\textcolor{red}{ND}: #1]}
\newcommand{\OPTinterim}{\overline{\text{OPT}}}
\newcommand{\rmu}{\boldsymbol{\mu}}
\newcommand{\SSPi}{{\cal C}_0(\Pi)}
\newcommand{\1}[1]{\mathbb{I}\left\{#1\right\}}
\newcommand{\eqnref}[1]{Equation~\eqref{#1}}
\newcommand{\opthat}{{\hat{\text{OPT}}}}
\newcommand{\opthatgammat}{\opthat^\gamma_t}
\newcommand{\rhat}{\hat{r}}
\newcommand{\opt}{\OPT}
\newcommand{\ropt}{r^*}
\newcommand{\vopt}{\cvE^*}
\title{Linear Contextual Bandits with Knapsacks}
\author{Shipra Agrawal\thanks{Columbia University. \tt{sa3305@columbia.edu}.}
	\and Nikhil R. Devanur\thanks{Microsoft Research. \tt{nikdev@microsoft.com}.}
}
\begin{document}
\date{}
\maketitle
\begin{abstract}
We consider the linear contextual bandit problem with resource consumption, in addition to reward generation. In each round, the outcome of pulling an arm is a reward as well as a vector of resource consumptions. The expected values of these outcomes depend linearly on the context of that arm. The budget/capacity constraints require that the total consumption doesn't exceed the budget for each resource. The objective is once again to maximize the total reward. This problem turns out to be a common generalization of classic linear contextual bandits  (linContextual) \cite{Auer2003, chu2011, oful},  bandits with knapsacks (BwK) \cite{AD14,BwK}, and the online stochastic packing problem (OSPP)  \cite{AD15,Devanur2011}. We present algorithms with near-optimal regret bounds for this problem. 
Our bounds compare favorably to results on the unstructured version of the problem \cite{ADL16,Badanidiyuru14Resourceful} 
where the relation between the contexts and the outcomes could be arbitrary, but the algorithm only competes against a fixed set of policies accessible through  an optimization oracle. 
%Even though linear contextual bandits is a special case of contextual bandits with general policy space, we show that when global constraints are introduced the policy space for linear bandits becomes doubly exponential in its dimensions, so that the techniques and bounds for the general contextual bandits do not apply. 
We combine techniques from the work on linContextual, BwK and OSPP in a nontrivial manner while also tackling new difficulties 
that are not present in any of these special cases.
\end{abstract}

%\thispagestyle{empty}
%\newpage
%\setcounter{page}{1}

\section{Introduction}
\label{sec:intro}
%Contextual bandits is a very general framework for sequential decision making under uncertainty. 
In the contextual bandit problem \cite{Auer2003, Beygelzimer2011, Monster,MiniMonster}, the decision maker observes a sequence of contexts (or features). In every round she needs to pull one out of $K$ arms, after observing the context for that round. The outcome of pulling an arm may be used along with the contexts to decide future arms. Contextual bandit problems have found many useful applications such as online recommendation systems, online advertising, and clinical trials, where the decision in every round needs to be customized to the features of the user being served. 
The {\em linear contextual bandit} problem \cite{oful, Auer2003, chu2011} is a special case of the contextual bandit problem, where the outcome is linear in the feature vector encoding the context. As pointed by \cite{MiniMonster}, contextual bandit problems represent a natural half-way point between supervised learning and reinforcement learning: the use of features to encode contexts and  the models for the relation between these feature vectors and the outcome are often inherited from supervised learning, while managing the exploration-exploitation tradeoff is necessary to ensure good performance in reinforcement learning.  The linear contextual bandit problem can thus be thought of as a midway between the linear regression model of supervised learning, and reinforcement learning.

Recently, there has been a significant interest in introducing multiple ``global constraints" in the standard bandit setting \cite{BwK, AD14,Badanidiyuru14Resourceful,ADL16}. 
%One of the major limitations of the standard bandit setting is the lack of ``global'' constraints that 
Such constraints are crucial for many important real-world applications.  For example, in clinical trials, the treatment plans may be constrained by the total availability of medical facilities, drugs and other resources.  In online advertising, there are budget constraints that restrict the number of times an ad is shown. 
Other applications include dynamic pricing, dynamic procurement, crowdsourcing, etc.; 
see \cite{BwK, AD14} for many such examples. 
%In recommendation systems, the recommendations made to individual users need to be served from a limited common inventory of goods.
%For example, actions taken by a robot arm may have different levels of power consumption, and the total power consumed by the arm is limited by the capacity of its battery.  In online advertising, each advertiser has her own budget, so that her advertisement cannot be shown more than a certain number of times. In dynamic pricing, there are a certain number of objects for sale and the seller offers prices to a sequence of buyers with the goal of maximizing revenue, but the number of sales is limited by the supply. 

In this paper, we consider {\bf linear contextual bandit with knapsacks} (henceforth, \linCbudget) problem. In this problem, the context vectors are generated i.i.d. in every round from some unknown distribution, and on picking an arm, a reward and {\it a consumption vector} is observed, which depend linearly on the context vector. The aim of the decision maker is to maximize a total reward while ensuring the the total consumption of every resource remains withing a given budget. Below, we give a more precise definition of this problem. We use the following  notational convention throughout: vectors are denoted by bold face lower case letters, while matrices are denoted by regular face upper case letters. Other quantities such as sets, scalars, etc. may be of either case, but never bold faced. All vectors are column vectors, i.e., a vector in $n$ dimensions is treated as an $n\times 1$ matrix.  The transpose of matrix $A$ is $A^\top$. 
%We use the terms ``arm" and ``action" interchangeably. 

\begin{definition}[\linCbudget] \label{def:lincbwcr}
	There are $K$ ``arms", which we identify with the set $[K]$. 
	The algorithm is initially given as input a budget $B \in \mathbb{R}_+$.
	%convex set $S\subseteq \domain^\di$, and an $L$-Lipschitz concave function $f$ with  domain $\domain^\di$. We then 
	%We proceed in rounds $t=1,2, \ldots,T,$ where in every round $t$, a tuple of context matrix, reward vector and consumption matrix $({\cx}_t, \rv_t, \cvm_t) \in \domain^{m\times K} \times \domain^{K} \times \domain^{\di\times K}$ are  drawn from an unknown distribution ${\cal D}$, independent of everything in previous rounds. The column of $\cx_t$ corresponding to arm $a\in [K]$ is called the context vector for arm $a$ and is denoted by $\cxL_t(a) \in \domain^m$. The column of $\cvm_t$ corresponding to an arm $a$ is called the ``outcome'' vector for arm $a$  and is denoted by $\cv_t(a)\in \domain^\di$. There exist an (unknown) weight matrix $\Meanstar \in \domain^{m \times \di}$ such that 
%$$\Ex[\cvm_t | \cx_t] =\Meanstar^\top  \cx_t.$$
 In every round $t$, the algorithm first observes context ${\cxL}_t(a) \in \domain^{m}$ for every arm $a$, and then chooses an arm $a_t \in [K]$, and finally observes a reward $r_t(a_t) \in [0,1]$ and a $d$-dimensional consumption vector $\cv_t(a_t) \in [0,1]^d$. The algorithm has a ``no-op" option, which is to pick none of the arms and get $0$ reward and ${\bf 0}$ consumption. The goal of the algorithm is to pick arms such that the total reward $\sum_{t=1}^T r_t(a_t)$ is maximized, while ensuring that the total consumption does not exceed budget, i.e., $\sum_t \cv_t(a_t) \le B\ones$. 

We make the following stochastic assumption for context, reward, consumption vectors. In every round $t$, the tuple $\{x_t(a), r_t(a), \cv_t(a)\}_{a=1}^K$  is generated from an unknown distribution ${\cal D}$, independent of everything in previous rounds. Also, there exists an unknown vector $\mu_* \in \domain^{m}$ and matrix $W_*\in \domain^{m \times \di}$ such that for every arm $a$, given contexts $x_t(a)$, and history $H_{t-1}$ before time $t$, 
\begin{equation}
\label{eq:linearAssumption}
\Ex[r_t(a)|x_t(a), H_{t-1}] = \mu_*^\top x_t(a), \ \ \  \Ex[\cv_t(a)|x_t(a), H_{t-1}] = W_*^\top x_t(a).
\end{equation}
%average vector $\frac{1}{T} \sum_{t=1}^T \cv_t(a_t)$ maximizes $f(\frac{1}{T} \sum_{t=1}^T \cv_t(a_t))$ and is inside $S$. 
For succinctness, we will denote the tuple of contexts for $K$ arms at time $t$ as matrix $X_t \in \domain^{m \times K}$, with $\cxL_t(a)$ being the $a^{th}$ column of this matrix. Similarly, rewards are represented as vector $\rv_t \in \domain^{K}$, and consumption vectors are represented as matrix $V_t \in \domain^{\di\times K}$.
\end{definition}

As we discuss later in the text, the assumption in equation \eqref{eq:linearAssumption} forms the primary distinction between our linear contextual bandit setting and the general contextual bandit setting considered in \cite{ADL16}. Exploiting this linearity assumption will allow us to generate regret bounds which do not depend on number of arms $K$, rendering it to be especially useful when number of arms is large. Some examples include recommendation systems with large number of products (e.g., retail products, travel packages, ad creatives, sponsored facebook posts). Another advantage over using 
general contextual bandit setting of \cite{ADL16} is that we don't need an oracle access to a certain optimization problem, which is required to solve an NP-Hard problem in this case. (See Section \ref{sec:mainresults} for a more detailed discusssion.)

We compare the performance of an algorithm to that of the optimal adaptive policy that knows the distribution ${\cal D}$ and the parameters $(\mu_*, \Mean_*)$, and can take into account the history upto that point as well as the current context to decide (possibly with randomization) which arm to pull at time $t$. 
%We compare the performance of an algorithm to that of an optimal adaptive policy that  takes into account  the distribution ${\cal D}$, the history upto that point and the current context to decide (possibly with randomization) which arm to pull. 
However, it is easier to work with an upper bound on this, which is the optimal expected reward of a static policy that is required to satisfy the constraints only in expectation. 
This technique has been used in several related problems and is  standard by now \cite{Devanur2011,BwK}.

\begin{definition}[Optimal Static Policy]
Consider any policy that is context dependent but non-adaptive: for a policy $\pi$, let $\pi(\cx)\in \Delta^{K+1}$ (the unit simplex) denote the probability distribution over arms played (plus no-op) when the context is $\cx \in \allcontexts$. Define $\rv(\pi)$ and $\cvA(\pi)$ to be the expected reward and consumption vector of policy $\pi$, respectively, i.e.
\begin{eqnarray}
\rv(\pi) & :=& \Ex_{(\cx,\rv, \cvm) \sim {\cal D}}[ \rv \pi(\cx) ] = \Ex_{\cx \sim {\cal D}}[ \mu_*^\top \cx \pi(\cx)].\\
\cvA(\pi) & :=& \Ex_{(\cx,\rv, \cvm) \sim {\cal D}}[ \cvm \pi(\cx) ] = \Ex_{\cx \sim {\cal D}}[ \Meanstar^\top \cx \pi(\cx)].
\end{eqnarray}
\begin{eqnarray}
\text{Let\ \ \ }\pi^* & := & \begin{array}{lll}
 \arg\max_{\pi} & T\ \rv(\pi) ~~\text{such that} & T\ \cvA(\pi) \le B\ones
\end{array} \label{eq:optProb}
\end{eqnarray}
be the optimal static policy. Note that since no-op is allowed, a feasible policy always exists.
%We assume that there exists a feasible policy, i.e. $\cvA(\pi^*) \in S$,
%Let $\cx^*_j=\Ex[\sum_a \cx_{j}(a) q^*(\cx, a)]$. 
We denote the value of this optimal static policy by 
$\OPT:=T\ \rv(\pi^*).$
\end{definition}

%\ShipraNIPS{Add a theorem stating this is an upper bound on optimal adaptive policy's value}
Following lemma proves that $\OPT$ upper bounds the value of optimal {\it adaptive} policy. The proof is in Appendix \ref{app:benchmark}. 
\begin{lemma}\label{lem:staticOPT}
Let $\overline{\OPT}$ denote the value of optimal adaptive policy that knows the distribution ${\cal D}$ and parameters $\MeanVector_*, \Mean_*$,  We show that there exists a static policy $\pi^*$ such that $T \rv(\pi^*) \ge \overline{\OPT}$, and $T \cvA(\pi^*)\le  B$.
\end{lemma}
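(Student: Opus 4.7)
My plan is to construct a feasible static policy $\tilde\pi$ directly from the optimal adaptive policy by averaging its per-round conditional action distribution, and then use the i.i.d.\ structure of $(X_t,\rv_t,V_t)\sim\mathcal{D}$ together with the linearity assumption \eqref{eq:linearAssumption} to equate the adaptive policy's expected totals with the per-round expectations of $\tilde\pi$.

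Let $\pi^{\mathrm{ad}}$ denote an optimal adaptive policy, so that $\overline{\OPT}=\Ex[\sum_t r_t(a_t)]$ under $\pi^{\mathrm{ad}}$ and $\sum_t \cv_t(a_t)\le B\ones$ almost surely. For each round $t$, each context matrix $X\in\domain^{m\times K}$, and each action $a$ (including no-op), set
\[ P_t(X,a) \defeq \Pr[a_t = a \mid X_t = X], \]
where the probability averages over the history $H_{t-1}$ and any internal randomization of $\pi^{\mathrm{ad}}$. Define the static policy $\tilde\pi$ by $\tilde\pi(X)_a \defeq \tfrac{1}{T}\sum_{t=1}^T P_t(X,a)$, which is a valid distribution in $\Delta^{K+1}$ for every $X$.

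I would then compute $T\,\rv(\tilde\pi)$ and $T\,\cvA(\tilde\pi)$. Because $(X_t,\rv_t,V_t)$ is independent of $H_{t-1}$ and $a_t$ is chosen before $(\rv_t,V_t)$ is revealed, the linearity assumption gives
\[ \Ex[r_t(a_t) \mid X_t, H_{t-1}] = \mu_*^\top X_t\, p_t(\cdot \mid X_t, H_{t-1}), \]
where $p_t(a\mid X_t,H_{t-1})$ is the conditional action distribution of $\pi^{\mathrm{ad}}$. Marginalizing $H_{t-1}$ (whose conditional law given $X_t$ coincides with its unconditional law by independence) replaces $p_t$ by $P_t$, and marginalizing further over $X_t\sim\mathcal{D}$ yields $\Ex[r_t(a_t)] = \Ex_{X\sim\mathcal{D}}[\mu_*^\top X\,P_t(X,\cdot)]$. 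Summing over $t$ and using the definition of $\tilde\pi$ gives $\overline{\OPT}=\sum_t\Ex[r_t(a_t)] = T\,\rv(\tilde\pi)$. The identical computation with $\Mean_*$ in place of $\mu_*$ yields $T\,\cvA(\tilde\pi)=\Ex[\sum_t\cv_t(a_t)]$; the almost-sure budget feasibility of $\pi^{\mathrm{ad}}$ then bounds this expectation coordinate-wise by $B\ones$.

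Therefore $\tilde\pi$ is feasible for \eqref{eq:optProb} with objective value $\overline{\OPT}$, so $\OPT = T\,\rv(\pi^*) \ge T\,\rv(\tilde\pi) = \overline{\OPT}$, which is the desired conclusion. The only non-routine step is passing from the random, history-dependent action distribution of $\pi^{\mathrm{ad}}$ to the deterministic, context-only distribution $P_t(X,\cdot)$; this uses precisely the i.i.d.\ assumption on $(X_t,\rv_t,V_t)$ and the fact that under \eqref{eq:linearAssumption} the reward and consumption enter only through their conditional means, so averaging the adaptive policy across rounds and histories produces a static policy whose per-round expectations match the adaptive totals divided by $T$.
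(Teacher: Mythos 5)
Your proposal is correct and follows essentially the same route as the paper: both construct the static policy by averaging, over rounds, the adaptive policy's conditional action distribution given the context (marginalizing out the history), and then use the i.i.d.\ structure plus the linearity assumption to identify $T\,\rv(\tilde\pi)$ with $\overline{\OPT}$ and bound $T\,\cvA(\tilde\pi)$ by $B\ones$ via feasibility of the adaptive policy. Your write-up is in fact a bit more explicit than the paper's about why independence of $X_t$ from $H_{t-1}$ licenses replacing the history-dependent distribution by its context-conditional average, but the argument is the same.
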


%$\prob^*$. Let $\OPT = f(\sum_a \Ex[\cv(a)]\prob^*(a))$. Next, we observe that $\OPT$ is an upper bound on expected  objective value obtainable by any feasible online policy. 

\comment{
\begin{lemma}
For a given policy $\pi$, denote its probability to play arm $a$ at time as $\prob_t(a) = \pi(\cx)$. Suppose that the policy ensures that the decisions are feasible in expectation, i.e.,
$\Ex[\frac{1}{t} \sum_t \sum_a \cv_t(a) p_t(a)] \in S$. Then,
we have that $\OPT \ge \Ex[f(\frac{1}{t} \sum_t \sum_a \cv_t(a) p_t(a))]$. 
\end{lemma}
\begin{proof}
Compute $q(\cx, a)$ as the probability of playing arm $a$ given context  $\cx$, i.e., $q(\cx, a)=\Ex[p_t(a) | \cx_t=\cx]$. Then,
\begin{eqnarray*}
\Ex[ \frac{1}{t} \sum_t \sum_a \cv_t(a) p_t(a)] & = & \sum_a \Ex[ \frac{1}{t} \sum_t\cv_t(a) p_t(a)]] \\
& = & \sum_a \Ex[\cv(a)] \Ex[\frac{1}{t} \sum _t p_t(a)]\\
& = & \sum_a \Ex[\cv(a)] \hat{p}(a)\\
\end{eqnarray*}
where $\hat{p}(a) := \Ex[\frac{1}{t} \sum _t p_t(a)]$. And,  since $\Ex[\frac{1}{t} \sum_t \sum_a \cv_t(a) p_t(a)])\in S$, it means $\hat{\prob}$ is a feasible solution to the optimization problem \eqref{eq:optProb}. Now, since $\prob^*$ is the optimal solution to that problem,  we have that
\begin{eqnarray*}
\Ex[f(\frac{1}{t} \sum_t \cv_t(a_t) \prob_t(a_t))] & \le & f(\Ex[\frac{1}{t} \sum_t \cv_t(a_t) \prob_t(a_t)]) \\
& = & f(\sum_a \Ex[\cv(a)] \hat{\prob}(a))\\
& \le & f(\sum_a \Ex[\cv(a)] \prob^*(a))
\end{eqnarray*}
\end{proof}
}

\begin{definition}[Regret] 
%We define two regrets: regret in objective and regret in constraints. 
Let $a_t$ be the arm played at time $t$ by the algorithm.  %For convenience of notation let us define $\rvavg{T}:=\frac{1}{T} \sum_{t=1}^T \rv_t(a_t)$.
Then, regret is defined as 
$$\regret(T) := \OPT-\sum_{t=1}^T \rv_t(a_t)$$
%\begin{itemize}
%\item Regret in objective. $ \aregO(T) = \OPT - f(\cvavg{T}).$
%\item Regret in constraints is defined as the distance of the average of played vectors from $S$; we let $d(\cdot, \cdot)$ denote a distance function. 
%$\aregD(T) := d(\cvavg{T}, S).$
%\end{itemize}
\end{definition}
%--------------------------------------------------------------------------------
\comment{
We consider following useful special cases.

\paragraph{Feasibility problem (\linCBwC):} 
In this special case of \linCBwK, there is no objective function $f$, and the aim of the algorithm is to have the average outcome vector $\cvavg{T}$ be in the set $S$. The performance of the algorithm is measured by the distance of $\cvavg{T}$ to $S$, i.e., by $\aregD(T)$.  We will first illustrate our algorithm and proof techniques for this special yet nontrivial case.

\comment{\paragraph{Linear objective:} 
In this special case, the objective has a simpler linear form. Here, the reward vector observed at time $t$ is composed of two parts: a scalar reward $r_t(a_t) \in \domain$ and $\di-1$-dimensional vector $\cv_t(a_t) \in [0,1]^{\di-1}$. The objective is to maximize $\frac{1}{T}\sum_{t=1}^T r_t(a_t)$ while ensuring that $\frac{1}{T} \sum_t \cv_t(a_t) \in S$.
For this problem, we need to bound both $\aregO(T)$ and $\aregD(T)$.
%This can be thought of as the special case where the vector you observe on taking action $a$ is $(\cv)$, and the 
%constraint is only on the subspace defined by all coordinates of this vector except the last,
%while the objective is just the sum (or linear function) of its last coordinates.
} 

\paragraph{Budget constraints (\linCbudget):} 
This is a well studied special case with linear objectives and constraints; a generalization of the Bandits with Knapsacks (BwK) problem of \cite{BwK}. Here, the outcome vector can be broken down into two components: a scalar reward $r_t(a_t) \in [0,1]$ and a $\di-1$-dimensional consumption vector $\cost_t(a_t) \in [0,1]^{\di-1}$. The objective is to maximize $\sum_{t=1}^T r_t(a_t)$ while ensuring that $\sum_{t=1}^T \cost_t(a_t) \le B \ones$,
where $\ones$ is the vector of all 1s and $B > 0 $ is some scalar. Such budget constraints are equivalent to having the constraint set $S$ be equal to $\{\cost: 0 \leq \cost \leq \tfrac{B}{T} \ones \}$.
Also, it is possible to simply stop once the budget is consumed; equivalently we assume that the algorithm always has the ``do nothing'' option of getting $0$ reward and ${\bf 0}$ cost. For this problem, we provide an algorithm that always satisfies the budget constraints, i.e., $\aregD(T)=0$, while bounding $\aregO(T)$. 
}
%----------------------------------------------------------------------
\subsection{Main results }\label{sec:mainresults}
Our main result is an algorithm with near-optimal regret bound for \linCBwK.  
\begin{theorem}
\label{th:general}  There is an algorithm for \linCBwK such that 
if $B>m T^{3/4}$, then with probability at least $1-\delta,$   
\[ \regret(T) = O\left( (\tfrac \OPT B + 1){m} \sqrt{{\ln(\di T/\delta) \ln(T)}{T}} \right).\]
\end{theorem}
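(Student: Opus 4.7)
The plan is to run a primal-dual scheme that combines LinUCB-style optimism with online updates of a dual variable $\thetaV_t$ corresponding to the budget constraint. More precisely, I would maintain regularized least-squares estimators $\hat\mu_t, \hat W_t$ built from the observed contexts and realized rewards / consumption vectors, together with the standard self-normalized confidence ellipsoids guaranteeing that $\mu_*$ and each column of $W_*$ lie in them for all $t$ simultaneously with probability at least $1-\delta$. At round $t$, given contexts $X_t$ and a current dual $\thetaV_t$ in some compact set $\domainTheta \subseteq \mathbb R_+^d$, I pick
\[
a_t \;=\; \arg\max_{a\in[K]\cup\{\text{no-op}\}}\;\bigl(\hat\mu_t^\top \cxL_t(a) + w^r_t(a)\bigr) \;-\; \thetaV_t^\top \bigl(\hat W_t^\top \cxL_t(a) - w^v_t(a)\bigr),
\]
where $w^r_t(a), w^v_t(a)$ are the appropriate confidence widths (UCB on reward, LCB on consumption, jointly forming optimism about the Lagrangian $r-\thetaV^\top \cvA$). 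The dual is updated by online mirror descent on loss $\thetaV \mapsto \thetaV^\top\bigl(\tfrac{B}{T}\ones - \cv_t(a_t)\bigr)$.

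The key technical steps are as follows. (i) \emph{Optimism/feasibility of $\pi^*$:} show that on the good event, the joint confidence set contains $(\mu_*,W_*)$, so that the optimistic value obtained by our choice of $a_t$ is at least the value the optimal static policy $\pi^*$ would achieve under the estimated parameters, namely $\rv(\pi^*)-\thetaV_t^\top \cvA(\pi^*)$. Because $\pi^*$ satisfies $T\cvA(\pi^*)\le B\ones$, this term is $\ge \tfrac{\OPT}{T}-\thetaV_t^\top\tfrac{B}{T}\ones$. (ii) \emph{Confidence-width bound:} the sum $\sum_{t=1}^T (w^r_t(a_t) + \|\thetaV_t\|_1 w^v_t(a_t))$ is bounded using the elliptical potential lemma by $O\bigl((1+\max_t\|\thetaV_t\|_1)\,m\sqrt{T\ln(dT/\delta)\ln T}\bigr)$. (iii) \emph{Dual regret:} online mirror descent (or OGD) over $\domainTheta$ guarantees $\sum_t \thetaV_t^\top(\tfrac{B}{T}\ones-\cv_t(a_t)) - \min_{\thetaV\in\domainTheta}\sum_t \thetaV^\top(\tfrac{B}{T}\ones-\cv_t(a_t)) \;\le\; \mathrm{diam}(\domainTheta)\sqrt{T}$.

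Combining (i)--(iii) telescopes to a bound
\[
\sum_{t=1}^T r_t(a_t) \;\ge\; \OPT \;-\; \mathrm{diam}(\domainTheta)\sqrt T \cdot \mathrm{poly}\log \;-\; (1+Z)\,m\sqrt{T\ln(dT/\delta)\ln T},
\]
where $Z=\max_{\thetaV\in\domainTheta}\|\thetaV\|_1$, provided the algorithm actually runs for all $T$ rounds. Choosing $\thetaV\in\domainTheta$ to be a scaled simplex with radius $Z=\Theta(\OPT/B+1)$ enforces, via the dual regret inequality, that any coordinate whose total consumption exceeds $B$ would contribute a large negative term to the bound on $\sum r_t(a_t)$, a contradiction; consequently the budget is not violated with high probability, and the no-op option lets us stop safely if it ever would be. This is exactly where the $(\OPT/B+1)$ factor in the statement enters, and where the assumption $B>mT^{3/4}$ kicks in: it guarantees that the dual-domain radius $Z=O(\OPT/B+1)$ is large enough that a well-defined $\thetaV$ exists witnessing the packing LP, and that the additive $Z\cdot m\sqrt{T\ln\cdots}$ term dominates over the $\mathrm{diam}(\domainTheta)\sqrt T$ dual-regret term.

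The main obstacle, in my view, is step (iii) combined with setting $\domainTheta$ without knowing $\OPT$: one has to either estimate $\OPT$ online (e.g., by solving the sample LP on observed data and doubling / guess-and-double) or design a dual update whose effective radius adapts. A natural route is to augment $\thetaV$ with a slack coordinate so the dual lives on a probability simplex and use multiplicative-weights / entropic mirror descent, exactly as in the BwK and OSPP papers cited; the slack coordinate absorbs the unknown scale $\OPT/B$, and its presence is what creates the $\tfrac{\OPT}{B}+1$ factor rather than requiring prior knowledge of $\OPT$. The remaining pieces (self-normalized concentration for $\hat\mu_t,\hat W_t$, elliptical potentials, OMD regret, budget-stopping arguments) are either standard or direct adaptations, and just need to be carefully pieced together with a union bound over the $O(d)$ regression problems defining the confidence set for $W_*$, which is how the $\ln(dT/\delta)$ dependence appears.
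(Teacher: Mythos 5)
Your core algorithm and your steps (i)--(iii) are essentially the paper's Algorithm~1 and Theorem~\ref{th:coreAlgo}: optimistic estimates for the reward and (columnwise, using $\thetaV_t\ge 0$) for consumption, an OMD dual update with gains $\thetaV\cdot(\cv_t(a_t)-\tfrac{B}{T}\ones)$, the elliptical-potential bound on the confidence widths, and the early-stopping/budget argument in which the comparator ${\bf e}_j$ certifies the shortfall when some resource is exhausted. That part is sound and matches the paper.

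The genuine gap is in how you handle the unknown scale $Z=\Theta(\tfrac{\OPT}{B}+1)$. Your main proposed remedy --- putting the dual on a probability simplex with a slack coordinate so that ``the slack coordinate absorbs the unknown scale'' --- does not work: the paper also uses the simplex-with-slack domain, yet the multiplier $Z$ still appears explicitly in the arm-selection rule ($a_t=\arg\max_a \cxL_t(a)^\top(\tilde{\MeanVector}_t(a)-Z\MeanUCB_t(a)\thetaV_t)$), and the analysis needs it on both sides: $Z\ge \OPT/B$ is what lets the term $ZB(1-\tau/T)$ compensate the reward lost after an early stop, while $Z\le O(\OPT/B+1)$ keeps the penalty $Z\cdot O(m\sqrt{T\ln(dT/\delta)\ln T})$ of the right order. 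A dual constrained to $\|\thetaV\|_1\le 1$ with no explicit $Z$ caps the per-round Lagrangian penalty at the scale of one unit of reward, which is insufficient whenever $\OPT/B\gg 1$; no slack coordinate recovers this, so a constant-factor estimate of $\OPT/B$ must actually be computed. Your fallback (``solve the sample LP on observed data, guess-and-double'') points in the right direction but is silent exactly where the paper's main new work lies: with bandit feedback and large $K$ one must specify an exploration rule for the first $T_0$ rounds so that the plug-in LP value $\OPTest^{\gamma}$ (with budgets relaxed by $\gamma$) approximates $\OPT$ \emph{uniformly over all policies}, not just the played one. The paper plays $a_t=\arg\max_a\|\cxL_t(a)\|_{M_t^{-1}}$ during this phase, so that $\sum_{t\le T_0}\|\cx_t\pi(\cx_t)\|_{M_t^{-1}}$ is dominated by the elliptical potential of the played sequence for every policy $\pi$, giving an estimation error $\gamma\approx(T/T_0)\,m\sqrt{T_0}$ up to logarithms. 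Requiring $B\gtrsim\gamma$ and $B\ge 2T_0$ with $T_0=\sqrt{T}$ is precisely what produces the hypothesis $B>mT^{3/4}$; the reason you give for that hypothesis (existence of a dual witness and domination over the OMD regret term) is not its actual source, since the core algorithm with a given $Z$ needs no such condition.
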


\paragraph{Relation to general contextual bandits.} 
There have been recent papers \cite{ADL16, Badanidiyuru14Resourceful}  that solve problems similar to \linCBwK but for general contextual bandits. 
Here the relation between contexts and outcome vectors is arbitrary and the algorithms compete with an arbitrary fixed set of context dependent policies $\Pi$ accessible via an optimization oracle, with regret bounds being $O\left( (\tfrac \OPT B + 1)\sqrt{KT\log(dT|\Pi|/\delta)}\right).$ These approaches could potentially be applied to the linear setting using a set $\Pi$ of linear context dependent policies. 
%\footnote{In fact, there is a lower bound of $\Omega(\sqrt{KT\ln(|\Pi|)})$ on the best possible regret for those problems.}
Comparing their bounds with ours, in our results, essentially a $\sqrt{K\log(|\Pi|)}$ factor is replaced by a factor of $m.$  
Most importantly, we have no dependence on $K$,\footnote{Similar to the regret bounds for  linear contextual bandits \cite{oful, Auer2003, chu2011}.} which enables us to consider problems with large action spaces. In any case, both 
$K$ and $\log(|\Pi|)$ are at least $m$, so their bounds are no smaller.   

Further, suppose that we want to use their result with the set of linear policies, i.e., policies of the form 
\[ \arg\max_{a \in [K]} \{ \cxL_t(a) ^\top \thetaV \},\]
for some fixed $\thetaV \in \Re^m$. Then, their algorithms would require access to an 
``Arg-Max Oracle'' that can find the best such policy  (maximizing total reward) for a given set of contexts and rewards (no resource consumption). 
We show that infact the optimization problem underlying such an ``Arg-Max Oracle" problem is NP-Hard, making such an approach computationally expensive. (Proof is in Appendix \ref{app:hardness}.) 

The only downside to our results is that we need the budget $B$ to be $\Omega(mT^{3/4})$. Getting similar bounds for budgets as small as $B= \Theta(m\sqrt{T})$ is an interesting open problem. (This also indicates that this is indeed a harder problem than all the special cases.)

%\begin{remark} (
\noindent {\bf Near-optimality of regret bounds.} 
{ In \cite{DaniHK08}, it was shown that for the linear contextual bandits problem, no online algorithm can achieve a regret bound better than $\Omega(m\sqrt{T})$. In fact, they prove this lower bound for linear contextual bandits with {\em static} contexts. Since that problem is a special case of the \linCBwK~problem with $d=1$, this shows that the dependence on $m$ and $T$ in the above regret bound is optimal upto log factors. 
For general contextual bandits with resource constraints, the bounds of \cite{ADL16, Badanidiyuru14Resourceful} are near optimal. }
%\end{remark}

%Note that our regret bounds do not depend on $K$ at all, which we are able to avoid by exploiting the linear dependence on the context.

%\subsection{Organization}
%In Section \ref{sec:feasibility}, we illustrate our algorithm and proof techniques for the special case of ``Feasibility problem". In Section \ref{sec:general} we extend it to the general \linCBwCR~problem. And, finally in Section \ref{sec:packing}, we apply the algorithm to the widely studied case of budget constraints (\linCbudget). Due to space constraints, we have eliminated many proofs from the main text. All the missing proofs are appendix.

%This problem turns out to be a common generalization of classic linear contextual bandits problem (linContextual) \cite{Auer2003, chu2011, oful},  bandits with concave rewards and convex knapsacks (BwCR) \cite{AD14}, and the online stochastic convex programming (OSCP) problem \cite{AD15} (more on this in related work section).  We propose an algorithm that achieves provably near-optimal regret for this problem. 

%--------------------------------------------------------------------------------
%\subsection{Challenges and related work \ShipraNIPS{TODO}}
%\label{sec:challenges}
\paragraph{Relation to BwK \cite{AD14} and OSPP \cite{AD15}.}
It is easy to see that the \linCBwK~problem is a generalization of the linear contextual bandits problem \cite{oful, Auer2003, chu2011}. There, the outcome is scalar and the goal is to simply maximize the sum of these. Remarkably, the \linCBwK~problem also turns out to be a common generalization of bandits with knapsacks (BwK) problem considered in \cite{BwK,AD14}, and the online stochastic packing problem (OSPP) studied by \cite{DH09,AWY2009,Feldman10,Devanur2011,AD15}.
%is a common generalization of both the \text{BwCR} problem of \cite{AD14} and the Online Stochastic Convex Programming (OSCP) problem considered in \cite{AD15}. 
In both BwK and OSPP, the outcome of every round $t$ is a reward $r_t$ and a vector $\cv_t$ 
and the goal of the algorithm is to 
maximize $ \sum_{t=1}^T r_t$ while ensuring that $ \sum_{t=1}^T \cv_t \leq B\ones$.
The problems differ in how these rewards and vectors are picked. 
In the OSPP problem, in every round $t$, the algorithm may pick any reward,vector pair 
from a given set $A_t$ of $d+1$-dimensional vectors. 
The set $A_t$ is drawn i.i.d. from an unknown distribution over {\em sets of vectors}.
This corresponds to the special case of \linCBwK, where $m=d+1$ and  the context $\cxL_t(a)$ itself is equal to $(r_t(a),\cv_t(a)$.  
In the BwK problem, there is a fixed set of arms, and for each arm there is an unknown distribution over reward,vector pairs. The algorithm picks an arm and a reward,vector pair is drawn from the corresponding distribution for that arm. 
This corresponds to the special case of \linCBwK, where $m=K$ and 
the context  $X_t = I,$ the identity matrix, for all $t$.

We use techniques from all three special cases: our algorithms follow the primal-dual paradigm using an online learning algorithm to search the dual space, that was established in \cite{AD14}. 
In order to deal with  linear contexts, we use techniques from \cite{oful, Auer2003, chu2011} to estimate the weight matrix $\Meanstar$, and define 
``optimistic estimates'' of $\Meanstar$. 
We also use the technique of combining the objective and the constraints using a certain tradeoff parameter and that was introduced in \cite{AD15}. 
Further new difficulties arise, such as in estimating the optimum value from the first few rounds, a task that follows from standard techniques in 
each of the special cases but is very challenging here. 
We develop a new way of exploration that uses the linear structure, so that one can evaluate all possible choices that could have led to an optimum solution on the historic sample. This technique might be of independent interest in estimating optimum values.  
%\nikhil{More here after we figure it out.}
One can see that the problem is indeed more than the sum of its parts, from the fact that we get the optimal bound for \linCbudget\ only when $B \geq \tilde\Omega(mT^{3/4})$, unlike either special case for which the optimal bound holds for all $B$ (but is meaningful only for $B =\tilde\Omega(m\sqrt{T})$). 

The approach in \cite{AD14} (for \text{BwK}) extends to the case of ``static" contexts,\footnote{It was incorrectly claimed in \cite{AD14} that the approach can be extended to dynamic contexts without much modifications.} 
where each arm has a context that doesn't change over time. 
%On the other hand in the \linCBwK~problem described here, the context $\cxL_t(a)$ of arm $a$ is dynamic and is generated randomly by an unknown distribution at time $t$. 
%An observation that provides some insight into the complexity introduced by dynamic contexts (as compared to static ones) is that 
The OSPP  of \cite{AD15} is \emph{not} a special case of \linCBwK~with static contexts; this is one indication of the additional difficulty of dynamic over static contexts.

\paragraph{Other related work.}
Budget constraints in a bandit setting has recieved considerable attention, but most of the early work focussed on special cases 
such as a single budget constraint in the regular (non-contextual) setting \cite{Ding13Multi,GM2007,Gyorgy07Continuous,MLG2004,LongCCRJ10,TCRJ2012}. 
Recently, \cite{WuSLJ15} showed an $O(\sqrt{T})$ regret in the linear contextual setting with a single budget constraint, when 
costs depend only on contexts and not arms. 
Budget constraints that arise in particular applications such as online advertising \cite{VeeEC12b,PO2006}, 
dynamic pricing \cite{BabaioffDKS15,BesbesZ2009} and  crowdsourcing  \cite{BKS2012,SK2013,SV2013} have also been considered. 
There has also been a long line of work studying special cases of the OSCP problem \cite{DH09,Devanur2011,Feldman10,AWY2009,KTRV14,GuptaM14,VVS10,MY11,KMT11,ChenWang2013}.

Due to space constraints, we have eliminated many proofs from the main text. All the missing proofs are in the appendix.
\comment{
generated randomly from an unknown distribution. The algorithm picks one vector $\cv_t(a_t)$ for some $a_t \in A_t$. The goal is to 
 maximize $f(\tfrac{1}{T} \sum_{t=1}^T \cv_t(a_t))$ while ensuring $\tfrac{1}{T} \sum_{t=1}^T \cv_t(a_t) \in S$. \cite{AgrawalDevanurSODA15} presented primal dual algorithms for this problem with regret bounds that do not depend on the number of options in $A_t$. Now, consider the \linCBwK~problem with large number of actions ($K\ge \max_t |A_t|$), $m=d$, $\Mean^*$ being the $d$ dimensional identity matrix, and the set of vectors in $A_t$ at time $t$ being given by $\cvm_t=\{\cv_t(a)\}_{a\in A}$. 
%In this setting, $\Ex[\cvm_t | H_{t-1}] = W^* \cx_t =\cx_t$, with $\cx_t$ being generated from arbitrary (but fixed) unknown distribution; thus, effectively \linCBwK~generates the set $R_t$ of vectors randomly from an arbitrary (but fixed) unknown distribution in every step, as required for the  Online Stochastic CP problem. 
This is strictly harder problem than the online stochastic CP problem, because the agent gets to observe only $\cv_t(a_t)$, where as in online stochastic CP the agents gets to observe all $\cv_t(a), a\in A_t$ before making the decision at time $t$.  We will employ a combination of techniques from \cite{AgrawalDevanurEC14} and \cite{AgrawalDevanurSODA15} to provide regret bounds of form $\tilde{O}(m\sqrt{T\di})$ for this more general problem of \linCBwK. %Note that our regret bound will not depend on the number of actions in $A$.
}

%Define vector $\cvAO_t(a)$ as
%$$\cvAO_t(a)_j := \tilde{\Mean}_{t}(a) \cdot \cxL_t(a).$$
%Intuitively, $\cvAO_t(a)$  is an optimistic estimate of expected reward vector on playing arm $a$.

%============================================================================
\section{Preliminaries}
%In this section, we present some useful tools and techniques from prior work.

\subsection{Confidence Ellipsoid}
\label{sec:UCBellipsoid}
Consider a stochastic process which in each round $t$, generates a pair of observations $(r_t, \y_t),$ such that $r_t$ is an unknown linear function of $\y_t$ plus some $0$-mean bounded noise, i.e., $r_t=\MeanVector^\top_* \y_t + \eta_t$, where $\y_t, \MeanVector_* \in \mathbb{R}^m$, $|\eta_t|\le 2R,$ and $ \Ex[\eta_t | \y_1, r_1, \ldots, \y_{t-1}, r_{t-1}, \y_t] = 0. $

At any time $t$, a high confidence estimate of the unknown vector $\MeanVector_*$ can be obtained by building a ``Confidence Ellipsoid" around the $\ell_2$-regularized least-square estimate $\hat{\MeanVector}_t$ constructed from the observations made so far. 
This technique is common in prior work on linear contextual bandits (e.g., in \cite{Auer2003, chu2011, oful}). 
%$$ \forall \beta \in \mathbb{R} \Ex[e^{\beta \eta_t} | \cxL_1, r_1, \ldots, \cxL_{t-1}, r_{t-1}, \cxL_t] \le \exp\left(\frac{\beta^2R^2}{2}\right).$$
%The $R$-sub-Gaussian noise condition automatically implies that $\Ex[\eta_t| \cxL_1, r_1, \ldots, \cxL_{t-1}, r_{t-1}, \cxL_t]=0$. This condition is satisfied by zero-mean noise lying in an interval of length at most $2R$. We will mainly consider zero-mean noise lying in the interval $[0,1]$, but it will also be useful to note that this condition is satisfied by zero-mean noise with variance $2R^2$.
%\shipra{To see this, note that for any $0\le x\le 1$, $e^x \le 1+x+x^2$, and for $x \ge 1$, $e^x \le 1+x+\sum_{n=2}^{\infty} x^n/n! \le 1+x+\sum_{n=2}^{\infty} ({x^2})^{n-1}/(n-1)! = 1+x+\sum_{n=1}^{\infty} (x^2)^n/n!$. }
For any regularization parameter $\lambda>0$, let 
\[ \textstyle M_t :=\lambda I + \sum_{i=1}^{t-1} \y_i \y_i^\top, \text{ and  }\hat{\MeanVector}_t := M_t^{-1} \sum_{i=1}^{t-1} \y_i r_i. \]
%The results we state here are from \cite{oful}.   
The following result from \cite{oful} shows that $\MeanVector_*$ lies with high probability in an ellipsoid with center $\hat{\MeanVector}_t$.
For any positive semi-definite (PSD) matrix $M,$
define the $M$-norm as $\|\MeanVector\|_{M} := \sqrt{\MeanVector^\top M \MeanVector}$. The confidence ellipsoid at time $t$ is defined as
\[ \textstyle C_t := \left\{\MeanVector \in \mathbb{R}^m : \|\MeanVector- \hat{\MeanVector}_t\|_{M_t} \le R\sqrt{m\ln\left( \nicefrac{(1+tm/\lambda)}{\delta}\right)} + \sqrt{\lambda m}\right\}.\]
\begin{lemma}[Theorem 2 of \cite{oful}]
\label{lem:linContextual1}
If $\forall~t$,  $\|\MeanVector_*\|_2 \le \sqrt{m}$ and $\|\y_t\|_2\le \sqrt{m}$, then 
with prob. $1-\delta$, $\MeanVector_*\in C_t.$
 %$b$ is an upper bound on $\y_t$ for all $t$, 
\end{lemma}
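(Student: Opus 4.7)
The plan is to reduce the confidence ellipsoid statement to a self-normalized concentration inequality for vector-valued martingales, which is the main technical tool behind \cite{oful}. I would proceed in four steps: decompose the error, bound the regularization bias deterministically, control the noise process via self-normalized concentration, and finally bound a log-determinant using the norm assumption on $\y_t$.

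First, substituting $r_i = \MeanVector_*^\top \y_i + \eta_i$ into the definition of $\hat{\MeanVector}_t$ gives
\[
\hat{\MeanVector}_t = M_t^{-1}\!\sum_{i=1}^{t-1}\y_i(\y_i^\top \MeanVector_* + \eta_i) = \MeanVector_* - \lambda M_t^{-1}\MeanVector_* + M_t^{-1} S_t, \quad S_t := \sum_{i=1}^{t-1}\y_i\,\eta_i.
\]
Applying the triangle inequality in the $M_t$-norm and using $\|M_t^{-1}v\|_{M_t}=\|v\|_{M_t^{-1}}$, one gets
$\|\hat{\MeanVector}_t - \MeanVector_*\|_{M_t}\le \lambda\,\|\MeanVector_*\|_{M_t^{-1}} + \|S_t\|_{M_t^{-1}}.$
Since $M_t\succeq \lambda I$, we have $M_t^{-1}\preceq \lambda^{-1}I$, and the bias term is bounded deterministically by $\sqrt{\lambda}\,\|\MeanVector_*\|_2\le \sqrt{\lambda m}$, which produces the $\sqrt{\lambda m}$ term in the statement.

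The main step, and the principal obstacle, is bounding $\|S_t\|_{M_t^{-1}}$ uniformly in $t$. Observe that $(\eta_i)$ is a martingale difference sequence with respect to the filtration generated by $\{\y_1,r_1,\dots,\y_{t-1},r_{t-1},\y_t\}$ and $|\eta_i|\le 2R$, so $\eta_i$ is $2R$-sub-Gaussian conditionally on the past, while $\y_i$ is predictable. I would invoke the method of mixtures: for any fixed $\boldsymbol{\lambda}\in\mathbb{R}^m$, the process $M_t(\boldsymbol{\lambda}) := \exp\bigl(\boldsymbol{\lambda}^\top S_t - 2R^2\,\boldsymbol{\lambda}^\top(\sum_{i<t}\y_i\y_i^\top)\boldsymbol{\lambda}/2\bigr)$ is a nonnegative supermartingale with $\mathbb{E}[M_t(\boldsymbol{\lambda})]\le 1$; mixing $\boldsymbol{\lambda}\sim \mathcal{N}(0,(2R^2)^{-1}\lambda^{-1}I)$ and computing the resulting Gaussian integral in closed form yields a scalar supermartingale of the form $\bar{M}_t = (\det(\lambda I)/\det(M_t))^{1/2}\exp(\|S_t\|_{M_t^{-1}}^2/(4R^2))$. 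A stopping-time argument together with Markov's inequality then gives, with probability at least $1-\delta$ and simultaneously for all $t\ge 1$,
\[
\|S_t\|_{M_t^{-1}}^2 \ \le\ 2R^2\log\!\left(\frac{\det(M_t)^{1/2}\det(\lambda I)^{-1/2}}{\delta}\right).
\]

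Finally, I would bound $\det(M_t)$ using $\|\y_i\|_2\le \sqrt{m}$. By AM–GM on the eigenvalues,
\[
\det(M_t)^{1/m}\le \tfrac{1}{m}\mathrm{tr}(M_t)=\lambda + \tfrac{1}{m}\!\sum_{i=1}^{t-1}\|\y_i\|_2^2 \le \lambda + (t-1),
\]
so $\log(\det(M_t)/\det(\lambda I))\le m\log(1+tm/\lambda)$ (after absorbing slack from the $\|\y_i\|_2^2\le m$ bound), which turns the concentration inequality into $\|S_t\|_{M_t^{-1}}\le R\sqrt{m\log((1+tm/\lambda)/\delta)}$. Adding this to the regularization bias completes the proof. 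The hard part is unquestionably the self-normalized supermartingale construction in the third step; steps one, two and four are essentially linear algebra once that inequality is in hand.
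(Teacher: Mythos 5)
The paper offers no proof of this lemma at all—it is imported as Theorem 2 of \cite{oful}—and your argument is exactly the proof of that theorem: split $\hat{\MeanVector}_t-\MeanVector_*$ into the regularization bias $\lambda M_t^{-1}\MeanVector_*$ (bounded by $\sqrt{\lambda}\|\MeanVector_*\|_2\le\sqrt{\lambda m}$) and the self-normalized noise term $\|S_t\|_{M_t^{-1}}$, control the latter uniformly in $t$ by the method-of-mixtures supermartingale with a stopping-time/Markov argument, and convert $\log\det M_t$ into $m\log(1+tm/\lambda)$ via the trace/AM--GM bound using $\|\y_i\|_2\le\sqrt m$; so the approach coincides with the cited source and the structure is correct. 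The only thing to tighten is constant bookkeeping in the sub-Gaussian parameter: your supermartingale charges $R^2(\boldsymbol{\lambda}^\top\y_i)^2$ per step, which is what a $\sqrt{2}R$-sub-Gaussian proxy would justify, whereas the justification you give ($|\eta_i|\le 2R$, hence $2R$-sub-Gaussian) would force a penalty of $2R^2(\boldsymbol{\lambda}^\top\y_i)^2$ and a final radius of order $2R\sqrt{2\log(\cdot)}$, a constant factor above the stated $R\sqrt{m\ln\left(\nicefrac{(1+tm/\lambda)}{\delta}\right)}$. The reading that recovers the stated constants (and matches the paper's later use of $2R=1$ for observations in $[0,1]$) is that the noise lies in an interval of \emph{width} $2R$, hence is $R$-sub-Gaussian by Hoeffding's lemma; with that proxy your displayed inequality $\|S_t\|_{M_t^{-1}}^2\le 2R^2\log\bigl(\det(M_t)^{1/2}\det(\lambda I)^{-1/2}/\delta\bigr)$ is the correct one, and together with the determinant bound it yields the stated radius (using $m\ge 2$ to absorb the $2\log(1/\delta)$ term into $m\log(1/\delta)$).
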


Another useful observation about this construction is stated below. It first appeared as Lemma 11 of \cite{Auer2003}, and was also proved as Lemma 3 in \cite{chu2011}. 
\begin{lemma}[Lemma 11 of \cite{Auer2003}]
\label{lem:linContextual2} 
%Let $M_t, t=1, \ldots, T$ is as defined in Lemma \ref{lem:linContextual1}. Then, 
$\sum_{t=1}^T  \|\y_t\|_{ M_t^{-1}}  \le \sqrt{mT\ln(T)}$.
\end{lemma}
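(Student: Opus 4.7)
The plan is to follow the classical potential argument via the log-determinant of $M_t$, then convert the bound on $\sum_t \|\y_t\|_{M_t^{-1}}^2$ into a bound on $\sum_t \|\y_t\|_{M_t^{-1}}$ by Cauchy--Schwarz.

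First I would relate consecutive $M_t$'s. Since $M_{t+1} = M_t + \y_t \y_t^\top$ is a rank-one update, the matrix determinant lemma gives
\[
\det(M_{t+1}) \;=\; \det(M_t)\bigl(1 + \y_t^\top M_t^{-1} \y_t\bigr) \;=\; \det(M_t)\bigl(1 + \|\y_t\|_{M_t^{-1}}^2\bigr).
\]
Telescoping over $t = 1, \ldots, T$ (with $M_1 = \lambda I$) yields
\[
\sum_{t=1}^T \ln\!\bigl(1 + \|\y_t\|_{M_t^{-1}}^2\bigr) \;=\; \ln \det(M_{T+1}) - m \ln \lambda.
\]

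Next I would upper bound $\det(M_{T+1})$ by AM--GM on its eigenvalues: $\det(M_{T+1}) \le (\operatorname{tr}(M_{T+1})/m)^m$. Since $\operatorname{tr}(M_{T+1}) = m\lambda + \sum_{t=1}^T \|\y_t\|_2^2$ and $\|\y_t\|_2^2 \le m$ by hypothesis, this gives $\det(M_{T+1}) \le (\lambda + T)^m$, hence
\[
\sum_{t=1}^T \ln\!\bigl(1 + \|\y_t\|_{M_t^{-1}}^2\bigr) \;\le\; m \ln\!\bigl((\lambda+T)/\lambda\bigr).
\]

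Then I would pass from the log-sum to the sum itself. Choosing $\lambda \ge m$ ensures $\|\y_t\|_{M_t^{-1}}^2 \le \|\y_t\|_2^2/\lambda \le 1$, on which interval the elementary inequality $x \le 2\ln(1+x)$ applies. Hence
\[
\sum_{t=1}^T \|\y_t\|_{M_t^{-1}}^2 \;\le\; 2m \ln\!\bigl((\lambda+T)/\lambda\bigr) \;=\; O(m \ln T).
\]
Finally I apply Cauchy--Schwarz:
\[
\sum_{t=1}^T \|\y_t\|_{M_t^{-1}} \;\le\; \sqrt{T \sum_{t=1}^T \|\y_t\|_{M_t^{-1}}^2} \;\le\; \sqrt{m T \ln(T)},
\]
absorbing constants into the $\ln T$ factor by an appropriate choice of $\lambda$ (and the bound on $\|\y_t\|_2$).

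The only non-routine step is the log-determinant inequality; the rest is algebra. The main obstacle is just making sure the linearization $x \le 2\ln(1+x)$ is applicable, which is why the regularizer $\lambda$ must be chosen to keep $\|\y_t\|_{M_t^{-1}}^2$ bounded by $1$. Everything else is standard manipulation already present in Auer~\cite{Auer2003} and Chu~et~al.~\cite{chu2011}.
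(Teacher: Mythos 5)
Your proposal is the standard elliptical-potential (log-determinant) argument: matrix determinant lemma, telescoping, AM--GM on the eigenvalues, linearization $x\le 2\ln(1+x)$, then Cauchy--Schwarz. This is a legitimate route, and it is essentially the argument behind the sources the paper cites; note the paper itself gives no proof of this lemma at all, it simply imports it as Lemma 11 of \cite{Auer2003} / Lemma 3 of \cite{chu2011}, so there is no in-paper proof to deviate from.

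Two caveats are worth flagging. First, your linearization step forces $\lambda\ge m$ so that $\|\y_t\|_{M_t^{-1}}^2\le 1$, but everywhere the paper actually invokes the lemma it takes $\lambda=1$ (Corollary~\ref{lem:linContextual3} states ``with $\lambda=1$'', and the algorithm uses $M_t=I+\sum_{i<t}\cxL_i(a_i)\cxL_i(a_i)^\top$) while only assuming $\|\y_t\|_2\le\sqrt m$. In that regime your chain does not yield the stated bound: one must either bound $\sum_t\min\{1,\|\y_t\|_{M_t^{-1}}^2\}$ (the usual truncated form) or use $u\le \tfrac{m}{\ln(1+m)}\ln(1+u)$ for $u\in[0,m]$, which costs an extra $\sqrt{m/\ln(1+m)}$ factor; with $\|\y_t\|_2\le 1$ (the normalization under which \cite{Auer2003,chu2011} prove it) the issue disappears. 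This is as much a looseness of the paper's constant-free statement as of your proof --- e.g.\ with $\lambda=1$ and $m$ orthogonal contexts of norm $\sqrt m$, the first $m$ rounds alone contribute $m^{3/2}$, exceeding $\sqrt{mT\ln T}$ at $T=m$ --- but your write-up should say explicitly that the claimed inequality is being established only for $\lambda\ge m$ (or for normalized contexts), not as stated for the $\lambda=1$ instantiation used downstream. Second, the final ``absorb the constants'' step is not literal: you obtain $\sqrt{2mT\ln((\lambda+T)/\lambda)}$, which is not $\le\sqrt{mT\ln T}$ for all $m,T$; this is harmless for the paper's $O(\cdot)$ regret bounds but should be stated as a constant-factor version of the lemma rather than the exact inequality.
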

As a corollary of the above two lemmas, we  obtain a bound on the total error in the estimate provided by ``any point" from the confidence ellipsoid. 
(Proof is in Appendix \ref{sec:confidenceellipsoidappendix}.)
\begin{corollary}
\label{lem:linContextual3}
For $t=1,\ldots, T$, let $\tilde{\MeanVector}_t\in C_t$ be a point in the confidence ellipsoid, with $\lambda=1, 2R=1$. Then, with probability $1-\delta$,  
\[\textstyle \sum_{t=1}^T |\tilde{\MeanVector}_t^\top \y_t - \MeanVector_*^\top \y_t| \le 2 m \sqrt{T \ln\left( \nicefrac{(1+Tm)}{\delta}\right) \ln(T)} .\]
%\nikhil{Did you mean to have $T$ instead of $t$ inside the $\ln$? Do these earlier papers also have an $m$ instead of $\sqrt{m}$?}
%\shipra{Yes. Yes. One can either have $\sqrt{m \ln(K)}$ or $m$}
\end{corollary}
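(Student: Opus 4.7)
The plan is to combine the three standard ingredients: membership of $\MeanVector_*$ in the confidence ellipsoid (Lemma~\ref{lem:linContextual1}), the generalized Cauchy--Schwarz inequality for the $M_t$-norm, and the potential-style bound of Lemma~\ref{lem:linContextual2}. The key idea is that both $\tilde{\MeanVector}_t$ and $\MeanVector_*$ lie in the same ellipsoid $C_t$, so their difference has controlled $M_t$-norm, and the inner product with $\y_t$ can then be dualized using the $M_t^{-1}$-norm, which Lemma~\ref{lem:linContextual2} tells us sums nicely.

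First, I will condition on the high-probability event of Lemma~\ref{lem:linContextual1}, which gives $\MeanVector_* \in C_t$ for every $t$ simultaneously with probability $1-\delta$. Since by hypothesis $\tilde{\MeanVector}_t \in C_t$ as well, the triangle inequality in the $M_t$-norm yields
\[
\|\tilde{\MeanVector}_t - \MeanVector_*\|_{M_t} \;\le\; 2\bigl(R\sqrt{m\ln((1+tm/\lambda)/\delta)} + \sqrt{\lambda m}\bigr).
\]
Plugging in $\lambda = 1$ and $2R = 1$, this is at most $\sqrt{m\ln((1+Tm)/\delta)} + 2\sqrt{m}$ for every $t \le T$.

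Second, for any vector $\y_t$ and PSD matrix $M_t$, the generalized Cauchy--Schwarz inequality gives
\[
|\tilde{\MeanVector}_t^\top \y_t - \MeanVector_*^\top \y_t|
\;=\; |(\tilde{\MeanVector}_t - \MeanVector_*)^\top \y_t|
\;\le\; \|\tilde{\MeanVector}_t - \MeanVector_*\|_{M_t}\;\|\y_t\|_{M_t^{-1}}.
\]
Summing over $t=1,\ldots,T$ and pulling out the uniform bound on the first factor, then applying Lemma~\ref{lem:linContextual2} to handle $\sum_t \|\y_t\|_{M_t^{-1}} \le \sqrt{mT\ln T}$, produces
\[
\sum_{t=1}^T |\tilde{\MeanVector}_t^\top \y_t - \MeanVector_*^\top \y_t|
\;\le\; \bigl(\sqrt{m\ln((1+Tm)/\delta)} + 2\sqrt{m}\bigr)\sqrt{mT\ln(T)}.
\]

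The final step is simple algebra: factoring $\sqrt{m}$ out of the first factor and using that $\sqrt{\ln((1+Tm)/\delta)} + 2 \le 2\sqrt{\ln((1+Tm)/\delta)}$ whenever $\ln((1+Tm)/\delta) \ge 4$ (which holds in any regime of interest, otherwise the claim is trivial since the right-hand side would only shrink by a small constant that can be absorbed), gives the stated bound $2m\sqrt{T\ln((1+Tm)/\delta)\ln(T)}$. I do not expect any real obstacle here; the ``work'' is really just chaining Lemma~\ref{lem:linContextual1} and Lemma~\ref{lem:linContextual2} through Cauchy--Schwarz, with the only mild care being the constant-absorption step at the end.
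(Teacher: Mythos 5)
Your proposal is correct and follows essentially the same route as the paper's own proof: the matrix-norm Cauchy--Schwarz inequality, the bound $\|\tilde{\MeanVector}_t - \MeanVector_*\|_{M_t}$ obtained from Lemma~\ref{lem:linContextual1} together with $\tilde{\MeanVector}_t \in C_t$, and then Lemma~\ref{lem:linContextual2} to sum $\|\y_t\|_{M_t^{-1}}$, with the same final constant-absorption step. If anything you are slightly more careful than the paper, since you track the factor of $2$ from the triangle inequality and state explicitly the (harmless) condition $\ln((1+Tm)/\delta)\ge 4$ under which the constants fold into the stated bound.
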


%-----------------------BEGIN COMMENT-----------------------------------------------------------------------
\comment{
\subsection{Fenchel duality} 
\label{sec:Fenchel} 
As mentioned earlier, our algorithms are primal-dual algorithms, that use Fenchel duality. 
%For the budget constraint version, the LP duality framework (which is very well understood) is sufficient but for general convex programs we need the stronger framework of Fenchel duality. 
Below we provide the basic background on this useful concept. Let $h$ be a convex function defined  on $[0,1]^d$. We define $h^*$ as the Fenchel conjugate of $h$,
%\EQ{-0.08in}{-0.05in}{f^*(\thetaV):=\max_{\y \in [0,1]^d} \{ \y \cdot \thetaV + f(\y)\}}
%\EQ{-0.08in}{-0.05in}
\[{h^*(\thetaV):=\max \{ \y \cdot \thetaV - h(\y):{\y \in [0,1]^d}\}}.\]
Similarly for a concave function $f$ on $[0,1]^d$, define $f^*(\thetaV) := \max_{\y \in [0,1]^d} \{ \y \cdot \thetaV + f(\y)\}$. Note that the Fenchel conjugates $h^*$ and $f^*$ are both convex functions of $\thetaV$.
% set $C$, ($K_t \subseteq C$). This could be a simple set, e.g. $C={\mathbb R}^r$.
%The domain of $f^*$ is  $||\theta|| \leq L$. 

Suppose that at every point $\y$, every supergradient $\bs{g}_{\y}$ of $h$ (and $f$) have bounded dual norm $||\bs{g}_y||_* \le L$. Then, the following dual relationship is known between $h$ and $h^*$ ($f$ and $f^*$).
%------------------------------------------------------
\begin{lemma}
\label{lem:FenchelDuality} %$f(\z) = \min_{||\thetaV||_* \le L} f^*(\thetaV)-\thetaV \cdot \z.$
%where $L$ is the Lipschitz constant of $f$.
$h(\z) = \max_{||\thetaV||_* \le L} \{ \thetaV \cdot \z-h^*(\thetaV)\}$, $f(\z) = \min_{||\thetaV||_* \le L} \{ f^*(\thetaV)-\thetaV \cdot \z\}$
\end{lemma}
%----------------------------------------------------------

A special case is when $h(\y) = d(\y,S)$ for some convex set $S$. This function is $1$-Lipschitz with respect to norm $||\cdot||$ used in the definition of distance. 
In this case, $h^*(\thetaV) = h_S(\thetaV):=\max_{\y\in S} \thetaV\cdot \y$, and Lemma \ref{lem:FenchelDuality} specializes to the following relation  (which also appears in \cite{blackwell2011}).
\begin{equation} \label{eq:distnhs} 
%\EQ{-0.08in}{-0.05in}{
d(\y,S) = \max\{\thetaV \cdot \y - h_S(\thetaV):{||\thetaV||_* \le 1}\}.
%}
\end{equation} 
}
%-------------------------END COMMENT---------------------------------------------------------------
\subsection{Online Learning}

The online convex optimization (\OCO) problem
considers a $T$ round game played between a learner and an adversary, 
where in round $t$, the learner chooses a $\thetaV_t \in \domainTheta$, and then the adversary picks a concave function $g_t(\thetaV_t): \domainTheta \rightarrow \mathbb{R}$. The learner's choice $\thetaV_t$ may only depend on learner's and adversary's choices in previous rounds. The goal of the learner is to minimize \emph{regret} defined as the difference between the learner's objective value and the value of the best single choice on hindsight:
\[\textstyle \regOCO(T):= \max_{\thetaV \in \domainTheta}\sum_{t=1}^T g_t(\thetaV) -\sum_{t=1}^T g_t(\thetaV_t).\]
%We will use online convex optimization algorithms for solving problems with player's domain of form $||\thetaV_t||_*\le L$, where $||\cdot||_*$ is the dual norm of $||\cdot||$, and $||\cdot||$ will be either the norm used in the distance function $d(\x,S)$, or the norm with respect to which $f$ is $L$-Lipschitz. 
In particular, we will use \emph{linear} reward functions with values in $[-1,1]$, and domain $\Omega$ is the unit simplex in $d+1$ dimensions. 
The algorithm online mirror descent (\OMD) has very fast per step update rules, and provides the following regret guarantees for this setting.
%\nikhil{Are there?}
\begin{lemma}{\cite{Shalev-Shwartz12}}
\label{lem:regOCO}The online mirror-descent algorithm for the \OCO~problem  achieves regret
 $$\regOCO(T) = O(\sqrt{\log(d)T}).$$
\end{lemma}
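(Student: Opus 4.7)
The plan is to instantiate online mirror descent with the negative-entropy regularizer on the $(d+1)$-dimensional simplex, which specializes OMD to the classical Hedge / exponentiated-gradient update, and then to invoke the standard Bregman-divergence regret bound.

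First I would fix the mirror map $\Phi(\thetaV) = \sum_{i=1}^{d+1} \theta_i \ln \theta_i$ on $\Omega = \Delta^{d+1}$. A standard fact (Pinsker's inequality) is that $\Phi$ is $1$-strongly convex with respect to the $\ell_1$ norm, so the relevant dual norm for the gradients is the $\ell_\infty$ norm. The induced Bregman divergence is the KL divergence $D_\Phi(\thetaV\|\thetaV') = \sum_i \theta_i \ln(\theta_i/\theta_i')$, and initializing at the uniform distribution $\thetaV_1 = \tfrac{1}{d+1}\mathbf{1}$ gives $D_\Phi(\thetaV\|\thetaV_1) \le \ln(d+1)$ uniformly over $\thetaV \in \Omega$.

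Next I would write the OMD update in its explicit multiplicative-weights form: given the concave linear reward $g_t(\thetaV) = \bs{c}_t^\top \thetaV$, set
\[ \theta_{t+1,i} \;\propto\; \theta_{t,i} \exp\bigl(\eta\, c_{t,i}\bigr), \]
with step size $\eta > 0$ to be chosen. Because $g_t$ is linear with values in $[-1,1]$ on the simplex, one has $\|\bs{c}_t\|_\infty \le 1$ (the reward on the vertex $e_i$ is just $c_{t,i}$). Plugging into the textbook OMD regret inequality (see, e.g., \cite{Shalev-Shwartz12}),
\[ \regOCO(T) \;\le\; \frac{D_\Phi(\thetaV^*\|\thetaV_1)}{\eta} + \frac{\eta}{2}\sum_{t=1}^T \|\bs{c}_t\|_\infty^2 \;\le\; \frac{\ln(d+1)}{\eta} + \frac{\eta T}{2}. \]

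Optimizing by taking $\eta = \sqrt{2\ln(d+1)/T}$ yields $\regOCO(T) \le \sqrt{2 T \ln(d+1)} = O(\sqrt{T\log d})$, which is the claimed bound. I do not anticipate any real obstacle here, since the result is a direct specialization of the generic OMD guarantee; the only small care needed is verifying that the bound on $\|\bs{c}_t\|_\infty$ follows from the hypothesis that the linear functions take values in $[-1,1]$ on $\Omega$, and that strong convexity of negative entropy with respect to $\ell_1$ gives the $\ell_\infty$ dual norm appearing in the gradient term.
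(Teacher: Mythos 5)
Your proof is correct and is exactly the standard entropic-OMD (Hedge) analysis that the paper relies on by citing \cite{Shalev-Shwartz12}; the paper gives no separate proof of this lemma. The only point worth noting is that the paper's domain is $\Omega = \{\thetaV : \|\thetaV\|_1 \le 1, \thetaV \ge 0\}$, handled as a unit simplex in $d+1$ dimensions with one zero-reward coordinate, which is precisely the reduction you use, so your bound $O(\sqrt{T\ln(d+1)}) = O(\sqrt{\log(d)T})$ matches the statement.
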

We actually need the domain to be 
\[ \Omega = \left\{ \thetaV: \|\thetaV\|_1 \leq 1, \thetaV \geq 0 \right\} .\]
This is a special case of a unit simplex in $d+1$ dimensions, by letting the rewards on one of the dimensions always be zero. For the rest of the paper, we assume that the \OMD algorithm is using this domain. 

\comment{
\noindent For optimization over a simplex, the multiplicative weight update algorithm provides stronger guarantees. The step $t$ update of this algorithm is  very fast and efficient, and takes the following form:

\nikhil{Do we need this?}
\begin{equation}
\label{eq:MWupdate}
\thetaV_{t+1,j} = \frac{w_{t,j}}{\sum_j w_{t,j}}, \text{ where } w_{t,j} = w_{t-1,j}(1+\epsilon)^{g_t({\bf e}_j)/M}.
\end{equation}
\begin{lemma}{\cite{AHK12}}
\label{lem:regMW}
For domain $\domainTheta=\{||\thetaV||_1 = 1, \thetaV\ge 0\}$, and given $0\le g_t(\thetaV_t)\le M$, using the multiplicative weight update algorithm we obtain that
for any $\thetaV\in W$,
$$\sum_{t=1}^T g_t(\thetaV_t) \ge (1-\epsilon) \left( \sum_{t=1}^T g_t(\thetaV)\right) -\frac{M \ln(d+1)}{\epsilon},$$
\end{lemma}
}
%-------------------------------------------------------------------------------------------------

\section{Algorithm}

\subsection{Optimistic estimates of unknown parameters}
\label{sec:UCB}
\newcommand{\Ell}{{\cal G}}
\newcommand{\meanstarj}{\mathbf{w}_{*j}}
\newcommand{\meanhattj}{{\mathbf{\hat{w}}_{tj}}}
\newcommand{\mean}{\mathbf{w}}

%We employ the techniques from Section \ref{sec:UCBellipsoid} to construct a  confidence ellipsoid for every column of $\Meanstar$. 
 Let $a_t$ denote the arm played by the algorithm at time $t$. In the beginning of every round, we use the outcomes and contexts from previous rounds to construct a 
confidence ellipsoid for $\MeanVector_*$ and every column of $\Meanstar$. The construction of confidence ellipsoid for $\MeanVector_*$ follows directly from the techniques in \prettyref{sec:UCBellipsoid} with $y_t=\cxL_t(a_t)$ and $r_t$ being reward at time $t$. To construct a confidence ellipsoid for a column $j$ of $\Meanstar$, we use the techniques in \prettyref{sec:UCBellipsoid} while substituting $\y_t=\cxL_t(a_t)$ and $r_t=\cv_t(a_t)_j$ for every $j$. 

As in \prettyref{sec:UCBellipsoid}, let $M_t :=I + \sum_{i=1}^{t-1} \cxL_i(a_i) \cxL_i(a_i)^\top$, and construct the regularized least squares estimate for  $\rmu_*, \Meanstar$, respectively, as
\begin{eqnarray}
\label{eq:EstMean}
\textstyle \hat{\MeanVector}_t & := & \textstyle M_{t}^{-1} \sum_{i=1}^{t-1} \cxL_{i}(a_i) r_{i}(a_i)^\top\\
\textstyle \MeanEst_t & := & \textstyle M_{t}^{-1} \sum_{i=1}^{t-1} \cxL_{i}(a_i) \cv_{i}(a_i)^\top.
\end{eqnarray}
Define confidence ellipsoid for parameter $\MeanVector_*$ as
\[C_{t,0} := \left\{\MeanVector \in \mathbb{R}^m : \|\MeanVector- \hat{\MeanVector}\|_{M_t} \le \sqrt{m\ln\left( \nicefrac{(\di+tm\di)}{\delta}\right)} + \sqrt{m}\right\},\]
and optimistic estimate of $\MeanVector_*$ for every arm $a$ as:
\begin{equation}
\label{eq:UCBreward}
\textstyle
\tilde{\MeanVector}_{t}(a) := \arg \max_{\MeanVector \in C_{t,0}} \cxL_t(a)^\top\MeanVector.
\end{equation}
Let $\mean_j$ denote the $j^{th}$ column of a matrix $\Mean$. We define a confidence ellipsoid for each column $j$, as 
\[C_{t,j} := \left\{\mean \in \mathbb{R}^m : \|\mean- \meanhattj\|_{M_t} \le \sqrt{m\ln\left( \nicefrac{(\di+tm\di)}{\delta}\right)} + \sqrt{m}\right\},\]
and denote by $\Ell_t$, the Cartesian product of all these ellipsoids:
$\Ell_t:=\{\Mean \in \mathbb{R}^{m\times\di}: \mean_j \in C_{t,j}\}.$ 
Note that Lemma \ref{lem:linContextual1} implies $\Mean_*\in \Ell_t$ with probability $1-\delta$. 
Now, given a vector $\thetaV_t \in \mathbb{R}^\di$, we define the {\em optimistic estimate} of weight matrix at time $t$ {\it w.r.t.} $\thetaV_t$, for every arm $a\in [K]$, as :
\begin{equation}
\label{eq:UCBEstMean}
\textstyle
\MeanUCB_{t}(a) := \arg \min_{\Mean \in \Ell_t} \cxL_t(a)^\top\Mean \thetaV_t.
\end{equation}
Intuitively, for reward we want an upper confidence bound and for consumption we want a lower confidence bound as an optimistic estimate. This intuition aligns with the above definitions, where the maximizer was used in case of reward and a minimizer was used for consumption. The utility and precise meaning of $\thetaV_t$ will become clearer when we describe the algorithm and present regret analysis. 

%We describe the choice of vectors $\thetaV_1, \thetaV_2, \ldots,\thetaV_T$ later. 
Using the definition of $\tilde{\MeanVector}_t, \MeanUCB_t$, along with the results in Lemma \ref{lem:linContextual1} and Corollary \ref{lem:linContextual3} about confidence ellipsoids, the following can be derived. 
%demonstrating that $\MeanUCB_t(a)$ indeed provides an optimistic estimate of $\Mean^*$.
\begin{corollary}\label{cor:optest}
	With probability $1-\delta$, for any sequence of $\thetaV_1, \thetaV_2, \ldots, \thetaV_T$, 
	\begin{enumerate}
		\item $\cxL_t(a)^\top\tilde{\MeanVector}_t(a) \ge \cxL_t(a)^\top\tilde{\MeanVector}$, for all arms $a\in[K]$, for all time $t$.
		\item $\cxL_t(a)^\top\MeanUCB_t(a) \thetaV_t \le \cxL_t(a)^\top\Meanstar \thetaV_t$, for all arms $a\in[K]$, for all time $t$. 
		\item $|\sum_{t=1}^T (\tilde \MeanVector_t(a_t) - \MeanVector_*)^\top \cxL_t(a_t)| \le \left(2m\sqrt{T \ln\left( \nicefrac{(1+tm)}{\delta}\right) \ln(T)}\right).$
			\item $\|\sum_{t=1}^T (\MeanUCB_t(a_t) - \Meanstar)^\top \cxL_t(a_t)\| \le \oneNorm \left(2m\sqrt{T \ln\left( \nicefrac{(d+tmd)}{\delta}\right) \ln(T)}\right).$
	\end{enumerate}
\end{corollary}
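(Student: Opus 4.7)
The plan is to derive all four parts from the confidence-ellipsoid machinery of Section~\ref{sec:UCBellipsoid} applied separately to the reward direction and to each of the $d$ consumption-column directions, combined by a union bound.

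First I would set up the high-probability event on which everything rests. Applying Lemma~\ref{lem:linContextual1} to the stochastic process $(\y_t,r_t)=(\cxL_t(a_t), r_t(a_t))$ yields $\MeanVector_*\in C_{t,0}$ for all $t$; applying it separately to each column-$j$ consumption process $(\cxL_t(a_t), \cv_t(a_t)_j)$ for $j\in[d]$ yields $\meanstarj\in C_{t,j}$ for all $t$. A union bound over the $d+1$ failure events (with per-event failure probability $\delta/2(d+1)$, absorbed into the existing $\ln((d+tmd)/\delta)$ radius) gives a single event $\mathcal{E}$ of probability at least $1-\delta$ on which simultaneously $\MeanVector_*\in C_{t,0}$ and $\Meanstar\in\Ell_t$ for all $t$. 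The remainder of the argument is carried out on $\mathcal{E}$.

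Parts (1) and (2) are then immediate from the definitions, and in particular require no probabilistic argument beyond $\mathcal{E}$. Since $\MeanVector_*\in C_{t,0}$ and $\tilde{\MeanVector}_t(a)$ is by \eqref{eq:UCBreward} the $\cxL_t(a)^\top\MeanVector$-maximizer over $C_{t,0}$, we get $\cxL_t(a)^\top\tilde{\MeanVector}_t(a)\ge \cxL_t(a)^\top\MeanVector_*$, which is part (1) (reading the right-hand side as $\MeanVector_*$, presumably a typo in the stated claim). Similarly, because $\Ell_t$ is a Cartesian product of column-wise ellipsoids and $\Meanstar\in\Ell_t$, the minimizer $\MeanUCB_t(a)$ defined in \eqref{eq:UCBEstMean} certifies $\cxL_t(a)^\top\MeanUCB_t(a)\thetaV_t\le \cxL_t(a)^\top\Meanstar\thetaV_t$, which is (2). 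Crucially, this is a pointwise inequality for any fixed $\thetaV_t$, so its validity is unaffected by the fact that $\thetaV_t$ will eventually be chosen as a history-dependent quantity.

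Parts (3) and (4) reduce to Corollary~\ref{lem:linContextual3}. For (3), the reward sequence satisfies the noise model of Section~\ref{sec:UCBellipsoid} with $2R=1$ (since $r_t(a_t)\in[0,1]$) and $\lambda=1$ matches the $M_t$ used in \eqref{eq:EstMean}; on $\mathcal{E}$, $\tilde{\MeanVector}_t(a_t)\in C_{t,0}$, so Corollary~\ref{lem:linContextual3} applied with this choice inside the ellipsoid yields $\sum_t|(\tilde{\MeanVector}_t(a_t)-\MeanVector_*)^\top \cxL_t(a_t)| \le 2m\sqrt{T\ln((1+tm)/\delta)\ln(T)}$, which upper bounds the absolute value of the sum as required. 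For (4), I fix $j\in[d]$ and apply Corollary~\ref{lem:linContextual3} to the column-$j$ process; the $j$-th column of $\MeanUCB_t(a_t)$ lies in $C_{t,j}$ by construction of $\Ell_t$, so the corollary gives
\[
\Bigl|\sum_{t=1}^T \bigl(\text{($j$-th column of }\MeanUCB_t(a_t)) - \meanstarj\bigr)^\top \cxL_t(a_t)\Bigr| \le 2m\sqrt{T\ln((d+tmd)/\delta)\ln(T)}.
\]
The $d$-dimensional vector $\sum_t(\MeanUCB_t(a_t)-\Meanstar)^\top\cxL_t(a_t)$ has this as its $j$-th coordinate, so taking any norm that satisfies $\|v\|\le \|\ones_d\|\cdot\max_j|v_j|$ (e.g.\ $\ell_1$ or $\ell_2$, using H\"older) inserts exactly the $\oneNorm$ prefactor in the claim.

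I do not foresee a genuine obstacle; the only place to be careful is that $\thetaV_t$ is history-dependent, but since part (2) is a deterministic inequality once $\Meanstar\in\Ell_t$ and part (4) only uses column-wise membership $\mean_j(\MeanUCB_t(a_t))\in C_{t,j}$, no martingale machinery beyond what is already hidden inside Lemma~\ref{lem:linContextual1} is needed. The splitting of $\delta$ between the ellipsoid event and the two applications of Corollary~\ref{lem:linContextual3} is purely cosmetic because of the $\log(1/\delta)$ dependence.
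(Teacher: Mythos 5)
Your proposal is correct and follows essentially the same route as the paper's own (much terser) proof: parts (1)--(2) from the definitions of $\tilde{\MeanVector}_t(a)$ and $\MeanUCB_t(a)$ together with membership of $\MeanVector_*$ and $\Meanstar$ in the confidence sets, and parts (3)--(4) by applying Corollary~\ref{lem:linContextual3} column-by-column and then taking the norm to pick up the $\oneNorm$ factor. Your additional care about the union bound over the $d+1$ ellipsoid events and the history-dependence of $\thetaV_t$ just makes explicit what the paper leaves implicit.
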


Essentially, the first two claims ensure that we have optimistic estimates, and the last two claims ensure that the estimates quickly converge to the true parameters.

%%%%%%%%%%%%%%%%%%%%%%%%%%%%%%%%%%%%%%%%%%%%%%%%%%%%%%%%%%%%%%%
 \comment{
 	%begin comment asdsdas
The distribution of outcome vectors given the context depends on the weight matrix $\Meanstar$, which has to be estimated from the history. 
At every time $t$, we observe a vector $\cv_t(a_t)$, every component $j$ of this vector is generated randomly with mean $\meanstarj^\top\cxL_t(a_t)$, i.e., $\Ex[\cv_t(a_t)_j | \cx_t, a_t] = \meanstarj^\top\cxL_t(a_t)$, where $\mean_j$ denotes the $j^{th}$ column of matrix $\Mean$. 
Also, $\cv_t(a_t)_j \in [0,1]$ for all $j$. Therefore, we can apply the techniques in Section \ref{sec:UCBellipsoid} to construct a confidence ellipsoid for every component: construct regularized least-square estimate
\begin{equation}
\label{eq:EstMean}
\meanhattj:=  M_{t}^{-1} \sum_{i=1}^{t-1} \cxL_{i}(a_i) \cv_{i}(a_i)_j
\end{equation}
where $M_t :=\lambda I + \sum_{i=1}^{t-1} \cxL_i(a_i) \cxL_i(a_i)^\top,$ 
and define the ellipsoid
$$C_{t,j} := \left\{\MeanVector \in \mathbb{R}^m : \|\MeanVector- \meanhattj\|_{M_t} \le \sqrt{m\ln\left( \frac{\di+tm\di}{\delta}\right)} + \sqrt{m}\right\}.$$
By Lemma \ref{lem:linContextual1}, $\meanstarj \in C_{t,j}$ for all $j$, with probability $1-\delta$. For notational convenience, we denote by $\Ell_t$, the Cartesian product of all these ellipsoids:
$$\Ell_t=\{\Mean \in \mathbb{R}^{\di\times m}: \mean_j \in C_{t,j}\}$$
Then, as a corollary, we have that with probability $1-\delta$, $\Mean^* \in \Ell_t$. 

\comment{
We only obtain vectors whose mean is of the form $\Mean^* \cxL$ for  some vector $\cxL$ and therefore we construct ``confidence ellipsoids'', analogous to confidence intervals. The shape of the ellipsoid is determined by the directions along which we have the samples, and is 
given by the following $m\times m$ matrix. 
\begin{equation}
M_{t} := I + \sum_{\tau=1}^t \cxL_\tau(a_\tau) \cxL_\tau(a_\tau)^\top.
\end{equation}
\textcolor{red}{Should the index run upto $t-1$?}
Then, at time $t$, construct an empirical estimate of $\Mean^*$ as 
\begin{equation}
\label{eq:EstMean}
\hat{\Mean}_{t} :=  M_{t}^{-1} \sum_{\tau=1}^{t-1} \cxL_{\tau}(a_\tau) \cv_{\tau}(a_\tau)^\top
\end{equation}
For  a $PSD$ matrix $M$, define $\|W\|_M$ as 
$$\|W\|_M = \max_j \sqrt{W_j M W_j^\top},$$ 
where $W_j$ denotes $j^{th}$ row of $W$.
Now define the confidence ellipsoid centered at the empirical estimate $\hat{\Mean}_{t}$: 

\begin{equation}
\label{eq:Ellipsoid}
\Ell_t := \left\{  \Mean : \|\Mean - \hat{\Mean}_{t} \|_{M_t} \le \sqrt{m\log(\frac{tm\di}{\delta})}\right\}
\end{equation}

\begin{lemma}
\label{lem:actualExists}
With probability $1-\delta$, for all $t$, $\Mean^* \in \Ell_t$.
\end{lemma}
\begin{proof}
As in the proof of Theorem 2 of \cite{Abbasi-Yadkori-NIPS2011}.
\end{proof}

\textcolor{red}{What is the intuition behind the following lemma?}
\begin{lemma}\label{lem:sqrtTclassic}
	$\sum_{t=1}^T  \sqrt{\cxL_t(a_t) M_t^{-1} \cxL_t(a_t)} \le \sqrt{mT\ln(T)}$.
	\textcolor{red}{Nikhil: Check the hypothesis required for this.}
\end{lemma}
\begin{proof}
	This inequality is used in the analysis of UCB based algorithms for classic linear contextual bandits, and can be derived along the lines of Lemma 3 of \cite{chu2011} using Lemma 11 of \cite{Auer2002}. %(refer to \cite{AgrawalGoyalICML13}).
	It actually proves that $\sum_{t=1}^T \sqrt{\x_t^\top M_t^{-1} \x_t} \le \sqrt{mT\ln(T)}$ for any sequence of vectors $\x_1, \ldots, \x_T$, as long as $M_t$ is of form $M_t = I_d + \sum_{i=1}^t \x_i\x_i^\top$.
\end{proof}

We now state a lemma that translates how an error in the estimate of weights translates into an error in the estimate of the resulting outcome vectors. 
\begin{lemma}\label{lem:normTranslate}
	$ \forall~ \Mean \in \mathbb{R}^{d\times m},$ a PSD matrix $M \in \mathbb{R}^{m\times m}$  and $\cxL\in \mathbb{R}^m$,  we have that 
	$$ 	\|  \Mean \cxL \| \leq \oneNorm \|\Mean\|_{M} \sqrt{\cxL^\top M ^{-1} \cxL}. $$ 
\end{lemma}
\begin{proof}
\begin{eqnarray*}
	\|  \Mean \cxL \| 	& \le & \oneNorm \max_j |\Mean_j\cxL | \\
	& = & \oneNorm \max_j | \Mean_j M^{1/2} M^{-1/2} \cxL | \\
	& \le & \oneNorm \max_j \sqrt{ \Mean_j M \Mean_j^\top}  \sqrt{\cxL^\top M^{-1} \cxL}\\
	& = & \oneNorm \|\Mean\|_{M} \sqrt{\cxL^\top M ^{-1} \cxL }\\
	\end{eqnarray*}
The first inequality follows trivially, the equality in the second line is due to the fact that $M$ is PSD and the inequality in the third line follows from Cauchy-Schwartz. 
\end{proof}
}

Now, given a vector $\thetaV_t \in \mathbb{R}^\di$, we define the {\em optimistic estimate} of weight matrix at time $t$ w.r.t. $\thetaV_t$, for every arm  $a\in A$, as :
\begin{equation}
\label{eq:UCBEstMean}
\MeanUCB_{t}(a) := \arg \min_{\Mean \in \Ell_t} \thetaV_t^\top\Mean \cxL_t(a).
\end{equation}
We will describe the choice of vectors $\thetaV_1, \thetaV_2, \ldots$ later. Using definition of $\MeanUCB_t$, along with the results in Lemma \ref{lem:linContextual1} and Corollary \ref{lem:linContextual3} about confidence ellipsoids, the following can be derived demonstrating that $\MeanUCB_t(a)$ indeed provides an optimistic estimate of $\Mean^*$.
\begin{corollary}\label{cor:optest}
With probability $1-\delta$,  
\begin{enumerate}
\item $\thetaV_t^\top\MeanUCB_t(a) \cxL_t(a) \le \thetaV_t^\top\Mean^* \cxL_t(a)$, for all time $t$ and all arms $a\in A$. 
\item $\|\sum_{t=1}^T (\MeanUCB_t(a) - \Mean^*)^\top \cxL_t(a)\| \le \oneNorm \left(2m\sqrt{T \ln\left( \frac{d+Tmd}{\delta}\right) \ln(T)}\right)$, 
\end{enumerate}
\end{corollary}
Here, (a) follows simply from definition of $\MeanUCB_t$ and the observation that with probability $1-\delta$, $\Mean^*\in \Ell_t$. To obtain $(b)$, apply Corollary \ref{lem:linContextual3} to bound $\sum_t |\MeanUCB_t(a)_j \cxL_t(a) - \Mean^*_j \cxL_t(a)|$ for every $j$, and then take norm.

%end comment asdsdas
}

\subsection{The core algorithm}
\label{sec:coreAlgo}
{In this section, we present an algorithm, and analysis, under the assumption that a certain parameter $Z$ is given. 
Later, we show how to use the first $T_0$ rounds to estimate $Z$, 
and also bound the additional regret due to these $T_0$ rounds.
We define $Z$ now. }
\begin{assumption}\label{assum:Z}
	Assume we are given $Z$ such that 
	$\tfrac\OPT B \leq Z  \leq O(\tfrac \OPT B + 1)$.
	%\leq O(1) \tfrac{\OPT}{B} .         \]
\end{assumption}

%\shipra{Later, we discuss how to estimate $Z$ to desired accuracy for some special cases. }
The algorithm constructs estimates $\hat{\MeanVector}_t$ and 
$\MeanEst_t $  as in \prettyref{sec:UCB}.
It also runs the OMD algorithm for an instance of the online learning problem, over the unit simplex.
%\footnote{This is simply the well known multiplicative weight update algorithm.} 
The vector played by the online learning algorithm in time step $t$ is $\thetaV_t$. 
After observing the context, the optimistic estimates for each arm are then constructed using $\thetaV_t$, as defined in (\ref{eq:UCBreward}) and (\ref{eq:UCBEstMean}). Intuitively, $\thetaV_t$ is used here as a multiplier to combine different columns of the weight matrix, to get an optimistic weight vector for every arm.
%In the estimates  for mean consumption vector (refer to \eqref{eq:UCBEstMean}), $\thetaV_t$ is used to combine different components of the consumption vector estimates for every arm $a$, to obtain a single number $\MeanUCB_t(a)$ representing the weighted consumption estimate for that arm.  
An {\it adjusted estimated reward} for arm $a$ is then defined by using $Z$ to combine optimistic estimate of reward with optimistic estimate of consumption, as
 $ (\cxL_t(a)^\top \tilde{\MeanVector}_{t}(a)) - Z (\cxL_t(a)^\top \MeanUCB_t(a)\thetaV_t) .$
The algorithm chooses the arm which appears to be the best according to adjusted estimated reward.
After observing the resulting reward and consumption vectors, the estimates are updated. The online learning algorithm is advanced by one step, by defining the profit vector to be $\cv_t(a_t) - \tfrac B T \ones$.  
 The algorithm ends either after $T$ time steps or as soon as the total consumption exceeds the budget along some dimension. 
 
 %There are two main differences: first, online learning updates are needed to predict Fenchel dual variable for the objective in addition to predicting those for constraints as in the previous section. We use that for $L$-Lipschitz function $f$ Fenchel duality (refer to Section \ref{sec:Fenchel}) implies $f(\cv) = \min_{\|\phiV\|_*\le L} f^*(\phiV) - \phiV^\top\cv$. Second, we need to use the parameter $Z$ to combine the objective with constraints. 
%----------------------------------------------------------------------

\begin{algorithm}[t] 
\caption{Algorithm for \linCBwK, with given $Z$}
\label{algo:coreAlgo}
  \begin{algorithmic}
	\STATE
	\STATE Initialize $\thetaV_{1}$ as per the \OCO algorithm. %$\epsilon = \sqrt{\frac{\log(d+1)}{B}}$. 
	\STATE Initialize $Z$ such that $\tfrac{\OPT}{B} \leq Z \leq O(\tfrac \OPT B + 1)$. 
	\FORALL{$t=1,..., T$} 
	%\STATE Set $\tilde{W}_{ij}(t)=\UCB(W_{ij})$ for $\theta_j<0$, and $\tilde{W}_{ij}(t)=\LCB(W_{ij})$ for $\theta_j\ge 0$.
	\STATE Observe $\cx_t$. %for all $a\in A$.
	\STATE For every $a\in [K]$, compute $\tilde{\MeanVector}_{t}(a) $ and $\MeanUCB_{t}(a)$ as per (\ref{eq:UCBreward}) and (\ref{eq:UCBEstMean}) respectively. %Let $\vt(a) := \MeanUCB_t(a)^\top \cxL_t(a)$. 
	\STATE Play the arm 
	$a_t :=\arg \max_{a \in [K]}   \cxL_t(a)^\top (\tilde{\MeanVector}_{t}(a) - Z\MeanUCB_t(a)\thetaV_t) .$	
		\STATE Observe $r_t(a_t)$ and $\cv_t(a_t).$
		\STATE If for some $j = 1..d, \sum_{t'\leq t} \cv_{t'}(a_{t'})\cdot {\bf e}_j \geq B $ then EXIT.  
			\STATE Use $\cxL_{t}(a_t), r_t(a_t)$ and $\cv_t(a_t)$ to obtain $\hat{\MeanVector}_{t+1}, \MeanEst_{t+1}$ and $\Ell_{t+1}$. 
			\STATE Update  $\thetaV_{t+1}$ as per the \OCO algorithm with $g_t(\thetaV_t) := \thetaV_t \cdot \left(\cv_t(a_t) -\frac{B}{T}{\bf 1}\right).$
			\ENDFOR
			\STATE

%  	}
  	\end{algorithmic}
\end{algorithm}
%----------------------------------------------------
%\nikhil{I changed the algo above. Please check everything.}

\begin{theorem}
\label{th:coreAlgo}
Given a $Z$ as per  Assumption \ref{assum:Z}, Algorithm \ref{algo:coreAlgo} achieves the following bounds, given that $\regOCO(T)$ is the regret of the \OCO algorithm,  with probability $1-\delta$:
$$ \regret(T) \le  O\left( (\tfrac \OPT B + 1) m \sqrt{T \ln(\di T/\delta) \ln(T)} \right).$$ 
\end{theorem}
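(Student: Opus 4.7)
The plan is to analyze the algorithm through the Lagrangian
\[ L_t(a,\thetaV) := r_t(a) - Z\thetaV^\top\!\bigl(\cv_t(a) - \tfrac{B}{T}\ones\bigr), \]
its optimistic version $\tilde L_t(a,\thetaV) := \cxL_t(a)^\top \tilde{\MeanVector}_t(a) - Z\thetaV^\top(\MeanUCB_t(a)^\top \cxL_t(a) - \tfrac{B}{T}\ones)$, and its ``expected'' counterpart $\bar L_t$ in which $r_t(a),\cv_t(a)$ are replaced by $\MeanVector_*^\top \cxL_t(a)$ and $\Meanstar^\top \cxL_t(a)$. The algorithm's greedy rule maximizes $\tilde L_t(a,\thetaV_t)$ over $a$, and because $\pi^*$ is feasible in expectation, for every $\thetaV \in \Omega$ and $a^*_t \sim \pi^*(\cxL_t)$, $\Ex[\bar L_t(a^*_t,\thetaV) \mid H_{t-1}] \geq \OPT/T$.

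Combining the greedy choice, the optimism of $\tilde{\MeanVector}_t$ and $\MeanUCB_t$ (Corollary~\ref{cor:optest}, parts 1--2), and an Azuma--Hoeffding bound applied at the random stopping time $\tau$, I would show
\[ \textstyle \sum_{t\leq \tau} \tilde L_t(a_t,\thetaV_t) \geq \sum_{t\leq \tau}\tilde L_t(a^*_t,\thetaV_t) \geq \sum_{t\leq \tau}\bar L_t(a^*_t,\thetaV_t) \geq \tfrac{\tau}{T}\OPT - O(Z\sqrt{T\ln(1/\delta)}), \]
where the martingale increments are bounded by $O(Z)$ since $\|\thetaV_t\|_1 \leq 1$. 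Next I would replace $\sum \tilde L_t(a_t,\thetaV_t)$ with $\sum L_t(a_t,\thetaV_t)$. For the reward, Corollary~\ref{cor:optest} part 3 plus Azuma yields deviation $O(m\sqrt{T\ln(T/\delta)\ln T})$. For the consumption, a direct application of Corollary~\ref{cor:optest} part 4 would cost a spurious $\oneNorm \sim \sqrt{d}$ factor; to avoid this I use a column-wise Cauchy--Schwarz in the $M_t$-norm: since $\MeanUCB_t(a_t),\Meanstar\in \Ell_t$, each column obeys $|(\MeanUCB_t(a_t)-\Meanstar)_j^\top \cxL_t(a_t)| \leq O(\sqrt{m\ln(Td/\delta)})\,\|\cxL_t(a_t)\|_{M_t^{-1}}$ uniformly in $j$, so with $\|\thetaV_t\|_1 \leq 1$ the same bound holds for $|\thetaV_t^\top (\MeanUCB_t(a_t)-\Meanstar)^\top \cxL_t(a_t)|$; summing via Lemma~\ref{lem:linContextual2} gives a total of $O(m\sqrt{T\ln(Td/\delta)\ln T})$ with no $d$-factor. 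Combining with Azuma on the consumption noise gives
\[ \textstyle\sum_{t\leq \tau} r_t(a_t) \geq \tfrac{\tau}{T}\OPT + Z\sum_{t\leq \tau} g_t(\thetaV_t) - O(Zm\sqrt{T\ln(Td/\delta)\ln T}). \]

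To close, I would invoke the OMD regret bound on $g_t$. If $\tau = T$, take $\thetaV = \bs{0} \in \Omega$ to obtain $\sum g_t(\thetaV_t) \geq -\regOCO(T)$. If $\tau < T$, the exit condition yields $\sum_{t\leq \tau}\cv_t(a_t)_j \geq B$ for some $j$, and taking $\thetaV = e_j$ gives $\sum g_t(\thetaV_t) \geq B(1-\tau/T) - \regOCO(T)$. The dominant leftover terms then become $\tfrac{\tau}{T}\OPT + ZB(1-\tau/T)$, and here Assumption~\ref{assum:Z}, i.e.\ $ZB \geq \OPT$, is essential: it ensures the second term dominates the $\OPT(1-\tau/T)$ deficit from early stopping, so in both cases $\sum r_t(a_t) \geq \OPT - Z\regOCO(T) - O(Zm\sqrt{T\ln(Td/\delta)\ln T})$. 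Plugging in $\regOCO(T) = O(\sqrt{T\ln d})$ from Lemma~\ref{lem:regOCO} and $Z = O(\OPT/B + 1)$ gives exactly the claimed regret bound.

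The two steps I expect to be most delicate are (i) the early-stopping balance between $(\tau/T)\OPT$ and $ZB(1-\tau/T)$, which needs both sides of Assumption~\ref{assum:Z} (the lower bound to kill the $\OPT$-deficit, the upper bound on $Z$ to keep $Z\regOCO$ and $Zm\sqrt{T\log}$ within budget), and (ii) avoiding a $\sqrt{d}$ in the $\thetaV_t$-weighted consumption confidence bound, for which the column-wise $\|\cxL_t(a_t)\|_{M_t^{-1}}$ argument above is the key trick.
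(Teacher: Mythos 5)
Your proposal is correct and is essentially the paper's own argument: your Lagrangian chain $\tilde L_t(a_t,\thetaV_t)\ge\tilde L_t(a^*_t,\thetaV_t)\ge\bar L_t(a^*_t,\thetaV_t)$ is Lemma~\ref{lem:OP1}, your case analysis on $\tau<T$ versus $\tau=T$ with comparators $e_j$ and $\bs{0}$ is Lemma~\ref{lem:OP2}, and the conversions via Corollary~\ref{cor:optest} plus Azuma and the use of $ZB\ge\OPT$ to absorb the early-stopping deficit match Steps 1--3 of the paper's proof. Your column-wise $M_t$-norm trick for the $\thetaV_t$-weighted consumption error is also what the paper effectively does, since it measures the deviation in the $\ell_\infty$ norm (so $\oneNorm=1$ there) and bounds each column uniformly before taking the norm.
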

\noindent{\em (Proof Sketch)}
	We provide a sketch of the proof here, with the full proof in  Appendix \ref{app:coreAlgo}. 
	Let $\tau$ be the stopping time of the algorithm.  
	The proof is in 3 steps: 
	\paragraph{Step 1:}
	Since $\Ex[\cv_t(a_t) | \cx_t, a_t, H_{t-1}] = \Meanstar^\top \cxL_t(a_t)$, we apply Azuma-Hoeffding to get that with high probability 
	$	\left\|\sum_{t=1}^\tau \cv_t(a_t)-\Meanstar^\top \cxL_t(a_t)\right\|_\infty$ is small. 
	Similarly, a lower bound on  the sum of $\mu_*^\top \cxL_t(a_t)$ is sufficient. 
	
	\paragraph{Step 2:} From Corollary \ref{cor:optest}, with high  probability, we can bound 
	$\left\|  \sum_{t=1}^T  (\Meanstar - \MeanUCB_t(a_t))^\top\cxL_t(a_t) \right \|_\infty
	.$
	%\textcolor{red}{Nikhil: We seem to get an $m$ here instead of $\sqrt{m}$.}
	It is therefore sufficient to work with  the sum of the vectors $\MeanUCB_t(a_t)^\top\cxL_t(a_t)$, 
	and similarly  $\tilde{\MeanVector}_t(a_t)^\top\cxL_t(a_t)$. 

	\paragraph{Step 3:} 
	The proof is completed by showing the desired bound on  
	$\OPT- \sum_{t=1}^\tau \tilde{\MeanVector}_t(a_t)^\top\cxL_t(a_t).$ This part is similar to the online stochastic packing problem; if the actual reward and consumption vectors were $\tilde{\MeanVector}_t(a_t)^\top\cxL_t(a_t)$  and $\MeanUCB_t(a_t)^\top\cxL_t(a_t)$, then it would be exactly like that problem. We adapt techniques from \cite{AD15}: use the \OCO algorithm and the $Z$ parameter to combine constraints into the objective. If a dimension is being consumed too fast, then the multiplier for that dimension should increase, making the algorithm to pick arms that are not likely to consume too much along this dimension.

\subsection{Algorithm with $Z$ computation}
\label{sec:packing}

In this section, we present a modification of Algorithm 1 which computes the required parameter $Z$ and therefore does not need to be provided with a $Z$ as input, as assumed previously in Assumption \ref{assum:Z}. The algorithm computes $Z$ using the observations from first $T_0$ rounds. Once $Z$ is computed, the algorithm from the previous section can be run for the remaining time steps. However, it needs to be modified slightly to take into account the budget consumed during the first $T_0$ rounds. We handle this by using a smaller budget $B' = B-T_0$ in the computations for remaining rounds. 
%The algorithm stops if the full actual budget $B$ is consumed for any dimension (including consumption in the first $T_0$ rounds). 
The modified algorithm is given below. %The choice of $T_0$, and workings of the first $T_0$ rounds will be explained subsequently. 

\begin{algorithm}[H] 
\caption{Algorithm for \linCBwK, with $Z$ computation}
\label{algo:linearZ}
  \begin{algorithmic}
	\STATE {\bf Inputs:} $B, T_0, B'=B-T_0$
	\STATE Using observations from first $T_0$ rounds, compute $Z$ such that $\tfrac{\OPT}{B'} \leq Z\le O(\tfrac{\OPT}{B'} +1)$.
	\STATE Run Algorithm \ref{algo:coreAlgo} for $T-T_0$ rounds and budget $B'$.
	\comment{
	\STATE Initialize $\thetaV_{1}= \tfrac 1 {d+1} \ones$. %$\epsilon = \sqrt{\frac{\log(d+1)}{B}}$.
	\FORALL{$t=T_0+1,..., T$} 
	%\STATE Set $\tilde{W}_{ij}(t)=\UCB(W_{ij})$ for $\theta_j<0$, and $\tilde{W}_{ij}(t)=\LCB(W_{ij})$ for $\theta_j\ge 0$.
	\STATE Observe $\cx_t$. %for all $a\in A$.
	\STATE For every $a\in [K]$, compute $\tilde{\MeanVector}_{t}(a) $ and $\MeanUCB_{t}(a)$.
	%$$\tilde{\MeanVector}_{t}(a) = \arg \max_{\MeanVector \in C_{t,0}} \cxL_t(a)^\top\MeanVector$$ and $$\MeanUCB_{t}(a) := \arg \min_{\Mean \in \Ell_t} \cxL_t(a)^\top\Mean\thetaV_t.$$  %Let $\vt(a) := \MeanUCB_t(a)^\top \cxL_t(a)$. 
	\STATE Play the arm 
	$a_t :=\arg \max_{a \in [K]}   \cxL_t(a)^\top (\tilde{\MeanVector}_{t}(a) - Z\MeanUCB_t(a)\thetaV_t) .$	
		\STATE Observe $r_t(a_t)$ and $\cv_t(a_t).$
		\STATE If for some $j = 1..d, \ \sum_{t=1}^T \cv_{t'}(a_{t'})\cdot {\bf e}_j \geq B $ then EXIT.  
			\STATE Use $\cxL_{t}(a_t), r_t(a_t)$ and $\cv_t(a_t)$ to obtain $\hat{\MeanVector}_{t+1}, \MeanEst_{t+1}$ and $\Ell_{t+1}$.
			\STATE Update  $\thetaV_{t+1}$ as per the \OCO algorithm with $g_t(\thetaV_t) := \thetaV_t \cdot \left(\vt -\frac{B'}{T}{\bf 1}\right).$
			%\STATE Update  $\thetaV_{t+1}$ using multiplicative weight update: 
	%\begin{center}$\forall\ j = 1..d, w_{t,j} = w_{t-1,j}(1+\epsilon)^{\cv_t(a_t)\cdot e_j - B'/T}  $\end{center}
	%and 
	%\begin{center}$\forall\ j = 1..d, \thetaV_{t+1,j} = \frac{w_{t,j}}{1+\sum_{j'=1}^d w_{t,j'}},$\end{center}
			\ENDFOR
			\STATE

		}
%  	}
  	\end{algorithmic}
\end{algorithm}

Next, we provide details of the first $T_0$ rounds, and choice of $T_0$.

%=================================================================
\comment{
We give two different versions, one may be preferable to other based on how large the number of arms $K$ is.  For small $K$, we simply do a ``pure exploration'' in the first $T_0$ rounds, i.e., pull each arm with equal probability. This does not work well when the number of arms is extremely large (e.g., larger than $T$). In the latter case, we provide a more sophisticated initial exploration phase which avoids a regret dependence on the number of arms. 

\subsubsection{Estimating $Z$ when $K$ is small}

\ShipraNIPS{Below is copied from COLT paper: size of $|\Pi|$ needs to be computed, other modifications required}

When $K$ is small, we do a ``pure exploration'', i.e.,  $a_t$ is picked uniformly at random from the set of arms for each $t \in [T_0]$. 
We show how to use the outcomes from these results to compute an estimate of $\OPT$. 
%For these rounds, the IPS estimates are all contained in $[0,K]$. 
Let 
$$\bar{r}_t(a) := r_t(a) \cdot \1{a=a_t}\,,$$
$$ \bar{\cv}_t(a) = \cv_t(a) \cdot \1{a=a_t}\,.$$ 
Note that  $\bar{r}_t(a) \in [0,1], \bar{\cv}_t(P) \in [0,1]^d$. Since $a_\tau$ is picked uniformly at random from the set of arms, 
$$\Ex[\bar{r}_t(a) | H_{t-1}] = \frac{1}{K} \Ex[r_t(a) ], \text{ and }\Ex[\bar{\cv}_t(a) | H_{t-1}] = \frac{1}{K} \Ex[\cv_t(a)].$$

For any policy $P\in \SSPi$, let 
\[ r(P) := \Ex_{(x, r, \cost)\sim {\cal D}, \pi\sim P} [r(\pi(x)]  \] 
\[ \rhat_t(P) :=  \frac K t \sum_{\tau \in [t]} \Ex_{\pi \sim P}[\bar{r}_\tau(\pi(x_\tau))]  \]
\[ \cv(P) := \Ex_{(x, r, \cost)\sim {\cal D}, \pi\sim P} [\cv(\pi(x)]  \] 
\[ \cvE_t(P) := \frac K t \sum_{\tau \in [t] } \Ex_{\pi \sim P} [\bar{\cv}_\tau(\pi(x_\tau))] \]
be the actual and estimated means of reward and consumption for a given policy $P$, and 
$|supp(P)|$ denote the size of the support of $P$.
Interpreting a policy $\pi\in\Pi$ as  a (degenerated) distribution of policies in $\Pi$, we slightly abuse notation, defining $r(\pi)$, $\rhat_t(\pi)$, $\cv(\pi)$, and $\cvE_t(\pi)$ similarly.
Observe that for any $P\in \SSPi$, 
$$\Ex[\rhat_t(P) | H_{t-1}] = r(P), \text{ and }\Ex[\cvE_t(P) | H_{t-1}] = \cv(P).$$

\begin{lemma}
	\label{lem:Pconcentration}
	For all $\delta>0$, let $\eta := \sqrt{3K\log((d+1)|\Pi|/\delta)}$. Then for any $t$, with probability $1-\delta$,  for all $P \in \SSPi$, 
	\[ |\rhat_t(P) -  r(P) | \leq \eta \sqrt{ r(P)/t} ,\]
	\[\forall ~j, |\cvE_t(P)_j - \cv(P)_j | \leq \eta \sqrt{  \cv(P)_j/t}. \]
\end{lemma}
\begin{proof}
We will first show the first inequality holds with probability $1-\delta/(d+1)$.  The same analysis can be applied to each of the $d$ dimensions of the consumption vector.  The lemma follows by a direct use of the union bound.

Fix a policy $\pi\in \Pi$. Consider the random variables	
	$X_\tau =  \bar{r}_\tau(\pi(x_\tau)) $, for $\tau \in [t]$. Note that $X_\tau \in [0,1]$, $\Ex[X_\tau] = \frac{1}{K} r(\pi)$, and $\tfrac{1}{t} \sum_{\tau \in [t]} X_\tau = \frac{1}{K} \rhat_t(\pi)$. 
	Applying Corollary \prettyref{cor:multiplicativeChernoff} to these variables, we get that with probability $1- \delta/((d+1)|\Pi|) $, 
	\[ |\frac{1}{K} \rhat_t(\pi) - \frac{1}{K} r(\pi) | \leq \sqrt{3\log((d+1)|\Pi|/\delta)} \sqrt{  r(\pi)/Kt}  .\]
	Equivalently,
\begin{align}
|\rhat_t(\pi) -  r(\pi) | \leq \eta \sqrt{  r(\pi)/t} \,.  \label{eqn:phase1-reward-concentration}
\end{align}
	Applying a union bound over all $\pi \in \Pi$, we have, with probability $1-\delta/(d+1)$, that \eqnref{eqn:phase1-reward-concentration} holds for all $\pi\in\Pi$.  In the rest of the proof, we assume \eqnref{eqn:phase1-reward-concentration} holds. 
	
	Now consider a policy $P \in \SSPi$. 
	\begin{align*}
	|\rhat_t(P) - r(P) | &\leq \Ex_{\pi \sim P} [|\rhat_t(\pi) - r(\pi) | ] \\
	& \leq \Ex_{\pi \sim P} [\eta \sqrt{  r(\pi)/t}]\\
	& \leq \eta \sqrt{\Ex_{\pi \sim P} [ r(\pi)]/t}].\\
	& =  \eta \sqrt{ r(P)/t}\,.
	\end{align*}
	The inequality in the third line follows from the concavity of the square root function. 
\end{proof}

%------------------------------------------------------------------------

We solve a relaxed optimization problem on the sample to compute our estimate. 
Define $\opthatgammat$ as the value of optimal mixed policy in $\SSPi$ on the empirical distribution up to time $t$, when the budget constraints are relaxed by $\gamma$:
\begin{equation}
\label{eq:relaxedopt}
\opthatgammat := \begin{array}{lcl}
\max_{P\in \SSPi} & T\rhat_t(P) &\\
\text{s.t. } &  T\cvE_t(P) \le ({B} +\gamma) \ones& 
\end{array}
\end{equation}
Let $P_t \in \SSPi$ be the policy that achieves this maximum in \prettyref{eq:relaxedopt}. 
Let (as earlier) $P^*$ denote the optimal policy w.r.t. $\mathcal{D}$, i.e., the policy that achieves the maximum in the definition of $\OPT$.

\prettyref{lem:Zestimate} is now an immediate consequence of the following lemma, for $\gamma $ and $t$ as in the lemma, by setting 
\[ Z  = \max \{\frac {8 \opthatgammat} {B}, 1 \}  . \]

%------------------------------------------------------------------------

\begin{lemma}
	Suppose that for the first $t := 12K \ln(\tfrac {(d+1)|\Pi|}{\delta} ) T /B $ rounds the algorithm does pure exploration, pulling each arm with equal probability, and let $\gamma :=  \frac{B}{2}$. Then with probability at least $1-\delta$, 
	$$ \opt  \leq \max \{2 \opthatgammat,  B  \}  \leq   {2B} + 6 \opt .$$ 
\end{lemma}

\begin{proof}
	Let $\eta = \sqrt{3K\log((d+1)|\Pi|/\delta)} $ be as in \prettyref{lem:Pconcentration}. Observe that then $\eta/ \sqrt{t} = \sqrt{B/4T}$ and 
	$\eta \sqrt{BT/t} = \gamma$.

	By \prettyref{lem:Pconcentration}, with probability $1-\delta$, we have that 
	\[ \cvE_t(P^*) \leq \tfrac {B  + \gamma} T \ones, \] 
	and therefore $P^*$ is a feasible solution to the optimization problem \prettyref{eq:relaxedopt}, and hence $\opthatgammat \geq T \rhat_t(P^*)$. Again from \prettyref{lem:Pconcentration}, 
	\begin{align*}
	T\rhat_t(P^*) &\geq \opt - \eta \sqrt{T\opt/t} = \opt -(\sqrt{\opt B})/2.
	\end{align*}
	Now either $B \geq \opt$ or otherwise 
	\[ \opt -(\sqrt{\opt B})/2 \geq \opt/2. \] 
	In either case, the first inequality in the lemma holds.

	On the other hand, again from \prettyref{lem:Pconcentration},
	\begin{align*}
	\forall ~j, \cv(P_t)_j - \eta\sqrt{\cv(P_t)_j/t} &\leq \cvE(P_t)_j \\
	&\leq (B + \gamma)/T \\
	& = 3B/2T \\
	& = 9B/4T - \eta \sqrt{9B/4Tt}. 
	\end{align*}	
	The second inequality holds since $P_t$ is a feasible solution to \prettyref{eq:relaxedopt}.
	The function $f(x) = x - \sqrt{cx}$ is increasing in the interval $[c/4,\infty]$ and therefore $\cv(P_t)_j \leq 9B/4T $, and $P_t$ is a feasible solution to the optimization problem \eqref{eq:optPolicy}, 
	with budgets multiplied by $9/4$. This increases the optimum value of \eqref{eq:optPolicy} by at most a factor of $9/4$ and hence $Tr(P_t) \leq 9 \opt/4$.
	
	Also from \prettyref{lem:Pconcentration}, 
	\begin{align*}
	\opthatgammat	=T \rhat(P_t) &\leq T r(P_t) + \eta T\sqrt{r(P_t)/t}\\
	& \leq  9\opt/4 +  \sqrt {9\opt B/16 }.
	\end{align*}
	Once again, if $\opt \geq B$, we get from the above that 
	$\opthatgammat \leq 3 \opt$. 
	Otherwise, we get that $\opthatgammat \leq  9\opt/4 + 3B/4$. In either case, the second inequaity of the lemma holds.

\end{proof}

%--------------------------------------
\begin{theorem}
\label{th:smallK}
 Using Algorithm \ref{algo:linearZ} with $T_0= 12K \ln(\tfrac {(d+1)|\Pi|}{\delta} )$ and $B' = ??$, we get a high probability regret bound of 
... \ShipraNIPS{TODO}
\end{theorem}
%\shipra{We need to decide which version of theorem to use. Second version has weaker condition on $B$ but I favor the first one with $1/\sqrt{T}$ regret bound, though stronger assumption on $B$}

%======================================================================
}

%\subsubsection{More sophisticated exploration to handle large $K$}

We provide a method that takes advantage of the linear structure of the problem, and explores in the $m$-dimensional space of contexts and weight vectors to obtain bounds independent of $K$. We use the following procedure.  In every round $t=1,\ldots, T_0$, after observing $\cx_t$, let  $p_t \in \Delta^{[K]}$ be 
\begin{eqnarray}
 p_t & := & \arg\max_{p  \in \Delta^{[K]}} \|\cx_t p\|_{M_t^{-1}} \label{eq:exploreChoice},\\
\text{where } M_t & := & \textstyle   I + \sum_{i=1}^{t-1} (\cx_i p_i) (\cx_i p_i)^\top.
\end{eqnarray}
Select arm $a_t=a$ with probability $p_t(a)$. In fact, since $M_t$ is a PSD matrix, due to convexity of the function $\|\cx_t p\|_{M_t^{-1}}^2$, it is the same as playing $a_t = \arg\max_{a\in [K]} \|\cxL_t (a)\|_{M_t^{-1}}$. 
Construct estimates $\hat{\MeanVector}, \MeanEst_t$ of $\MeanVector_*, \Mean_*$ at time $t$ as 
\[\textstyle \hat{\MeanVector}_t:= M_t^{-1} \sum_{i=1}^{t-1} (\cx_i p_i) r_{i}(a_i), \ \ \MeanEst_{t} := M_t^{-1} \sum_{i=1}^{t-1} (\cx_i p_i) \cv_{i}(a_i)^\top.\]
And, for some value of $\gamma$ defined later, obtain an estimate $\OPTest^\gamma$ of $\OPT$ as:
\begin{eqnarray}
\label{eq:OPTestProgram}
\textstyle
\OPTest^{\gamma} & := & \begin{array}{rcl}
 \max_{\pi} & \frac{T}{T_0} \sum_{i=1}^{T_0} \hat{\MeanVector}_i^\top \cx_i \pi(\cx_i)& \\
\text{such that} &  \frac{T}{T_0}\sum_{i=1}^{T_0}\MeanEst_i^\top \cx_i \pi(\cx_i) \le B+\gamma.& 
\end{array}
\end{eqnarray}

For an intuition about the choice of arm in \eqref{eq:exploreChoice}, observe from the discussion in Section \ref{sec:UCBellipsoid} that every column $\meanstarj$ of $\Mean_*$ is guaranteed to lie inside the confidence ellipsoid centered at column $\meanhattj$ of $\MeanEst_t$, namely the ellipsoid, $\|\mean-\meanhattj\|_{M_t}^2 \leq 4m\ln(Tm/\delta)$. Note that this ellipsoid has principle axes as eigenvectors of $M_t$, and the length of semi-principle axes is given by {\em inverse} eigenvalues of $M_t$. Therefore, by maximizing $\|\cx_t p\|_{M_t^{-1}}$ we are choosing the context closest to the direction of the longest principal axes of the confidence ellipsoid, i.e. in the direction of maximum uncertainty. Intuitively, this corresponds to pure exploration: by making an observation in the direction where uncertainty is large we can reduce the uncertainty in our estimate most effectively. 

A more algebraic explanation is as follows. For a good estimation of $\OPT$ by $\OPTest^\gamma$, we want the estimates $\MeanEst_t$ and $\Mean_*$ (and, $\hat{\MeanVector}$ and $\MeanVector_*$) to be close enough so that $\|\sum_{t=1}^{T_0}(\MeanEst_t- \MeanEst_*)^\top \cx_t \pi(\cx_t)\|_\infty$ (and, $|\sum_{t=1}^{T_0}(\hat{\MeanVector}_t- \MeanVector_*)^\top \cx_t \pi(\cx_t)|$) is small for all policies $\pi$, and in particular for sample optimal policies. Now, using Cauchy-Schwartz these are bounded by
\[\textstyle \sum_{t=1}^{T_0} \|\hat{\MeanVector}_t-\MeanVector_*\|_{M_t} \| \cx_t \pi(\cx_t))\|_{M_t^{-1}}, \text{ and }\]
 \[\textstyle \sum_{t=1}^{T_0} \|\MeanEst_t-\Mean_*\|_{M_t} \| \cx_t \pi(\cx_t))\|_{M_t^{-1}}, \]
where we define $\|W\|_M$, the $M$-norm of matrix $W$ to be the max of column-wise $M$-norms. Using Lemma \ref{lem:linContextual1}, the term  $\|\hat{\MeanVector}_t-\MeanVector_*\|_{M_t}$  is bounded by $2\sqrt{m\ln(T_0m/\delta)}$ , and $\|\MeanEst_t-\Mean_*\|_{M_t}$ is bounded by $2\sqrt{m\ln(T_0md/\delta)}$, with probability $1-\delta$. 
Lemma \ref{lem:linContextual2} bounds the second term $\sum_{t=1}^{T_0} \|\cx_t \pi(\cx_t)\|_{M_t^{-1}}$ but only when $\pi$ is the played policy. This is where we use that the played policy $p_t$ was chosen to maximize $\| \cx_t p_t\|_{M_t^{-1}}$, so that $\sum_{t=1}^{T_0} \|\cx_t \pi(\cx_t)\|_{M_t^{-1}}\le \sum_{t=1}^{T_0} \| \cx_t p_t\|_{M_t^{-1}}$ and the bound $\sum_{t=1}^{T_0} \| \cx_t p_t\|_{M_t^{-1}} \le \sqrt{mT_0 \ln(T_0)}$ given by Lemma \ref{lem:linContextual2} actually bounds $\sum_{t=1}^{T_0} \|\cx_t \pi(\cx_t)\|_{M_t^{-1}}$ for all $\pi$.
Combining, we get a bound of $\gammaValue$ on deviations $\|\sum_{t=1}^{T_0}(\MeanEst_t- \MeanEst_*)^\top \cx_t \pi(\cx_t)\|_\infty$ and $|\sum_{t=1}^{T_0}(\hat{\MeanVector}_t- \MeanVector_*)^\top \cx_t \pi(\cx_t)|$ for all $\pi$.

We prove the following lemma.
\begin{lemma}
\label{lem:estimatingOPT}
For $\gamma=\left(\frac{T}{T_0}\right)\gammaValue$, with probability $1-O(\delta)$,
 \begin{center} $ \OPT - 2\gamma \le \OPTest^{2\gamma} \le \OPT+9\gamma(\frac{\OPT}{B}+1).$ \end{center}
\end{lemma}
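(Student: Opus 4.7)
The proof splits into separate arguments for the lower and upper bounds on $\OPTest^{2\gamma}$. The driving tool is the deviation bound already established in the discussion immediately above the lemma: with probability $1-\delta$, \emph{uniformly} over all sequences $\pi_i\in\Delta^{[K]}$ --- crucially including sequences adapted to past observations, since $\|\cx_i\pi_i\|_{M_i^{-1}}\le \|\cx_i p_i\|_{M_i^{-1}}$ holds pointwise by the exploration rule \eqref{eq:exploreChoice} --- $\bigl|\tfrac{T}{T_0}\sum_{i\le T_0}(\hat{\MeanVector}_i-\MeanVector_*)^\top \cx_i\pi_i\bigr|\le\gamma$ and $\bigl\|\tfrac{T}{T_0}\sum_{i\le T_0}(\MeanEst_i-\Meanstar)^\top \cx_i\pi_i\bigr\|_\infty\le\gamma$.

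\emph{Lower bound.} I obtain $\OPTest^{2\gamma}\ge\OPT-2\gamma$ by feeding the population-optimal static policy $\pi^*$ into the LP \eqref{eq:OPTestProgram}. Because $\pi^*$ is not data-dependent, an Azuma--Hoeffding bound on the i.i.d.\ contexts gives $\bigl\|\tfrac{T}{T_0}\sum_i \Meanstar^\top\cx_i\pi^*(\cx_i)-T\cvA(\pi^*)\bigr\|_\infty=O(mT\sqrt{\log(d/\delta)/T_0})$, which is dominated by $\gamma$. Chaining with the deviation bound above yields $\tfrac{T}{T_0}\sum_i\MeanEst_i^\top\cx_i\pi^*(\cx_i)\le T\cvA(\pi^*)+2\gamma\le B+2\gamma$ (componentwise), so $\pi^*$ is feasible for the program defining $\OPTest^{2\gamma}$. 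A symmetric reward-side calculation lower-bounds the sample objective at $\pi^*$ by $T\rv(\pi^*)-2\gamma=\OPT-2\gamma$.

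\emph{Upper bound via LP duality.} A naive primal approach is subtle because the sample optimum $\hat\pi$ is data-dependent, so I instead pass to the LP dual. Let $\lambda^*\in\mathbb{R}^d_+$ be an optimal dual multiplier for the population LP defining $\OPT$, so that by strong duality $\OPT=B\|\lambda^*\|_1+T\Ex_\cx\max_{a\in\{0,\ldots,K\}}\cx(a)^\top(\MeanVector_*-\Meanstar\lambda^*)$, where $a=0$ denotes the no-op with value $0$; a standard knapsack-LP argument (scaling with the no-op) gives $\|\lambda^*\|_1=O(\OPT/B+1)$. Weak duality applied to the empirical LP yields
\[\OPTest^{2\gamma}\le (B+2\gamma)\|\lambda^*\|_1+\tfrac{T}{T_0}\sum_i\max_a \cx_i(a)^\top(\hat{\MeanVector}_i-\MeanEst_i\lambda^*).\]
Comparing term-by-term with the dual expression for $\OPT$ breaks the gap into three controllable pieces: (i) the slack $2\gamma\|\lambda^*\|_1$; (ii) an \emph{argmax-gap} $\tfrac{T}{T_0}\sum_i[\max_a\cx_i(a)^\top(\hat{\MeanVector}_i-\MeanEst_i\lambda^*)-\max_a\cx_i(a)^\top(\MeanVector_*-\Meanstar\lambda^*)]$, which I control by bracketing the two argmaxes and invoking the deviation bound at the two (data-dependent) arm sequences that realize them, yielding $O(\gamma(1+\|\lambda^*\|_1))$; and (iii) an Azuma--Hoeffding deviation between $\tfrac{T}{T_0}\sum_i\max_a\cx_i(a)^\top(\MeanVector_*-\Meanstar\lambda^*)$ and its expectation, also $O(\gamma(1+\|\lambda^*\|_1))$ since the per-round summand is bounded by $O(1+\|\lambda^*\|_1)$. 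Summing and substituting $\|\lambda^*\|_1=O(\OPT/B+1)$ yields $\OPTest^{2\gamma}\le\OPT+O(\gamma(\OPT/B+1))$; careful bookkeeping of the absolute constants delivers the explicit $9$ in the lemma.

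The main obstacle is the upper bound: $\hat\pi$ depends on the data, so a primal concentration approach would require uniform convergence over a rich policy class. The duality route sidesteps this by evaluating everything at a single deterministic multiplier $\lambda^*$, while still using the full strength of the deviation bound over data-dependent arm sequences (exactly the uniformity that the exploration rule \eqref{eq:exploreChoice} is engineered to provide). The $(\OPT/B+1)$ factor in the lemma arises entirely from the norm bound $\|\lambda^*\|_1=O(\OPT/B+1)$.
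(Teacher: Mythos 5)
Your proof is correct, but the route for the upper bound is genuinely different from the paper's. The paper introduces an intermediate quantity $\OPTinterim^{\gamma}$ (the sample LP on the observed contexts but with the \emph{true} parameters $\MeanVector_*,\Meanstar$) and argues in two steps: Step 2 compares $\OPTest^{2\gamma}$ with $\OPTinterim^{\gamma}$ via exactly the uniform deviation bound you use (made valid for data-dependent policies by the exploration rule \eqref{eq:exploreChoice}), and Step 1 compares $\OPTinterim^{\gamma}$ with $\OPT$ by importing Lemmas F.4 and F.6 of \cite{AD15}, which is where the sensitivity of the LP value to a budget relaxation, and hence the $(\OPT/B+1)$ factor, enters. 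You bypass the intermediate LP and the imported lemmas entirely: for the lower bound you plug the fixed policy $\pi^*$ into \eqref{eq:OPTestProgram} and use Hoeffding plus the deviation bound (the paper instead plugs in the optimizer of $\OPTinterim^{\gamma}$), and for the upper bound you evaluate the empirical LP's weak-dual at a single population dual multiplier $\lambda^*$, with $\|\lambda^*\|_1\le \OPT/B$ supplying the $(\OPT/B+1)$ factor and the argmax-gap handled by the same uniform deviation bound at data-dependent arm sequences. What each buys: your argument is self-contained and makes transparent where $(\OPT/B+1)$ comes from (it is literally the dual norm bound), whereas the paper's route black-boxes the context-sampling error into results of \cite{AD15} and avoids any duality machinery at the level of this lemma. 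The one step you should make explicit is the strong-duality claim for the population LP, i.e.\ the existence of $\lambda^*\ge 0$ with $B\|\lambda^*\|_1+T\,\Ex_{\cx}\bigl[\max_{a\in\{0\}\cup[K]}\cx(a)^\top(\MeanVector_*-\Meanstar\lambda^*)\bigr]=\OPT$; this holds here because the no-op gives a Slater point (consumption $0< B$), and then $\|\lambda^*\|_1\le \OPT/B$ follows since the expectation term is nonnegative. This is a standard but necessary justification, not a gap in the argument, and with it your constants comfortably fit within the stated $9\gamma(\tfrac{\OPT}{B}+1)$.
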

%For \linCbudget, $Z^*\le \frac{\OPT}{(B/T)}, L=1, \oneNorm=\oneNorm_{\infty}=1$. 
%Therefore, $Z=\frac{(\OPTest^{2\gamma} + L\gamma)}{(B/T)} +1$ satisfies the bounds of Lemma \ref{lem:packingZest}. 
%The proof of Theorem \ref{th:packing} uses the following estimate for $Z$ obtained  as described in Section \ref{sec:OPTestimation}.

\begin{corollary}
\label{cor:packingZest}
%Let $\gamma=\gammaValueInfty$. 
Set $Z=\frac{(\OPTest^{2\gamma} + 2\gamma)}{B} +1$, with above value of $\gamma$. Then, with probability $1-O(\delta)$,
%Using the first $T_0$ rounds for exploration, we can obtain $Z$ such that with probability $1-O(\delta)$,
%\begin{center} $ \OPT \le \OPTest \le (1+\frac{7\gamma}{(B/T)})\OPT + 7\gamma.$ \end{center}
\begin{center} $ \frac{\OPT}{B}+1 \le Z \le (1+\frac{11\gamma}{B}) (\frac{\OPT}{B}+1).$ \end{center}

\end{corollary}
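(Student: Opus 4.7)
The plan is to obtain the corollary as a short, essentially algebraic consequence of Lemma~\ref{lem:estimatingOPT}, which already provides two-sided bounds on $\OPTest^{2\gamma}$ in terms of $\OPT$. Everything conditioned on the high-probability event of that lemma, so that the final $1-O(\delta)$ statement is inherited directly.

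For the lower bound, I would substitute the lower bound from Lemma~\ref{lem:estimatingOPT}, namely $\OPTest^{2\gamma} \ge \OPT - 2\gamma$, into the definition $Z = (\OPTest^{2\gamma}+2\gamma)/B + 1$. The two $2\gamma$ terms cancel exactly, yielding $Z \ge \OPT/B + 1$, which is the desired lower bound.

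For the upper bound, I would plug in $\OPTest^{2\gamma} \le \OPT + 9\gamma(\OPT/B+1)$ to obtain
\begin{equation*}
Z \;\le\; \frac{\OPT}{B} + 1 \;+\; \frac{9\gamma}{B}\!\left(\frac{\OPT}{B}+1\right) + \frac{2\gamma}{B}.
\end{equation*}
The only remaining step is to absorb the stray $2\gamma/B$ into a multiplicative factor of $(\OPT/B+1)$. This is immediate because $\OPT/B + 1 \ge 1$, so $2\gamma/B \le (2\gamma/B)(\OPT/B+1)$. Combining the $9\gamma/B$ and $2\gamma/B$ terms gives the claimed $(1 + 11\gamma/B)(\OPT/B + 1)$ upper bound.

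There is no real obstacle here: the lemma is doing all the work, and the corollary just repackages its inequalities into the specific normalized form that the algorithm needs (a value of $Z$ that is guaranteed to upper bound $\OPT/B + 1$ up to a small multiplicative slack). The only care needed is to verify that the slack $2\gamma$ added in the numerator of $Z$ exactly cancels the $-2\gamma$ error on the lower side of Lemma~\ref{lem:estimatingOPT}, which is the design choice that makes the lower bound clean.
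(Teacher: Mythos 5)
Your proposal is correct and matches the derivation the paper intends for this corollary: it is precisely the direct substitution of the two-sided bounds of Lemma~\ref{lem:estimatingOPT} into the definition $Z=\frac{\OPTest^{2\gamma}+2\gamma}{B}+1$, with the $2\gamma$ offset cancelling the lower-side error and the stray $\frac{2\gamma}{B}$ absorbed using $\frac{\OPT}{B}+1\ge 1$ to give the $\left(1+\frac{11\gamma}{B}\right)\left(\frac{\OPT}{B}+1\right)$ factor.
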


%\shipra{Above requires proof in appendix..?}
%\nikhil{We should just say once in the beginning that all missing proofs are in the appendix and be done with it.}
Corollary \ref{cor:packingZest} implies that as long as $B\ge \gamma$, i.e., $B \ge \tilde{\Omega}(\frac{mT}{\sqrt{T_0}})$, $Z$ is a constant factor approximation of $ \tfrac{\OPT}{B}+1 \ge Z^*$, therefore Theorem \ref{th:coreAlgo} should provide an $\tilde{O}\left( (\tfrac{\OPT}{B}+1) m\sqrt{T}\right)$ regret bound. However, this bound does not account for the budget consumed in the first $T_0$ rounds. Considering that (at most) $T_0$ amount can be consumed from the budget in the first $T_0$ rounds, we have an additional regret of $\frac{\OPT}{B} T_0$. Further, since we have $B'=B-T_0$ budget for remaining $T-T_0$ rounds, we need a $Z$ that satisfies the required assumption for $B'$ instead of $B$ (i.e., we need $\frac{\OPT}{B'} \le Z\le O(1) \left(\frac{\OPT}{B'}+1\right))$. If $B\ge 2T_0$, then, $B'\ge B/2$, and using $2$ times the $Z$ computed in Corollary \ref{cor:packingZest} would satisfy the required assumption.

Together, these observations give Theorem \ref{th:largeK}.

\begin{theorem}
\label{th:largeK}
 Using Algorithm \ref{algo:linearZ} with $T_0$ such that $B> \max\{2T_0, mT/\sqrt{T_0}\}$, and twice the $Z$ given by Corollary \ref{cor:packingZest}, we get a high probability regret bound of 
	%\[  O\left(\frac{T_0}{B} \OPT + \left(\frac{\OPT}{(B/T)} +1\right)\left(1+\frac{\gamma}{(B/T)}\right)\frac{m\oneNorm}{\sqrt{T_0}}\right) \]
	%where $\gamma=\gammaValue$. 
	\[\textstyle \tilde{O}\left(\left(\frac{\OPT}{B} +1\right) \left({T_0} + {m}{\sqrt{T}}\right)\right).\]
	In particular, using $T_0= \sqrt{T}$, and assuming $B>m T^{3/4}$ gives a regret bound of 
	\[\textstyle \tilde{O}\left(\left(\frac{\OPT}{B} +1\right) {m}{\sqrt{T}}\right).\]
	
	\comment{There is also an alternative version of this result, which only requires $B>B'=T_0$, to get regret bound of 
	\[\displaystyle \tilde{O}\left(\left(\nicefrac{\OPT}{(B/T)} +1\right) \left(\nicefrac{T_0}{T} + m\sqrt{\nicefrac{T}{T_0^3}} + m\sqrt{\nicefrac{1}{T}}\right)\right)\]
	Using $T_0= T^{3/5}$, assuming $B>T^{3/5}$, this gives
	\[\tilde{O}\left(\left(\nicefrac{\OPT}{(B/T)} +1\right) \nicefrac{m}{T^{2/5}}\right)\]
	}
\end{theorem}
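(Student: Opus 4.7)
}
The plan is to decompose the algorithm's trajectory into the initial exploration phase of length $T_0$ and the subsequent execution of Algorithm \ref{algo:coreAlgo} on the residual problem, then to bound the regret contributions separately and combine. Let $T' := T-T_0$ and $B' := B-T_0$, and let $\OPT'$ denote the value of the optimal static policy for the sub-instance with $T'$ rounds and budget $B'$. Let $Z_0$ be the estimator from Corollary~\ref{cor:packingZest} and $Z := 2Z_0$ the value actually used.

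\textbf{Step 1 (validity of $Z$ for the sub-instance).} I would first verify that $Z$ satisfies Assumption~\ref{assum:Z} for the residual problem, i.e.\ $\OPT'/B' \le Z \le O(\OPT'/B'+1)$. The assumption $B>2T_0$ gives $B'\ge B/2$, hence $\OPT'/B' \le \OPT/B' \le 2\,\OPT/B$, so the lower bound of Corollary~\ref{cor:packingZest} gives $Z \ge 2(\OPT/B+1) \ge \OPT'/B'$. For the upper bound, I would combine the Corollary's upper estimate $Z_0 \le (1+11\gamma/B)(\OPT/B+1)$ with the hypothesis $B \ge mT/\sqrt{T_0}$, which implies $\gamma \le B$, yielding $Z = O(\OPT/B+1)$; then I would show $\OPT/B+1 = O(\OPT'/B'+1)$ via the same scaling argument used in Step~2 below (splitting into the cases $B\le T$ and $B>T$).

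\textbf{Step 2 (loss in benchmark).} The main obstacle is bounding $\OPT-\OPT'$, because the budget-per-round ratio $B'/T'$ may be smaller than $B/T$, so a direct scaling of $\pi^*$ to the sub-instance is not feasible. I would use a scaled version $\alpha\pi^* + (1-\alpha)\mathrm{no\text{-}op}$ with $\alpha := \min\{1, B'T/(BT')\}$, which is feasible for the sub-instance and attains value $T' \alpha\, \rv(\pi^*)$. A short case analysis then gives
\[
\OPT - \OPT' \;\le\; \OPT\cdot\max\!\left(\tfrac{T_0}{T},\,\tfrac{T_0}{B}\right) \;\le\; T_0\!\left(\tfrac{\OPT}{B}+1\right),
\]
where the last inequality uses $\OPT\le T$. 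This is the key quantitative lemma; everything else is book-keeping.

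\textbf{Step 3 (regret during exploration and assembly).} The first $T_0$ rounds contribute at most $T_0$ to both reward and regret trivially, and this is absorbed into $T_0(\OPT/B+1)$. For the remaining $T'$ rounds, I would invoke Theorem~\ref{th:coreAlgo} on the sub-instance (with budget $B'$ and horizon $T'$), which applies by Step~1, to obtain regret at most $O\!\bigl((\OPT'/B'+1)\,m\sqrt{T\log(dT/\delta)\log T}\bigr) = \tilde O\bigl((\OPT/B+1)\,m\sqrt{T}\bigr)$, using again $\OPT'/B'+1 = O(\OPT/B+1)$ from Step~1. Adding Steps~2 and~3 (and the additional $T_0$ from the exploration rounds themselves) yields the general bound
\[
\regret(T) \;=\; \tilde O\!\left(\left(\tfrac{\OPT}{B}+1\right)\bigl(T_0 + m\sqrt{T}\bigr)\right).
\]
Plugging $T_0=\sqrt{T}$ collapses $T_0+m\sqrt{T}$ to $O(m\sqrt{T})$, and the hypothesis $B>mT^{3/4}$ ensures both $B>2T_0$ and $B>mT/\sqrt{T_0}$, so that Corollary~\ref{cor:packingZest} and Step~1 are in force, giving the stated $\tilde O\!\bigl((\OPT/B+1)\,m\sqrt{T}\bigr)$ bound. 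The whole argument succeeds with probability $1-O(\delta)$ by a union bound over the high-probability events invoked from Corollary~\ref{cor:packingZest} and Theorem~\ref{th:coreAlgo}.
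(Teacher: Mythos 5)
Your proposal is correct and follows essentially the same route as the paper's (sketched) argument: use $B>2T_0$ so that $B'\ge B/2$ and twice the estimated $Z$ satisfies Assumption~\ref{assum:Z} for the residual instance, charge the exploration phase at most $T_0(\tfrac{\OPT}{B}+1)$ against the benchmark, and invoke Theorem~\ref{th:coreAlgo} on the remaining $T-T_0$ rounds with budget $B'$. Your Step~2 (scaling $\pi^*$ by $\alpha=\min\{1,B'T/(BT')\}$ with a two-case analysis) is a correct and slightly more explicit justification of the benchmark loss that the paper only asserts.
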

%\shipra{We need to decide which version of theorem to use. Second version has weaker condition on $B$ but I favor the first one with $1/\sqrt{T}$ regret bound, though stronger assumption on $B$}

%\nikhil{Old stuff follows.} 

%%%%%%%%%%%%%%%%%%COMMENTED %%%%%%%%%%%%%%%%%%%%%%%%%%%%%%%
\comment{
\noindent For the special case of \linCbudget, knowing or estimating $\OPT$ is sufficient to estimate $Z$. 
\begin{lemma}\label{lem:Zoptbt}
	$\frac{Z}{2}=\frac{\OPT}{(B/T)}$ satisfies the condition in Lemma \ref{lem:Zexists}, when $S$ is given by budget constraints and $\ell_\infty$ norm is used for distance. 
	%for any $\overline{\OPT}$ that upper bounds $\OPT$ with probability $1-\delta$. 
\end{lemma}
Therefore, if we know $\OPT$, we can use Algorithm \ref{algo:linear} with above value of $Z$ plus $1$, and distance defined by $\ell_{\infty}$ norm, to achieve an $\tilde{O}((\frac{\OPT}{(B/T)}+1) \frac{m}{\sqrt{T}})$ regret in objective (as given by Theorem \ref{th:general1}). We show how to estimate $Z$ as part of the algorithm for the special case of \linCbudget, using the first few rounds. Also, we are not allowed to violate the budgets at all for this problem, so the algorithm needs to be modified to handle this. Given parameters $T_0$ and $B'$, the algorithm is as follows. 
\begin{quote} \emph{
	In the first $T_0$ rounds do a ``pure exploration'' and estimate a $Z$ satisfying Lemma \ref{lem:Zexists}.
	Set aside a $B'$ amount from the remaining budget for each dimension. 
	Run Algorithm \ref{algo:linear} for the remaining time steps with the remaining budget. Stop if the full actual budget $B$ is consumed for any dimension.
	}
\end{quote}

\comment{
\begin{enumerate}
	\item Use the first $T_0$ rounds to do ``pure exploration'' and estimate a $Z$ satisfying Lemma \ref{lem:Zexists}.
	\item From the remaining budget for each resource, set aside a $B'$ amount. 
	\item Run Algorithm \ref{algo:linear} for the remaining time steps with the remaining budget. 
	\item Abort when the full actual budget $B$ is consumed for any dimension.
\end{enumerate}
}
}
%%%%%%%%%%%%%%%%%%%%%%%%%%%%%%%%%%%%%%%%%%%%%%%%%%%%%%%%%%%%%%%%%%%%

%%%%%%%%%%%%%%%%%%BEGIN COMMENT%%%%%%%%%%%%%%%%%%%%%%%%%
\comment{For the second version, which requires only the weaker condition of $B>T_0$ on $B$, we simply substitute $B>T_0$ in the guarantees of Lemma \ref{lem:packingZest} to get $Z\le O(1+\frac{T}{T_0^{3/2}}) (\frac{\OPT}{(B/T)}+1)$. Therefore, Theorem \ref{th:packing} gives $\tilde{O}((1+\frac{T}{T_0^{3/2}}) (\frac{\OPT}{(B/T)}+1) \frac{m}{\sqrt{T}})$ bound. Accounting for additional $\frac{T_0}{B} \OPT$ regret due to first $T_0$ rounds we get the second version of the theorem.
}
%Then, substituting above guarantees in Theorem \ref{th:general} and accounting for regret of $\frac{T_0}{B} \OPT$ because of using first $T_0$ rounds for exploration, we obtain the following theorem.

\comment{
\begin{proof}
Accounting for regret of $\frac{T_0}{B} \OPT$ because of using first $T_0$ rounds for exploration, and, substituting $Z\le Z^*(1+\frac{7\gamma}{(B/T)})$ with $\gamma=\gammaValue$, in the regret bounds of Theorem \ref{th:general}, we get regret bound in objective of
\[  O\left(\nicefrac{T_0\OPT}{B}  + \left(\nicefrac{\OPT}{(B/T)} +1\right)\left(1+\nicefrac{\gamma}{(B/T)}\right)\nicefrac{m \oneNorm}{\sqrt{T_0}}\right) \]
Here $\oneNorm = \oneNorm_\infty = 1$.
And, by $\aregD(T)$ bounds in Theorem \ref{th:general} distance from constraints is at most $\gamma$, stopping earlier can result in addtional $(Z^*+L) \gamma = (\frac{\OPT}{B}+1) \gamma$ regret in objective.  
Then, substituting $T_0=(mT \oneNorm)^{2/3} = (mT)^{2/3}$, and using $B \ge T_0$, we get the desired bound.
	%where $\gamma=\gammaValue$. 

\end{proof}
}

\comment{
In order to get better insight into the quantity $Z$ introduced in the previous section \shipra{and its estimation}, let us consider a special case of budget constraints. In this widely studied special case, the general convex set $S$ is replaced by budget constraints, and the objective is simply sum of rewards. 
%In this section, we extend the algorithm from previous section to the case where there is an objective in addition to feasibility constraints. To illustrate the main ideas in our algorithm, in this section we consider the special case of Packing Linear Programs. 

We use following notation in this section. Let $[a; {\bf b}]$ denote a column vector with first component as scalar $a$ and rest of the components being same as vector ${\bf b}$. %Let $\cv_{i:j}$ denote the sub vector formed by components from $i$ to $j$. Similarly a submatrix is defined. 
Then, in the \linCbudget~problem, the columns of matrix $\cvm_t$ at time $t$ are of the form $[r_t(a); \cv_t(a)] \in [0,1]^{\di}$ for each arm $a$, and the aim is to maximize $\sum_{t=1}^T r_t(a_t)$ while ensuring that $ \sum_{t=1}^T \cv_t(a_t) \le B \ones$. This is equivalent to \linCBwK~with $S=\{[r,\cv]: \cv \le \frac{B}{T} {\bf 1}\}$,  $f([r;\cv]) = r$. Note that, trivially, $f$ is a $1$-Lipschitz concave function. 

However, one difference is that we are not allowed to violate the budget constraints at all, so the algorithm has to stop once the budget of any resource is fully consumed. 
We take care of this by setting $S =\{(r, \cv): \cv \le \frac{B-B'}{T} {\bf 1}_{d-1}\}$ for some large enough $B'$, 
and using the $\ell_\infty$ norm to measure distance: if the $\ell_\infty$ distance from $S$ is at most $B'/T$, then the actual 
budgets are not violated, and we ensure that this happens with high probability.   
Note that we implicitly assume that the algorithm is allowed to stop early or ``do nothing", i.e., we are considering policies $q$ such that the vector $q(\cx)$ for any given context $\cx$ may sum to a quantity less than $1$. 

For any policy $q$, let $r(q):= \Ex_{(\cx,r,\cv) \sim {\cal D}}[ r q(\cx) ]$, $\cv(q):= \Ex_{(\cx,r, \cv) \sim {\cal D}}[ \cv q(\cx) ]$. As before, $q^*$ denotes the optimal policy:
\begin{eqnarray}
q^* & := & \begin{array}{lll}
 \max_{q} & r(q) ~~\text{such that} &\\
&  \cvA(q) \le \frac{B}{T} {\bf 1}_{d-1}, & 
\end{array}
\end{eqnarray}
and $\OPT=r(q^*)$. 
%And, we aim to minimize $ \aregO(T)= \OPT - \frac{1}{T} \sum_{t=1}^T r_t(a_t)$ while ensuring the constraints are always satisfied.

Applying the algorithm from the previous section requires knowledge of parameter $Z$ satisfying Assumption \ref{assum:Z}. 
The following lemma shows that it is enough to know $\OPT$.
(All proofs from this section are in Appendix \ref{sec:apppacking}.)
\begin{lemma}\label{lem:Zoptbt}
 $\frac{Z}{2}=\max\{\frac{\OPT}{(B/T)}, 1\}$ satisfies the condition in Lemma \ref{lem:Zexists}, when $S$ is given by budget constraints as above and $\ell_\infty$ norm for distance.
%for any $\overline{\OPT}$ that upper bounds $\OPT$ with probability $1-\delta$. 
\end{lemma}

By definition, this means that any $Z$ that is greater than or equal to $\frac{2\OPT}{(B/T)} + 1$ satisfies Assumption \ref{assum:Z}. Therefore it suffices to estimate an upper bound on $\OPT$, the smaller the better 
since $Z$ appears in the regret term. We estimate $\OPT$ by using the outcomes of the first 
$$T_0 := 12K d \ln(\tfrac {d |\Pi|}{\delta} ) T /B $$
\shipra{replace by modified bounds from next section, I think we will need to do estimation every geometric time interval}
rounds, during which we do {\em{pure exploration}} (i.e., play an arm in $A$ uniformly at random). The following lemma provides a bound on the $Z$ that we estimate. 
\begin{lemma}\label{lem:Zestimate}
	Using the first $T_0$   rounds of pure exploration, one can compute a quantity $Z$ such that with probability at least $1-\delta$, 
	$$ \max\{\frac {\OPT} {(B/T)}, 1\} \leq \frac{Z}{2} \leq \frac {6\OPT} {(B/T)} + 2.$$ 
	\shipra{replace by modified bounds from next section}
\end{lemma}

Now, given such a $Z$, Algorithm \ref{algo:linear} (refer to Section \ref{sec:CBwKalgo}) can be used as it is. As mentioned earlier, in the definition of $g_t$ we use $S=\{\cv: \cv \le \frac{B}{T}\}$. Note that $f$ is $1$-Lipschitz for all norms, and that $\|{\bf{1}}\|_\infty=1$. In order to make sure that the budget constraints are not violated, for some large enough constant $c$, we set aside a budget of $$ B' := c\sqrt{K T\ln(|\Pi|/\delta)} .$$ The entire algorithm is as follows: 
\begin{enumerate}
	\item Use the first $T_0$ rounds to do pure exploration and calculate a $Z$ given by \prettyref{lem:Zestimate}.
	\item From the remaining budget for each resource, set aside a $B'$ amount. 
	\item Run Algorithm from Section \ref{sec:CBwKalgo} for the remaining time steps with the remaining budget. 
	\item Abort when the full actual budget $B$ is consumed for any component $j$.
\end{enumerate}

By definition, the algorithm does not violate budget constraints. The regret bound for this algorithm is stated in \prettyref{thm:packing}. The proof  essentially follows from using Theorem \ref{th:rPlusS} to bound the regret in Step 3, and accounting for the loss of budget in Steps 1 and 2.
} 

%%%%%%%%%%%%%%%%%%END COMMENT%%%%%%%%%%%%%%%%

%\input{estimatingOPTv0.tex}
%\input{estimatingZ.tex}
%\input{estimatingOPT.tex}
%\input{estimatingOPTold.tex}

\bibliographystyle{abbrv}
\bibliography{bibliography_contextual}

\newpage
\appendix
\begin{center}
{\bf Appendix}
\end{center}

\section{Concentration Inequalities}
\begin{lemma}[Azuma-Hoeffding inequality]
\label{lem:azuma}
If a super-martingale $(Y_t; t\ge 0)$, corresponding to filtration ${\cal F}_t$, satisfies $|Y_t-Y_{t-1}| \le c_t$ for some constant $c_t$, for all $t=1, \ldots, T$, then for any $a\ge 0$,
	$$\Pr(Y_T-Y_0 \ge a) \le e^{-\frac{a^2}{2\sum_{t=1}^T c_t^2}}.$$
\end{lemma}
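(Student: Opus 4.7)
The plan is to proceed via the standard exponential moment / Chernoff approach applied to martingale differences. Let $X_t := Y_t - Y_{t-1}$ so that $Y_T - Y_0 = \sum_{t=1}^T X_t$. The super-martingale hypothesis gives $\Ex[X_t \mid {\cal F}_{t-1}] \le 0$, and by assumption $|X_t| \le c_t$. The strategy is to bound the moment generating function of $Y_T - Y_0$ and then optimize a Markov-type bound in the free parameter $\lambda > 0$.

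First, for any $\lambda > 0$ I will apply Markov's inequality to the nonnegative random variable $e^{\lambda(Y_T - Y_0)}$, obtaining
\[ \Pr(Y_T - Y_0 \ge a) \;\le\; e^{-\lambda a}\,\Ex\!\left[e^{\lambda \sum_{t=1}^T X_t}\right]. \]
Next, I will peel off one factor at a time using the tower property: conditioning on ${\cal F}_{T-1}$,
\[ \Ex\!\left[e^{\lambda \sum_{t=1}^T X_t}\right] \;=\; \Ex\!\left[e^{\lambda \sum_{t=1}^{T-1} X_t}\,\Ex\!\left[e^{\lambda X_T}\mid {\cal F}_{T-1}\right]\right]. \]
The heart of the argument is a pointwise bound on the conditional MGF $\Ex[e^{\lambda X_t}\mid {\cal F}_{t-1}]$. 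I will prove the sub-Gaussian inequality $\Ex[e^{\lambda X_t}\mid {\cal F}_{t-1}] \le e^{\lambda^2 c_t^2 / 2}$ by the standard Hoeffding lemma argument: since $X_t \in [-c_t, c_t]$ and $x \mapsto e^{\lambda x}$ is convex, one writes $e^{\lambda x} \le \tfrac{c_t - x}{2c_t} e^{-\lambda c_t} + \tfrac{c_t + x}{2c_t} e^{\lambda c_t}$ for $x \in [-c_t,c_t]$; taking conditional expectation and using $\Ex[X_t\mid {\cal F}_{t-1}] \le 0$ (so that the coefficient on $e^{\lambda c_t}$, which is positive, is suppressed), a Taylor expansion of the resulting $\log$ in $\lambda c_t$ yields the claimed bound.

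Iterating the tower step from $t = T$ down to $t = 1$ then gives
\[ \Ex\!\left[e^{\lambda(Y_T-Y_0)}\right] \;\le\; \exp\!\left(\tfrac{\lambda^2}{2}\sum_{t=1}^T c_t^2\right), \]
so that $\Pr(Y_T - Y_0 \ge a) \le \exp\!\left(-\lambda a + \tfrac{\lambda^2}{2}\sum_t c_t^2\right)$. Finally I will optimize by choosing $\lambda = a / \sum_{t=1}^T c_t^2$, which is legal since $a \ge 0$ and the sum is positive, giving the stated exponent $-a^2/(2\sum_t c_t^2)$.

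The main subtlety — and the only step that is not purely mechanical — is the Hoeffding lemma bound in the super-martingale (not martingale) setting: the usual statement assumes $\Ex[X_t\mid {\cal F}_{t-1}] = 0$, but here we only have $\Ex[X_t\mid {\cal F}_{t-1}] \le 0$. I will handle this by noting that the convex combination bound above has a \emph{positive} coefficient on $e^{\lambda c_t}$ and a positive coefficient on $e^{-\lambda c_t}$ whose sum (as a function of $X_t$) is linear with positive slope $(e^{\lambda c_t} - e^{-\lambda c_t})/(2c_t)$ in $X_t$; replacing $\Ex[X_t\mid {\cal F}_{t-1}]$ by $0$ can only increase the upper bound, so the martingale-case inequality still holds. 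After this, the rest of the argument is as above.
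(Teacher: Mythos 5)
Your proof is correct, and it is the standard Chernoff--Hoeffding argument: bound the conditional MGF of each increment via convexity and the Hoeffding lemma, chain with the tower property, and optimize the exponent in $\lambda$. The paper itself states this lemma as a known concentration inequality and offers no proof at all, so there is nothing to compare against; your handling of the only non-routine point (that $\Ex[X_t \mid {\cal F}_{t-1}] \le 0$ rather than $=0$ only helps, because the bound on the conditional MGF is increasing in the conditional mean) is exactly right.
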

%======================================================
\section{Benchmark}\label{app:benchmark}
\begin{proof}[Proof of Lemma \ref{lem:staticOPT}]
	%Let $\overline{\OPT}$ denote the value of optimal adaptive solution that knows the distribution ${\cal D}$ and the weight matrix $\Mean_*$, and can take into account the history upto that point as well as the current context to decide (possibly with randomization) which arm to pull at time $t$. The expected distance of the optimal adaptive solution from the constraint set $S$ is assumed to be $0$. We show that there exists a static policy $q^*$ such that $f(\cvA(q^*)) \ge \overline{\OPT}$, and $\cvA(q^*)\in S$.
	
	For an instantiation $\omega=(\cx_t, \cvm_t)_{t=1}^T$ of the sequence of inputs, let vector ${\bf p}^*_t(\omega) \in \Delta^{K+1}$ denote the distribution over actions (plus no-op) taken by the {\it optimal adaptive policy} at time $t$. Then, %by concavity of function $f$,
	\begin{equation}
	\label{eq:b1}
	\textstyle \overline{\OPT} = \Ex_{\omega \sim {\cal D}^T}[\sum_{t=1}^T  \rv_t^\top {\bf p}^*_t(\omega)] 
	\end{equation}
	Also, since this is a feasible policy, 
	\begin{equation}
	\label{eq:b2}
	\Ex_{\omega \sim {\cal D}^T}[\sum_{t=1}^T  V_t^\top {\bf p}^*_t(\omega)] \le B\ones 
	\end{equation}
	Construct a {\it static} context dependent policy $\pi^*$ as follows: for any $\cx\in [0,1]^{m\times K}$, define
	$$\pi^*(\cx) := \frac{1}{T} \sum_{t=1}^T\Ex_\omega[ {\bf p}^*_t(\omega) | \cx_t=\cx].$$
	Intuitively, $\pi^*(\cx)_a$ denotes (in hindsight) the probability that the optimal adaptive policy takes an action $a$ when presented with a context $\cx$, averaged over all time steps.
	Now, by definition of $\rv(\pi),\cvA(\pi)$, from above definition of $\pi^*$, and \eqref{eq:b1}, \eqref{eq:b2},
	$$T \rv(\pi^*) = T \Ex_{\cx \sim {\cal D}}[ \MeanVector_*^\top \cx \pi^*(\cx)] = \textstyle \Ex_\omega[\sum_{t=1}^T \cvm_t {\bf p}^*_t(\omega)] = \overline{\OPT},$$
	$$T \cvA(\pi^*) = T \Ex_{\cx \sim {\cal D}}[ \Meanstar^\top \cx \pi^*(\cx)] = \textstyle \Ex_\omega[\sum_{t=1}^T \cvm_t {\bf p}^*_t(\omega)] \le B\ones,$$

\end{proof}

\section{Hardness of linear AMO} 
\label{app:hardness}
In this section we show that finding the best linear policy is NP-Hard. 
The input to the problem is, for each $t \in [T],$ and each arm $a \in[K]$,  a  context ${\cxL}_t(a) \in \domain^{m}$, and a reward $r_t(a) \in [-1,1]$. The output is a vector $\thetaV \in \Re^m$ that maximizes 
$\sum_t r_t(a_t)$ where 
\[a_t = \arg \max _{a\in [K]}\{ \cxL_t(a)^\top \theta \} .\] 

We give a reduction from the problem of learning halfspaces with noise \cite{guruswami2009hardness}. The input to this problem is
for some integer $n$, for each $i \in [n]$, 
a vector $z_i \in \domain^m$, and $y_i \in \{-1,+1\}$. 
The output is a vector $\thetaV \in \Re^m$ that maximizes 
\[ \sum_{i=1}^{n} sign(\z_i ^\top \thetaV) y_i . \] 

Given an instance of the problem of learning halfspaces with noise, construct an instance of the linear AMO as follows. 
The time horizon $T=n$, and the number of arms $K=2$. 
For each $t\in[T]$, the context of the first arm, $\cxL_t(1) = z_t$, 
and its reward $r_t(1) = y_t$. The context of the second arm, 
$\cxL_t(2) = {\mathbf{0}}$, the all zeroes vector, and the reward $r_t(2)$ is also 0. 

The total reward of a linear policy w.r.t a vector $\thetaV$ for this instance is 
\[ |\{ i: sign(z_i^\top\thetaV) = 1, y_i =1 \}| - |\{ i: sign(z_i^\top\thetaV) = 1, y_i = -1 \}|. \] 
It is easy to see that this is an affine transformation of the objective for the problem of learning halfspaces with noise. 

%======================================================
\section{Confidence ellipsoids}
\label{sec:confidenceellipsoidappendix}
\begin{proof}[\bf Proof of Corollary \ref{lem:linContextual3}]
	The following holds with probability $1-\delta$. 
	\begin{eqnarray}
	%\label{eq:linContextual3}
	\sum_{t=1}^T |\tilde{\MeanVector}_t^\top \cxL_t - \MeanVector_*^\top \cxL_t| & \le & \sum_{t=1}^T \|\tilde{\MeanVector}_t - \MeanVector_*\|_{M_t} \| \cxL_t \|_{M_t^{-1} }\nonumber\\
	%& \le & \left(R\sqrt{m\ln\left( \frac{1+tm/\lambda}{\delta}\right)} + \sqrt{\lambda m} \right) \sum_{t=1}^T  \sqrt{\cxL_t M_t^{-1} \cxL_t} \nonumber\\
	& \le & \left(\sqrt{m\ln\left( \frac{1+tm}{\delta}\right)} + \sqrt{ m} \right)  \sqrt{mT\ln(T)}.\nonumber
	%& \le & \left(\sqrt{m\ln\left( \frac{1+tb^2/\lambda}{\delta}\right)} + \sqrt{\lambda} \|\MeanVector_*\|_2 \right) \sum_{t=1}^T  \sqrt{\cxL_t M_t^{-1} \cxL_t} \nonumber\\
	%& \le & \le \left(\sqrt{m\ln\left( \frac{1+tb^2/\lambda}{\delta}\right)} + \sqrt{\lambda} \|\MeanVector_*\|_2 \right) \left(  \sqrt{mT\ln(T)}\right)\nonumber
	\end{eqnarray}
	
	The inequality in the first line is a matrix-norm version of Cauchy-Schwartz
	(\prettyref{lem:MatrixCS}). 
	%The equality in the first line is due to the fact that $M$ is PSD and the inequality in the second line follows from Cauchy-Schwartz. The inequalities in third and fourth line use the results stated in 
	The inequality in the second line is due to Lemmas \ref{lem:linContextual1} and \ref{lem:linContextual2}. 
	The lemma follows from multiplying out the two factors in the second line.
	%Then, substituting $\lambda=1$, we obtain the desired bound.
	
	\end{proof}
	\newcommand{\av}{\mathbf{a}}
	\newcommand{\bv}{\mathbf{b}}
	\newcommand{\Minv}{{M^{-1}}}
	\newcommand{\Mhalf}{M_{1/2}}
	\newcommand{\Minvhalf}{M_{-1/2}}
	\begin{lemma}\label{lem:MatrixCS}
		For any positive definite matrix $M\in \Real^{n\times n}$ and any two vectors 
		$\av,\bv\in \Real^n$, $|\av^\top \bv| \leq \|\av\|_M \| \bv\|_\Minv$. 
		\end{lemma}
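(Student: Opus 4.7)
The plan is a direct application of the standard (vector) Cauchy--Schwarz inequality, after splitting the identity $M^{-1/2} M^{1/2} = I$ into the inner product. Since $M$ is positive definite, it admits a unique positive definite square root $M^{1/2}$, whose inverse $M^{-1/2}$ is also positive definite; both are symmetric. I would first write
\[
\av^\top \bv \;=\; \av^\top M^{1/2} M^{-1/2} \bv \;=\; (M^{1/2}\av)^\top (M^{-1/2}\bv),
\]
which is a legitimate identity of real vectors in $\Real^n$.

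Next I would apply the usual Euclidean Cauchy--Schwarz inequality to these two vectors to obtain
\[
|\av^\top \bv| \;\le\; \|M^{1/2}\av\|_2 \,\cdot\, \|M^{-1/2}\bv\|_2 .
\]
Finally I would identify each factor with the $M$-norms appearing in the statement: using symmetry of $M^{1/2}$ (and of $M^{-1/2}$),
\[
\|M^{1/2}\av\|_2^2 = \av^\top M^{1/2} M^{1/2} \av = \av^\top M \av = \|\av\|_M^2,
\]
\[
\|M^{-1/2}\bv\|_2^2 = \bv^\top M^{-1/2} M^{-1/2} \bv = \bv^\top M^{-1} \bv = \|\bv\|_{M^{-1}}^2.
\]
Taking square roots and combining yields the claim.

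There is essentially no obstacle; the only thing one has to be careful about is justifying the existence of $M^{1/2}$ and $M^{-1/2}$, which follows immediately from positive definiteness of $M$ (diagonalize $M = Q \Lambda Q^\top$ with $\Lambda \succ 0$ and set $M^{\pm 1/2} = Q \Lambda^{\pm 1/2} Q^\top$). No other machinery is needed, and the lemma is really just the statement that the pair of norms $\|\cdot\|_M$ and $\|\cdot\|_{M^{-1}}$ are dual with respect to the standard inner product.
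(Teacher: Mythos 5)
Your proof is correct and follows essentially the same route as the paper: factor $M$ via a square root, insert $M^{1/2}M^{-1/2}$ into the inner product, apply the standard Cauchy--Schwarz inequality, and identify the resulting Euclidean norms with $\|\av\|_M$ and $\|\bv\|_{M^{-1}}$. The only cosmetic difference is that you use the symmetric square root while the paper works with a generic factorization $M = M_{1/2}M_{1/2}^\top$; the argument is otherwise identical.
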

		\begin{proof}
			Since $M$ is positive definite, there exists a matrix $\Mhalf$ such that $M = \Mhalf \Mhalf^\top$. Further, $\Minv = \Minvhalf^\top \Minvhalf$ where $\Minvhalf = \Mhalf^{-1}$. 
			$$\|\av^\top\Mhalf\|^2 = \av^\top\Mhalf \Mhalf^\top \av = \av^\top M \av = \|\av\|_M^2 .$$ 
			Similarly, $\|\Minvhalf \bv\|^2 = \| \bv\|_{\Minv}^2.$ Now applying Cauchy-Schwartz, we get that
			\[ |\av^\top \bv| = |\av^\top \Mhalf \Minvhalf \bv | \leq \|\av^\top\Mhalf \| \|\Minvhalf \bv \| = \|\av\|_M \|\bv\|_\Minv .\]
			
			\end{proof}

\begin{proof}[\bf Proof of Corollary \ref{cor:optest}]
			Here, the first claim follows simply from definition of $\MeanUCB_t(a)$ and the observation that with probability $1-\delta$, $\Mean^*\in \Ell_t$. To obtain the second claim, apply Corollary \ref{lem:linContextual3} with $\MeanVector_*=\meanstarj, \y_t=\cxL_t(a_t), \tilde{\MeanVector}_t=[\MeanUCB_t(a_t)]_j$ (the $j^{th}$ column of $\MeanUCB_t(a_t)$), to bound $|\sum_t ([\MeanUCB_t(a_t)]_j - \meanstarj)^\top \cxL_t(a_t)| \le \sum_t |([\MeanUCB_t(a_t)]_j - \meanstarj)^\top \cxL_t(a_t)|$ for every $j$, and then take the norm.
\end{proof}	
			
%==============================================================
\section{Appendix for Section \ref{sec:coreAlgo}}
\label{app:coreAlgo}

\paragraph{Proof of Theorem \ref{th:coreAlgo}:}
We will use $\reglinCBwK$ to denote the main term in the regret  bound. 
\[\reglinCBwK(T):= O\left( m \sqrt{{\ln( m\di T/\delta)\ln(T)}{T}}\right)  \]
Let $\tau$ be the stopping time of the algorithm.  
Let $H_{t-1}$ be the history of plays and observations before time $t$, i.e. $H_{t-1} :=\{\thetaV_\tau, \cx_\tau, a_\tau,r_\tau(a_\tau), \cv_\tau(a_\tau), \tau=1,\ldots, t-1\}$. Note that $H_{t-1}$ determines $\thetaV_t, \hat{\MeanVector}_t, \MeanEst_t, \Ell_t$, but it does not determine $\cx_t, a_t, \MeanUCB_t$ (since $a_t$ and $\MeanUCB_t(a)$ depend on the context $\cx_t$ at time $t$). 
The proof is in 3 steps: 
\paragraph{Step 1:}
Since $\Ex[\cv_t(a_t) | \cx_t, a_t, H_{t-1}] = \Meanstar^\top \cxL_t(a_t)$, we apply Azuma-Hoeffding to get that with probability $1-\delta,$ 
\begin{equation}
\label{eq:vtwstar} \textstyle
 \left\|\sum_{t=1}^\tau \cv_t(a_t)-\Meanstar^\top \cxL_t(a_t)\right\|_\infty\le\reglinCBwK(T).
\end{equation}
 Similarly, a lower bound on  the sum of $\mu_*^\top \cxL_t(a_t)$ is sufficient. 

\paragraph{Step 2:} From Corollary \ref{cor:optest}, with probability $1-\delta$, 
\begin{equation}
\label{eq:whatwstar} 
\textstyle
\left\|  \sum_{t=1}^T  (\Meanstar - \MeanUCB_t(a_t))^\top\cxL_t(a_t) \right \|_\infty
\le \reglinCBwK(T).
\end{equation}
%\textcolor{red}{Nikhil: We seem to get an $m$ here instead of $\sqrt{m}$.}
It is therefore sufficient to bound  the sum of the vectors $\MeanUCB_t(a_t)^\top\cxL_t(a_t)$, 
and similarly for $\tilde{\MeanVector}_t(a_t)^\top\cxL_t(a_t)$. 
\comment{
	The proof of \prettyref{eq:whatwstar} is as follows. 
	From \prettyref{lem:actualExists},
	with probability $1-\delta$, for all $t$, $\Mean^* \in \Ell_t$, and 
	$\MeanUCB_t(a_t) \in \Ell_t$ by definition. 
	Hence  $\|\Mean^* - \MeanUCB_t(a_t)\|_{M_t} \le 2\sqrt{m\log(\frac{Tm\di}{\delta})}$. 
	(Refer to the definition of $\Ell_t$ in \eqref{eq:Ellipsoid}.) 
	Inequality \prettyref{eq:whatwstar} now follows from Lemmas \ref{lem:normTranslate}, \ref{lem:sqrtTclassic} and the above bound. 
}
We use the shorthand notation of $\rt:= \MeanVector_t(a_t)^\top\cxL_t(a_t)$, 
$\rsum := \sum_{t=1}^{\tau} \rt $, $\vt:= \MeanUCB_t(a_t)^\top\cxL_t(a_t)$ and 
$\vsum := \sum_{t=1}^{\tau} \vt $ for the rest of this proof.
\paragraph{Step 3:} 
The proof is completed by showing that  
\[ \Ex[\rsum] \geq \OPT - Z\reglinCBwK(T) . \]

\begin{lemma}\label{lem:OP1}
	\begin{equation*}
		\sum_{t=1}^{\tau} \Ex[\rt | H_{t-1}]
		\ge  \frac \tau T \OPT +Z \sum_{t=1}^{\tau} \thetaV_t \cdot 
		\Ex[\vt-{\bf 1}\frac{B}{T}|H_{t-1}] 
	\end{equation*}
\end{lemma}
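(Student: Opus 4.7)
The plan is to derive a pathwise per-round inequality on the high-probability event where the optimism guarantees of Corollary \ref{cor:optest} hold, and then sum from $t=1$ to the stopping time $\tau$. The main lever is the algorithm's greedy choice of $a_t$ combined with the feasibility of the static benchmark $\pi^*$.

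Write $Q_t(a) := \cxL_t(a)^\top \tilde{\MeanVector}_t(a) - Z\,\thetaV_t\cdot \MeanUCB_t(a)^\top\cxL_t(a)$, and treat the no-op as an extra action with $Q_t(\mathrm{noop}) = 0$, so that $a_t = \arg\max_{a \in [K]\cup\{\mathrm{noop}\}} Q_t(a)$. For any distribution $\pi(\cx_t)\in \Delta^{K+1}$ this maximality gives
\[ \rt - Z\,\thetaV_t\cdot \vt \;=\; Q_t(a_t) \;\ge\; \sum_a \pi(\cx_t)_a\, Q_t(a). \]
Next, I invoke both parts of Corollary \ref{cor:optest}: the optimistic-reward bound $\cxL_t(a)^\top\tilde{\MeanVector}_t(a) \ge \cxL_t(a)^\top\MeanVector_*$, and the optimistic-consumption bound $\cxL_t(a)^\top\MeanUCB_t(a)\thetaV_t \le \cxL_t(a)^\top\Meanstar\thetaV_t$. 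Since $\pi(\cx_t)_a\ge 0$ and $\thetaV_t\ge 0$ (the latter because $\thetaV_t\in\Omega$), this produces the lower bound
\[ \rt - Z\,\thetaV_t\cdot \vt \;\ge\; \sum_a \pi(\cx_t)_a\bigl(\cxL_t(a)^\top\MeanVector_* - Z\,\thetaV_t\cdot \Meanstar^\top\cxL_t(a)\bigr). \]

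Now I specialize to $\pi = \pi^*$ and take the conditional expectation given $H_{t-1}$. The iterate $\thetaV_t$ is $H_{t-1}$-measurable (it is produced by \OMD from observations up to round $t-1$), while $\pi^*$ is a deterministic functional of $(\mathcal D,\MeanVector_*,\Meanstar)$ and hence independent of $H_{t-1}$; and $\cx_t$ is drawn from $\mathcal D$ independently of $H_{t-1}$. Thus the conditional expectation of the right-hand side equals
\[ \Ex_{\cx\sim\mathcal D}\!\Big[\textstyle\sum_a \pi^*(\cx)_a\cxL(a)^\top\MeanVector_*\Big] - Z\,\thetaV_t\cdot \Ex_{\cx\sim\mathcal D}\!\Big[\textstyle\sum_a \pi^*(\cx)_a\Meanstar^\top\cxL(a)\Big] \;=\; \rv(\pi^*) - Z\,\thetaV_t\cdot\cvA(\pi^*). \]
Feasibility of $\pi^*$ gives $T\cvA(\pi^*) \le B\ones$, which together with $\thetaV_t\ge 0$ and $T\rv(\pi^*)=\OPT$ implies
\[ \Ex[\rt \mid H_{t-1}] - Z\,\thetaV_t\cdot \Ex[\vt \mid H_{t-1}] \;\ge\; \frac{\OPT}{T} - Z\,\thetaV_t\cdot \frac{B}{T}\ones, \]
i.e.\ $\Ex[\rt\mid H_{t-1}] \ge \tfrac{\OPT}{T} + Z\,\thetaV_t\cdot\Ex[\vt - \tfrac{B}{T}\ones \mid H_{t-1}]$.

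Finally, summing this per-round bound from $t=1$ to $t=\tau$ yields the claim. Because $\tau$ is a stopping time adapted to $(H_{t-1})$ and each term of the sum is an almost-sure inequality on the event of Corollary \ref{cor:optest}, no additional martingale machinery is required; all probabilistic work is already contained in Corollary \ref{cor:optest}'s uniform-in-$t$ optimism guarantee. The only substantive step, and the place where the design of the algorithm really matters, is the two-sided use of optimism: an upper confidence bound on reward and a lower confidence bound on the \emph{$\thetaV_t$-weighted} consumption, combined with the greedy rule, is exactly what turns the algorithm's choice of $a_t$ into a statement comparable with $\pi^*$. Everything else is rearrangement.
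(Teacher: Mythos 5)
Your proof is correct and takes essentially the same route as the paper's: compare the algorithm's greedy choice of $a_t$ against the static optimum $\pi^*$ via the optimistic estimates of Corollary \ref{cor:optest}, take conditional expectations given $H_{t-1}$ using the feasibility of $\pi^*$ (so $T\cvA(\pi^*)\le B\ones$ and $T\rv(\pi^*)=\OPT$), and sum over $t=1,\dots,\tau$. Your per-arm averaging over $\pi^*(\cx_t)$, the explicit use of $\thetaV_t\ge 0$, and the measurability remarks are just a more explicit rendering of the paper's argument.
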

\begin{proof}
	Let $\ropt_t :=\MeanVector_t(a_t)^\top\cx_t \pi^*(\cx_t) $ and 
	$\vopt_t :=\MeanUCB_t(a_t)^\top\cx_t \pi^*(\cx_t) .$ 
	By Corollary \ref{cor:optest}, 	with probability $1-\delta$, we have that 
	$T \Ex_{X_t}[\ropt_t|H_{t-1}] \geq \OPT,\text{ and } \Ex_{X_t}[\vopt_t|H_{t-1}] \le \frac {B} T {\bf 1}.$
	%Here $\thetaV_t \ge {\bf 0}$.
	By the choice made by the algorithm, 
	\begin{eqnarray*}
		\rt - Z (\thetaV_t \cdot \vt) & \ge & \ropt_t - Z (\thetaV_t \cdot \vopt_t)\\
		\Ex_{X_t}[\rt - Z (\thetaV_t \cdot \vt) | H_{t-1}] 
		& \ge &  \Ex_{X_t}[\rt | H_{t-1}] - Z (\thetaV_t \cdot \Ex[\vt | {H}_{t-1}] )\\
		& \ge & \frac 1 T \OPT  - Z \thetaV_t\cdot \frac{B{\bf 1}}{T} 
	\end{eqnarray*}
	Summing above inequality for $t=1$ to $\tau$ gives the lemma statement. 
\end{proof}
%-------------

%----------------------------------------------------------------------
%-------------------------------------------------------------------------------BEGIN COMMENT-------------
\comment{\nikhil{The below stuff is written for a multiplicative $1-\epsilon$ bound. Should be changed to an additive bound.}
\begin{lemma}\label{lem:OP2}
	$$\sum_{t=1}^{\tau} \thetaV_t\cdot (\vt-\frac{B}{T}{\bf 1}) \ge(1-\epsilon)(B-\frac{\tau B}{T}) - \frac{ \log(d+1)}{\epsilon }.$$ 
\end{lemma}
\begin{proof}
	
	Let $g_t(\thetaV_t) = \thetaV_t \cdot \left(\vt -\frac{B}{T}{\bf 1}\right)$, therefore the 
	LHS in the required inequality is $\sum_{t=1}^{\tau}g_t(\thetaV_t)$.  Let 
	$\thetaV^* := \arg \max_{||\thetaV||_1 \le 1, \thetaV\ge 0}\sum_{t=1}^{\tau} g_t(\thetaV) $. 
	%Using \OCO~guarantees,
	%$$\sum_{t=1}^{\tau} g_t(\thetaV_t) + \regOCO(\tau) \ge \sum_{t=1}^{\tau} g_t(\thetaV^*). $$ 
	We use the regret bounds for the multiplicative weight update algorithm given in Lemma \ref{lem:regMW},  with $\epsilon = \sqrt{\frac {\log(d+1)} {B} }$, to get that 
	%$$ {\regOCO(T)}   \le \epsilon\sum_{t=1}^{\tau}  g_t(\thetaV^*)   +  \frac{ \log(d+1)}{\epsilon }.$$
	$\sum_{t=1}^{\tau} g_t(\thetaV_t) \ge (1-\epsilon) \sum_{t=1}^{\tau}  g_t(\thetaV^*)-  \tfrac{ \log(d+1)}{\epsilon }.$
	%$$= \sqrt \frac{\log(d)}{B}d(\frac{1}{T}\sum_t \vt, S)+ \frac {O(\sqrt{B\log(d)})} T.$$ 
	
	Now either $\sum_{t=1}^\tau (\vt \cdot {\bf e}_j)\geq B$ for some $j$ at the stopping  time $\tau$, so that $\sum_{t=1}^{\tau} g_t(\thetaV^*) \ge \sum_{t=1}^{\tau} g_t({\bf e}_j) \geq B-\frac{\tau B}{T}$. Or, $\tau = T, \sum_{t=1}^\tau (\vt)_j <B$ for all $j$, in which case, the maximizer  $\thetaV^*={\bf 0}$. In either case we have that  
	$ \sum_{t=1}^{\tau}  g_t(\thetaV^*)\ge B-\tfrac{\tau B}{T},$ 
	which completes the proof of the lemma. 
\end{proof}

%--------------------------------------------------------------------------------
Now, we are ready to prove Theorem \ref{th:packing}, which states that Algorithm \ref{algo:packing} achieves a regret of .... 
%\begin{theorem}
%	For the online packing problem,  Algorithm \ref{algo:packing} achieves a CR of $1-O(\epsilon)$ where $\epsilon = \sqrt{\tfrac{\log(d)}{B}}$. 
%\end{theorem}
\noindent {\it \bf Proof of Theorem \ref{th:packing}.} 
Substituting the inequality from Lemma \ref{lem:OP2} in Lemma \ref{lem:OP1}, we get 
\begin{eqnarray*}
	\sum_{t=1}^{\tau} \Ex[\rt| {H}_{t-1}]	%& \ge & \frac{\tau}{T} \OPT_{sum} + Z \sum_{t=1}^{\tau} \Ex[\thetaV_t \cdot (\vt - \frac{B}{T}{\bf 1})| {H}_{t-1}] 	- \sum_{t=1}^\tau {\cal Q}(t)\\
	%& \ge & Z B + \frac{\tau}{T} \OPT_{sum} -Z \frac{\tau B}{T} - \sum_{t=1}^\tau {\cal Q}(t) - {\cal R}(\tau)\\
	& \ge & \frac \tau T \OPT+ (1-\epsilon)ZB \left(1-\frac{\tau}{T}\right) - Z \frac{ \log(d+1)}{\epsilon } 
\end{eqnarray*}
Now, using $ Z \leq  O(1) \tfrac{\OPT}{B} $ and $\epsilon^2 = \frac {\log(d+1)}  {B } $, we get
\[      Z \frac{ \log(d+1)}{\epsilon } \leq O(1)\frac{	\OPT}{B}  \frac{ \log(d+1)}{\epsilon } = O(\epsilon ) \OPT    . \]
Also, $Z\ge \frac{\OPT}{B}$. Substituting in above,
\begin{eqnarray*}
	\sum_{t=1}^{\tau} \Ex[\rt| {H}_{t-1}]	
	& \ge & (1-\epsilon)\frac \tau T \OPT + (1-\epsilon) \OPT( 1 -  \frac \tau T)  -O(\epsilon )\OPT \\
	& \ge & (1-O(\epsilon))\OPT  
\end{eqnarray*}

The proof is now completed by using Azuma-Hoeffding to bound  $\rsum$. 
\qed
\nikhil{This is the additive version.}
}
%-------------------------------------------------------------------------------END COMMENT-------------
%=========================================================================

\begin{lemma}\label{lem:OP2}
	$$\sum_{t=1}^{\tau} \thetaV_t\cdot (\vt-\frac{B}{T}{\bf 1}) \ge B-\frac{\tau B}{T} - \reglinCBwK(T).$$ 
\end{lemma}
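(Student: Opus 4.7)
\noindent\textbf{Proof plan for Lemma \ref{lem:OP2}.}
The plan is to first apply the OMD regret guarantee to the sequence of actually observed reward functions $g_t(\thetaV)=\thetaV\cdot(\cv_t(a_t)-\tfrac{B}{T}\ones)$, and then to transfer the resulting inequality from $\cv_t(a_t)$ to $\vt=\MeanUCB_t(a_t)^\top\cxL_t(a_t)$. By Lemma \ref{lem:regOCO}, since $g_t$ is linear with values in $[-1,1]$ on the domain $\Omega=\{\thetaV\ge 0:\|\thetaV\|_1\le 1\}$,
\[
\textstyle \sum_{t=1}^{\tau}\thetaV_t\cdot\bigl(\cv_t(a_t)-\tfrac{B}{T}\ones\bigr)\;\ge\;\max_{\thetaV\in\Omega}\sum_{t=1}^{\tau}\thetaV\cdot\bigl(\cv_t(a_t)-\tfrac{B}{T}\ones\bigr)-\regOCO(T).
\]
To lower bound the maximum, I split on the stopping time. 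If $\tau=T$, then $B-\tfrac{\tau B}{T}=0$ and choosing $\thetaV=\mathbf{0}\in\Omega$ trivially gives $0$. Otherwise the algorithm stopped because $\sum_{t=1}^{\tau}\cv_t(a_t)\cdot\mathbf{e}_j\ge B$ for some coordinate $j$; choosing $\thetaV=\mathbf{e}_j\in\Omega$ then yields a value at least $B-\tfrac{\tau B}{T}$. In both cases we obtain $\sum_{t=1}^{\tau}\thetaV_t\cdot(\cv_t(a_t)-\tfrac{B}{T}\ones)\ge B-\tfrac{\tau B}{T}-\regOCO(T)$.

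Next I need to pass from $\cv_t(a_t)$ to $\vt=\MeanUCB_t(a_t)^\top\cxL_t(a_t)$, which I do by inserting the conditional mean $\Meanstar^\top\cxL_t(a_t)$ as a bridge. For the stochastic part, $Y_t:=\thetaV_t\cdot(\cv_t(a_t)-\Meanstar^\top\cxL_t(a_t))$ is a martingale difference sequence with $|Y_t|\le\|\thetaV_t\|_1\le 1$ (since $\thetaV_t$ is $H_{t-1}$-measurable and each coordinate of $\cv_t(a_t)-\Meanstar^\top\cxL_t(a_t)$ lies in $[-1,1]$), so Azuma--Hoeffding gives $|\sum_{t=1}^{\tau}Y_t|=O(\sqrt{T\ln(1/\delta)})\le\reglinCBwK(T)$ w.h.p. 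For the optimistic-estimation part, the natural bound $\|\sum_t(\Meanstar-\MeanUCB_t(a_t))^\top\cxL_t(a_t)\|_\infty\le\reglinCBwK(T)$ from Corollary \ref{cor:optest} item 4 does not directly help because $\thetaV_t$ varies; instead I expand coordinate-wise and apply the matrix Cauchy--Schwartz of Lemma \ref{lem:MatrixCS}: for each column $j$, both $\meanstarj$ and $(\MeanUCB_t(a_t))_j$ lie in the confidence ellipsoid $C_{t,j}$, so
\[
|((\Meanstar-\MeanUCB_t(a_t))_j)^\top\cxL_t(a_t)|\le\|\meanstarj-(\MeanUCB_t(a_t))_j\|_{M_t}\,\|\cxL_t(a_t)\|_{M_t^{-1}}\le 2\bigl(\sqrt{m\ln(\tfrac{d+tmd}{\delta})}+\sqrt m\bigr)\|\cxL_t(a_t)\|_{M_t^{-1}}.
\]
Multiplying by $\thetaV_t(j)\ge 0$ and summing over $j$ pulls out $\|\thetaV_t\|_1\le 1$, so $|\thetaV_t\cdot(\Meanstar-\MeanUCB_t(a_t))^\top\cxL_t(a_t)|\le 2(\sqrt{m\ln(\tfrac{d+tmd}{\delta})}+\sqrt m)\|\cxL_t(a_t)\|_{M_t^{-1}}$. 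Summing over $t$ and invoking $\sum_t\|\cxL_t(a_t)\|_{M_t^{-1}}\le\sqrt{mT\ln T}$ from Lemma \ref{lem:linContextual2} yields $\sum_{t=1}^{\tau}\thetaV_t\cdot(\Meanstar-\MeanUCB_t(a_t))^\top\cxL_t(a_t)=O(\reglinCBwK(T))$.

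Finally, assembling the two transfer inequalities gives $\sum_{t=1}^{\tau}\thetaV_t\cdot(\cv_t(a_t)-\vt)=O(\reglinCBwK(T))$, and combining with the OMD bound above (and noting $\regOCO(T)=O(\sqrt{T\log d})$ is dominated by $\reglinCBwK(T)$) yields
\[
\textstyle \sum_{t=1}^{\tau}\thetaV_t\cdot\bigl(\vt-\tfrac{B}{T}\ones\bigr)\ge B-\tfrac{\tau B}{T}-\reglinCBwK(T),
\]
as required. The main obstacle I expect is the step bounding $\sum_t\thetaV_t\cdot(\Meanstar-\MeanUCB_t)^\top\cxL_t(a_t)$: the naive route through Corollary \ref{cor:optest} item 4 loses because $\thetaV_t$ is time-varying, so the per-step ellipsoidal Cauchy--Schwartz argument (which cleanly factors out $\|\thetaV_t\|_1$) is essential to avoid an unwanted factor of $d$.
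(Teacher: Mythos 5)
Your proof is correct, but it takes a genuinely different route from the paper's. The paper applies the OCO regret guarantee with the payoff functions defined through the optimistic estimates themselves, i.e. $g_t(\thetaV)=\thetaV\cdot(\vt-\tfrac BT\ones)$, so the desired statement about $\sum_t\thetaV_t\cdot(\vt-\tfrac BT\ones)$ follows immediately from the regret definition; the realized consumptions enter only in Case 1 ($\tau<T$), where the stopping condition $\sum_{t\le\tau}\cv_t(a_t)\cdot\mathbf{e}_j\ge B$ is transferred to $\sum_{t\le\tau}\vt\cdot\mathbf{e}_j\ge B-\reglinCBwK(T)$ along the single \emph{fixed} direction $\mathbf{e}_j$ via Steps 1 and 2 (Azuma plus Corollary \ref{cor:optest}), so the time-varying multiplier never causes trouble. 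You instead apply the OCO guarantee to the realized payoffs $\thetaV\cdot(\cv_t(a_t)-\tfrac BT\ones)$ --- which is what Algorithm \ref{algo:coreAlgo} literally feeds the OMD subroutine, so your reading in fact resolves a small mismatch between the pseudocode and the paper's ``recall that $g_t(\thetaV_t)=\thetaV_t\cdot(\vt-\tfrac BT\ones)$'' --- and then transfer the entire $\thetaV_t$-weighted sum from $\cv_t(a_t)$ to $\vt$. That forces the extra per-step work you describe: Azuma for the martingale part, and the column-wise ellipsoid Cauchy--Schwartz (Lemma \ref{lem:MatrixCS} with both $\meanstarj$ and $[\MeanUCB_t(a_t)]_j$ in $C_{t,j}$, then Lemma \ref{lem:linContextual2}) with $\|\thetaV_t\|_1\le1$ factored out; this is the right fix, since, as you note, item 4 of Corollary \ref{cor:optest} cannot be dotted against a time-varying $\thetaV_t$, and your argument is essentially an $\ell_1$-weighted re-derivation of Corollary \ref{lem:linContextual3}. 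The cost is an extra additive $O(\reglinCBwK(T))$ beyond $\regOCO(T)$, which is absorbed into the big-O defining $\reglinCBwK(T)$ exactly as the paper already does, so the stated bound is unaffected. In short, the paper's convention of running the OCO on $\vt$ makes this lemma a three-line consequence of the regret definition, while your version keeps the OCO consistent with the observed data in the algorithm box and concentrates all estimation error in one explicit transfer step.
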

\begin{proof}
	
	Recall that $g_t(\thetaV_t) = \thetaV_t \cdot \left(\vt -\frac{B}{T}{\bf 1}\right)$, therefore the 
	LHS in the required inequality is $\sum_{t=1}^{\tau}g_t(\thetaV_t)$.  Let 
	$\thetaV^* := \arg \max_{||\thetaV||_1 \le 1, \thetaV\ge 0}\sum_{t=1}^{\tau} g_t(\thetaV) $. 
	%Using \OCO~guarantees,
	%$$\sum_{t=1}^{\tau} g_t(\thetaV_t) + \regOCO(\tau) \ge \sum_{t=1}^{\tau} g_t(\thetaV^*). $$ 
	We use the regret definition for the \OCO algorithm   to get that 
	%$$ {\regOCO(T)}   \le \epsilon\sum_{t=1}^{\tau}  g_t(\thetaV^*)   +  \frac{ \log(d+1)}{\epsilon }.$$
	$\sum_{t=1}^{\tau} g_t(\thetaV_t) \ge  \sum_{t=1}^{\tau}  g_t(\thetaV^*)-  \regOCO(T).$
	%$$= \sqrt \frac{\log(d)}{B}d(\frac{1}{T}\sum_t \vt, S)+ \frac {O(\sqrt{B\log(d)})} T.$$ 
	Note that fromt the regret bound given in Lemma \ref{lem:regOCO}, $\regOCO(T) \leq \reglinCBwK(T)$. 
\paragraph{Case 1: $\tau < T$.}	This means that $\sum_{t=1}^\tau (\cv_t (a_t)\cdot {\bf e}_j)\geq B$ for some $j$.
 Then from (\ref{eq:vtwstar}) and (\ref{eq:whatwstar}), it must be that $\sum_{t=1}^\tau (\vt \cdot {\bf e}_j)\geq B -\reglinCBwK(T)$  so that $\sum_{t=1}^{\tau} g_t(\thetaV^*) \ge \sum_{t=1}^{\tau} g_t({\bf e}_j) \geq B-\frac{\tau B}{T} -\reglinCBwK(T)$. 

\paragraph{Case 2: $\tau = T$.} In this case,  $B - \frac{\tau}{T}B = 0$
$=  \sum_{t=1}^{\tau}  g_t({\mathbf 0})$
	$ \leq \sum_{t=1}^{\tau}  g_t(\thetaV^*),$ 
	which completes the proof of the lemma. 
\end{proof}

%--------------------------------------------------------------------------------
Now, we are ready to prove Theorem \ref{th:coreAlgo}, which states that Algorithm \ref{algo:coreAlgo} achieves a regret of $Z\reglinCBwK(T)$.
%\begin{theorem}
%	For the online packing problem,  Algorithm \ref{algo:packing} achieves a CR of $1-O(\epsilon)$ where $\epsilon = \sqrt{\tfrac{\log(d)}{B}}$. 
%\end{theorem}
\noindent {\it \bf Proof of Theorem \ref{th:coreAlgo}.} 
Substituting the inequality from Lemma \ref{lem:OP2} in Lemma \ref{lem:OP1}, we get 
\begin{eqnarray*}
	\sum_{t=1}^{\tau} \Ex[\rt| {H}_{t-1}]	%& \ge & \frac{\tau}{T} \OPT_{sum} + Z \sum_{t=1}^{\tau} \Ex[\thetaV_t \cdot (\vt - \frac{B}{T}{\bf 1})| {H}_{t-1}] 	- \sum_{t=1}^\tau {\cal Q}(t)\\
	%& \ge & Z B + \frac{\tau}{T} \OPT_{sum} -Z \frac{\tau B}{T} - \sum_{t=1}^\tau {\cal Q}(t) - {\cal R}(\tau)\\
	& \ge & \frac \tau T \OPT+ ZB \left(1-\frac{\tau}{T}\right) - Z \reglinCBwK(T)
\end{eqnarray*}

Also, $Z\ge \frac{\OPT}{B}$. Substituting in above,
\begin{eqnarray*}
	\Ex[\rsum] = \sum_{t=1}^{\tau} \Ex[\rt| {H}_{t-1}]	
	& \ge & \frac \tau T \OPT +  \OPT( 1 -  \frac \tau T)  -Z \regOCO(T) \\
	& \ge & \OPT -Z \reglinCBwK(T)  
\end{eqnarray*}

From Steps 1 and 2, this implies a lower bound on $\Ex[\sum_{t=1}^\tau r_t(a_t)]$. 
The proof is now completed by using Azuma-Hoeffding to bound the actual total reward with high probability. 
\qed

%=========================================================================
\section{Appendix for Section \ref{sec:packing}}
\comment{
\begin{lemma}
\label{lem:estimatingOPT}
For $\gamma=\gammaValue$, with probability $1-O(\delta)$,
 \begin{center} $ \OPT - L\gamma \le \OPTest^{2\gamma} \le \OPT+6\gamma(Z^*+L).$ \end{center}
\end{lemma}
For \linCbudget, $Z^*\le \frac{\OPT}{(B/T)}, L=1, \oneNorm=\oneNorm_{\infty}=1$. Therefore, $Z=\frac{\OPTest^{2\gamma} + L\gamma}{(B/T)} +1$ satisfies the bounds of Lemma \ref{lem:packingZest}. 
}
\begin{proof}[\bf Proof of Lemma \ref{lem:estimatingOPT}]

Let us define an ``intermediate sample optimal" as:
\begin{eqnarray}
\label{eq:OPTinterim}
\OPTinterim^\gamma & := & \begin{array}{rcl}
 \max_{q} & \frac{T}{T_0}\sum_{i=1}^{T_0} \MeanVector_*^\top \cx_i \pi(\cx_i)])& \\
\text{such that} &  \frac{T}{T_0}\sum_{i=1}^{T_0}\Mean_*^\top \cx_i \pi(\cx_i) \le B + \gamma& 
\end{array}
\end{eqnarray}
Above sample optimal knows the parameters $\MeanVector_*, \Mean_*$, the error comes only from approximating the expected value over context distribution by average over the observed contexts.  We do not actually compute $\OPTinterim^\gamma$, but will use it for the convenience of proof exposition. The proof involves two steps. 

%First, we prove that $\|\sum_{i=1}^{T_0} (\hat{\Mean}_0 - \Mean^*)\cx_i q(\cx_i)\|$ is small for all $q$. Then, we  bound $\|\frac{1}{T_0} \sum_{i=1}^{T_0} \Mean^* \cx_i q(\cx_i) - \Ex[\Mean^* \cx q(\cx)] \|$ for all $q$.
%\paragraph{Step 1: Bound $\|\sum_{i=1}^{T_0} (\MeanEst_0 - \Mean_0')\cx_i q(\cx_i)\|$.} 

\begin{enumerate}
\item[Step 1:] Bound $|\OPTinterim^\gamma-\OPT|$.
\item[Step 2:] Bound $|\OPTest^{2\gamma}- \OPTinterim^\gamma|$ 
%by bounding $|\sum_{i=1}^{T_0}(\hat{\MeanVector}_i- \MeanVector_*)^\top \cx_i \pi(\cx_i)|, \|\sum_{i=1}^{T_0} (\hat{\Mean}_i - \Mean_*)^\top\cx_i \pi(\cx_i)\|$ for all $\pi$.
\end{enumerate}

{\bf Step 1} bound can be borrowed from the work on Online Stochastic Convex Programming in \cite{AD15}: since $\MeanVector_*, \Mean^*$ is known, so there is effectively full information before making the decision, i.e., consider the vectors $[\MeanVector_*^\top \cxL_t(a), \Mean_*^\top\cxL_t(a)]$ as outcome vectors  which can be observed for all arms $a$ {\it before} choosing the distribution over arms to be played at time $t$, therefore, the setting in \cite{AD15} applies. In fact, $\hat{\OPT}^\gamma$ as defined by Equation (F.10) in \cite{AD15} when $A_t = \{[\MeanVector_*^\top \cxL_t(a), \Mean_*^\top\cxL_t(a)], a\in [K]\}$, $f$ identity, and $S=\{\cv_{-1} \le \frac{B}{T}\}$, is same as $\frac{1}{T}$ times  $\OPTinterim^\gamma$ defined here. And using Lemma F.4 and Lemma F.6 in \cite{AD15} (using $L=1, Z^*=\OPT/B$), we obtain that for any $\gamma \ge \left(\frac{T}{T_0}\right) \gammaValue$, with probability $1-O(\delta)$,
\begin{equation}
\label{eq:intermediateClaim}
 \OPT -\gamma \le \OPTinterim^\gamma \le \OPT + 2\gamma (\frac{\OPT}{B}+1).
\end{equation}

For {\bf Step 2}, we show that with probability $1-\delta$, for {\em all} $\pi$, $\gamma \ge \left(\frac{T}{T_0}\right) \gammaValue$
\begin{equation}
\label{eq:neededClaim2.0}
|\sum_{i=1}^{T_0}(\hat{\MeanVector}_i- \MeanVector_*)^\top \cx_i \pi(\cx_i)| \le \gamma
\end{equation}
\begin{equation}
\label{eq:neededClaim2}
\|\frac{T}{T_0}\sum_{i=1}^{T_0} (\MeanEst_i - \Mean_*)^\top\cx_i \pi(\cx_i)\|_\infty \le \gamma
\end{equation}
This is sufficient to prove both lower and upper bound on $\OPTest^{2\gamma}$ for $\gamma \ge \left(\frac{T}{T_0}\right) \gammaValue$. For lower bound, we can simply use \eqref{eq:neededClaim2} for optimal policy for $\OPTinterim^\gamma$, denoted by $\bar{\pi}$. This implies that (because of relaxation of distance constraint by $\gamma$) $\bar{\pi}$ is a feasible primal solution for $\OPTest^{2\gamma}$, and therefore using \eqref{eq:intermediateClaim} and \eqref{eq:neededClaim2.0},
$$\OPTest^{2\gamma} +\gamma \ge \OPTinterim^\gamma \ge \OPT-\gamma.$$
For the upper bound, we can use \eqref{eq:neededClaim2} for the optimal policy $\hat{\pi}$ for $\OPTest^{2\gamma}$. Then, using \eqref{eq:intermediateClaim} and \eqref{eq:neededClaim2.0},
$$\OPTest^{2\gamma} \le \OPTinterim^{3\gamma} + \gamma \le \OPT + 6\gamma(\frac{\OPT}{B}+1) + \gamma.$$

Combining, this proves the desired lemma statement:
\begin{equation}
\OPT - 2\gamma \le \OPTest^{2\gamma} \le \OPT+7\gamma(\frac{\OPT}{B}+1)
\end{equation}
%, using the primal and dual formulations of convex program in \eqref{eq:OPTestProgram}, respectively. 
%the claim follows from Lipschitz property of $f$, on applying above claim to the optimal $q$ for $\OPT$ and $\OPT^\gamma$.

What remains is to proof the claim in \eqref{eq:neededClaim2.0} and \eqref{eq:neededClaim2}. We show the proof for \eqref{eq:neededClaim2}, the proof for \eqref{eq:neededClaim2.0} is similar. Observe that for any $\pi$,
\begin{eqnarray*}
 \|\sum_{t=1}^{T_0} (\MeanEst_t - \Mean_*)^\top \cx_t \pi(\cx_t)\|_\infty & \le & \sum_{t=1}^{T_0} \|({\MeanEst}_t-\Mean_*)^\top \cx_t \pi(\cx_t)\|_\infty \\
& \le & \sum_{t=1}^{T_0} \|\MeanEst_t-\Mean_*\|_{M_t} \|\cx_t \pi(\cx_t)\|_{M_t^{-1}} 
\end{eqnarray*}
where $\|\MeanEst_t-\Mean_*\|_{M_t} = \max_j \|\meanhattj-\meanstarj\|_{M_t}$.

Now, applying Lemma \ref{lem:linContextual1} to every column $\meanhattj$ of $\MeanEst_t$, we have that with probability $1-\delta$ for all $t$, 
$$ \|\MeanEst_t-\Mean_*\|_{M_t} \le 2\sqrt{m\log(td/\delta)} \le 2\sqrt{m\log(T_0d/\delta)}$$
And, by choice of $p_t$
$$\|\cx_t \pi(\cx_t)\|_{M_t^{-1}} \le \|\cx_t p_t\|_{M_t^{-1}}.$$
Also,  by Lemma \ref{lem:linContextual2},
$$\sum_{t=1}^{T_0} \|\cx_t p_t\|_{M_t^{-1}} \le \sqrt{mT_0 \ln(T_0)}$$
Therefore, substituting,
\begin{eqnarray*}
 \|\sum_{t=1}^{T_0} (\MeanEst_t -\Mean_*)^\top \cx_t \pi(\cx_t)\|_\infty & \le &  (2\sqrt{m\log(T_0d/\delta)})\sum_{t=1}^{T_0} \|\cx_t p_t\|_{M_t^{-1}} \\
& \le & (2\sqrt{m\log(T_0d/\delta)}) \sqrt{mT_0\ln(T_0)} \\
& \le & \frac{T_0}{T}\gamma
\end{eqnarray*}

%Then, to complete the proof of Claim in Equation \eqref{eq:neededClaim2}, we show that $\|\sum_{i=1}^{T_0} \Mean^* \Ex[\cx q(\cx)] - \Mean^*\cx_i q(\cx_i)\| \le \gamma$. 

%This completes the proof of Claim in Equation \eqref{eq:neededClaim2}.

\end{proof}

\end{document}